\tikzset{
  level/.style   = { ultra thick, blue },
  connect/.style = { dashed, red },
  notice/.style  = { draw, rectangle callout, callout relative pointer={#1} },
  label/.style   = { text width=2cm },
}
\newcommand{\citex}[1]{\citeauthor{#1}~(\citeyear{#1})}
\newcommand{\assgn}[3]{\ensuremath{#1:#2\rightarrow#3}}
\newcommand{\Bassgn}[2]{\ensuremath{#1:#2\rightarrow\{0,1\}}}
\newcounter{question}
\newcounter{nmcomment}
\newcommand{\hy}{\hbox{-}\nobreak\hskip0pt}
\newcommand{\SB}{\{\,}%
\newcommand{\SM}{\;{:}\;}%
\newcommand{\SE}{\,\}}%
\newcommand{\Card}[1]{|#1|}
\newcommand{\mP}{\text{\normalfont P}}
\newcommand{\NP}{\text{\normalfont NP}}
\newcommand{\W}[1][xxxx]{\text{\normalfont W[#1]}}
\newtheorem{LEM}{Lemma} 
\newtheorem{THE}{Theorem} 
\newtheorem{PRO}{Proposition} 
\newtheorem{COR}{Corollary}
\newtheorem{DEF}{Definition}
\newcommand{\CCC}{\mathcal{C}}
\newcommand{\HHH}{\mathcal{H}}
\newcommand{\AAA}{\mathcal{A}}
\newcommand{\cspi}{I}
\newcommand{\prop}[1]{\mathbb{#1}}  % Tuple
\newcommand{\arity}{\delta}
\newcommand{\tuple}[1]{\langle{#1}\rangle}  % Tuple
\newcommand{\concat}{\circ}
\newcommand{\Fun}{\textup{Pol}}
\newcommand{\CON}{\textup{VAL}}
\newcommand{\MAL}{\textup{MAL}}
\newcommand{\MINMAX}{\textup{MINMAX}}
\newcommand{\MAJ}{\textup{MAJ}}
\newcommand{\AFF}{\textup{MIN}}
\newcommand{\KROM}{\textup{2CNF}}
\newcommand{\HORN}{\textup{HORN}}
\newcommand{\AHORN}{\textup{HORN}^-}
\newcommand{\ZVAL}{\textup{0-VAL}}
\newcommand{\OVAL}{\textup{1-VAL}}
\newcommand{\SCHAE}{\textup{Schaefer}}
\newcommand{\STBDET}[1]{\textsc{Strong }$#1$\hy\textsc{Backdoor
    Detection}}
\newcommand{\WEBDET}[1]{\textsc{Weak }$#1$\hy\textsc{Backdoor
    Detection}}
\newcommand{\SATBR}[1]{$#1$\hy\textsc{Strong Branch}}
\newcommand{\defproblem}[3]{
  \vspace{2pt}
\noindent\fbox{
  \begin{minipage}{0.96\columnwidth}
  #1 \\ 
  {\bf{Input:}} #2  \\
  {\bf{Question:}} #3
  \end{minipage}
  }
  \vspace{2pt}
}
\newcommand{\defparproblem}[4]{
  \vspace{2pt}
\noindent\fbox{
  \begin{minipage}{0.96\columnwidth}
  \begin{tabular*}{\textwidth}{@{\extracolsep{\fill}}lr} #1  & {\bf{Parameter:}} #3 \\ \end{tabular*}
  {\bf{Input:}} #2  \\
  {\bf{Question:}} #4
  \end{minipage}
  }
  \vspace{2pt}
}
\newcommand{\shortversion}[1]{}
\begin{document}

\title{Backdoors into Heterogeneous Classes of SAT and CSP}

\author{Serge Gaspers$^1$, Neeldhara Misra$^2$, Sebastian Ordyniak$^3$, \\
  Stefan Szeider$^3$, and Stanislav \v{Z}ivn\'{y}$^4$\\[0.1cm]
\mbox{}\small$^1$UNSW and NICTA, Sydney, Australia (\small sergeg@cse.unsw.edu.au)\\
\mbox{}\small$^2$Indian Institute of Science, Bangalore, India (\small mail@neeldhara.com)\\
\mbox{}\small$^3$TU Wien, Vienna, Austria (\small sordyniak@gmail.com, stefan@szeider.net)\\
\mbox{}\small$^4$University of Oxford, Oxford, UK (\small standa.zivny@cs.ox.ac.uk)
}

\date{}

\maketitle

\begin{abstract}
    In this paper we extend the classical notion of strong and weak backdoor
    sets for SAT and CSP by allowing that different instantiations of the backdoor
    variables result in instances that belong to different base classes;
    the union of the base classes forms a heterogeneous base class.
    Backdoor sets to heterogeneous base classes can be much smaller than
    backdoor sets to homogeneous ones, hence they are much more
    desirable but possibly harder to find.
    We draw a detailed complexity landscape for the problem of detecting
    strong and weak backdoor sets into heterogeneous base classes for SAT and
    CSP.
  \end{abstract}

\section{Introduction}

We consider the propositional satisfiability problem (SAT) and the
constraint satisfaction problem (CSP).
Backdoors are small sets of variables of a SAT or CSP instance that
represent ``clever reasoning shortcuts'' through the search space.
Backdoor sets were originally introduced by Williams, Gomes, and
Selman~(\cite{WilliamsGomesSelman03,WilliamsGomesSelman03a}) as a tool for
the analysis of decision heuristics in propositional
satisfiability. Since then, backdoor sets have been widely used in the
areas of propositional
satisfiability~\cite{WilliamsGomesSelman03,RuanKautzHorvitz04,DilkinaGomesSabharwal07,SamerSzeider08c,KottlerKaufmannSinz08,DilkinaGomesSabharwal09,GaspersSzeider13},
and also for 
material discovery \cite{LeBrasEtal13},
abductive reasoning~\cite{PfandlerRummeleSzeider13},
argumentation~\cite{DvorakOrdyniakSzeider12}, 
quantified Boolean
formulas~\cite{SamerSzeider09a}, and planning~\cite{KroneggerOrdyniakPfandler14,KroneggerOrdyniakPfandler15}.  
A backdoor set is defined with
respect to some fixed \emph{base class} for which the computational
problem under consideration is polynomial-time tractable
(alternatively, it can be defined with respect to a polynomial-time
subsolver). The size of the backdoor set can be seen as a distance
measure that indicates how far the instance is from the target class.
One distinguishes between strong and weak backdoor sets; the latter
applies only to satisfiable instances.
Once a strong backdoor set of size $k$ is
identified, one can decide the satisfiability of the instance by
deciding the satisfiability of at most $d^k$ ``easy'' instances that
belong to the tractable base class, where $d$ denotes the size of the
domain for the variables; for SAT we have $d=2$. Each of the easy
instances is obtained by one of the $d^k$ possible instantiations of
the $k$ variables in the backdoor set. Hence, the satisfiability check
is \emph{fixed-parameter tractable}~\cite{DowneyFellows13} for the combined parameter
backdoor size and domain size ($k+d$). 
A similar approach works for
weak backdoor sets, where the computation of a certificate for
satisfiability (i.e., a satisfying assignment) is fixed-parameter
tractable for the combined parameter ($k+d$).

The fixed-parameter tractability of using the backdoor set for deciding
satisfiability or the computation of a certificate for satisfiability 
triggers the question of whether \emph{finding} a
backdoor set of size at most $k$ is also fixed-parameter tractable. 
In particular, for every base class $\CCC$ one can ask whether the
detection of a strong or weak backdoor set into $\CCC$ of size at
most $k$ is fixed-parameter tractable for parameter~$k$ (possibly in
combination with restrictions on the input or other parameters).  A
systematic study of the parameterized complexity of backdoor set
detection was initiated by~\citet{NishimuraRagdeSzeider04-informal}
for SAT, who showed that the detection of strong backdoor sets into
the classes HORN and 2CNF 
is fixed-parameter tractable,
but the detection of weak
backdoor sets into any of these two classes is $\W[2]$\hy hard (and
hence unlikely to be fixed-parameter tractable).  Since then, the
parameterized complexity of backdoor set detection has become an
active research topic as outlined in a survey
\cite{GaspersSzeider12}.

In this work, we provide two significant extensions to the exciting
research on fixed-parameter tractable backdoor set detection. First,
we introduce \emph{heterogeneous} base classes, which can be composed of 
several homogeneous base classes. We show that heterogeneous base
classes are particularly well-suited for strong backdoor sets, since they
allow that different instantiations of the backdoor variables
result in instances that belong to different base classes, which leads
to arbitrary reductions in the size of a strong backdoor set even
compared to the smallest strong backdoor set for each individual
homogeneous base class. This is in contrast to weak backdoor sets,
where the size of a weak backdoor set into a heterogeneous base class is equal to the
size of the smallest backdoor set into any of the individual base
classes. Here we show that also the complexity of weak backdoor set
detection into heterogeneous base classes is tight to its complexity on
the individual base classes.
Second,  we extend the scope of backdoor set detection from SAT to CSP,
considering target classes that are defined by tractable constraint
languages in terms of certain \emph{closure properties under polymorphisms}.

\paragraph{Heterogeneous Base Classes}
Consider the following SAT instance $F_n=\{C,D_1,\dots,D_n\}$ where
$C= (x \lor \neg a_1 \lor \dots \lor \neg a_{n})$ and $D_i= (\neg x
\lor b_i \lor c_i)$. It is easy to see that any strong backdoor set
into HORN needs to contain at least one of the variables $b_i$ or
$c_i$ from each clause $D_i$, hence such a backdoor set must be of
size $\Omega(n)$; on the other hand, any strong backdoor set into 2CNF
must contain at least $(n-2)$ variables from the clause $C$; hence
such a backdoor must be of size at $\Omega(n)$ as well.  However,
$F_n[x=\text{false}]$ is Horn, and $F_n[x=\text{true}]$ is a 2CNF,
hence the singleton $\{x\}$ constitutes a strong backdoor set into the
\emph{``heterogeneous''} base class $\text{HORN} \cup
\text{2CNF}$. This example shows that by considering heterogeneous
base classes we can access structural properties of instances that are
not accessible by backdoor sets into homogeneous base classes. Identifying a base class
with a class of instances that are solvable by a particular polynomial-time
subsolver, one can consider a heterogeneous base class as a
``portfolio subsolver,'' where for each instance the best suitable
subsolver from the portfolio is chosen.

\paragraph{SAT Backdoor Sets}
A natural question at this point is whether the fixed-parameter
tractability results for the detection of strong backdoor sets into
individual base classes can be extended to more powerful heterogeneous 
base classes. In this work, we completely characterize the complexity 
landscape for heterogeneous base classes obtained arbitrary combinations of the 
well-known Schaefer classes~\cite{Schaefer78}, in the following denoted
by $\HORN$ (for Horn formulas), $\AHORN$ (for Anti-Horn formulas),
$\KROM$ (for 2-CNF formulas), $\ZVAL$ (for zero valid formulas), 
and $\OVAL$ (for one valid formulas). 
To state the classification, we briefly 
introduce some terminology. We say that a pair $(\CCC,\CCC')$ of Schaefer 
classes is a \emph{bad pair} if $\CCC \in \{\HORN,\ZVAL\}$ and $\CCC' \in
\{\AHORN,\OVAL\}$. 

Let $\CCC$ be a class of CNF-formulas such that $\CCC=\bigcup_{s \in S}s$ 
for some subset $S$ of the Schaefer classes. Our main result for SAT
backdoor sets (Theorem~\ref{the:sat-dico})
is that \STBDET{\CCC} is fixed parameter tractable if and only if $S$ does not contain a bad pair of Schaefer classes, otherwise it is $\W[2]$-hard. 
On the other hand, detecting weak backdoors is always $\W[2]$-hard 
for any choice of $\CCC$, and we show this by building on the known
hardness results for when $\CCC$ is a singleton set (Theorem~\ref{wbd:w-hard}). 

We also show that \STBDET{\CCC} as well as \WEBDET{\CCC} are fixed-parameter tractable for the 
combined parameter $k$ and the maximum length $r$ of a clause of the input 
formula (Theorem~\ref{the:sat-clause-sb} and~\ref{the:sat-clause-wb}). These 
algorithms formalize the intuition that all the hardness results in the 
previous classification result exploit clauses of unbounded length.

\paragraph{CSP Backdoor Sets}
The identification of tractable classes of CSP instances has been
subject of extensive studies. A prominent line of research, initiated
by \citet{Schaefer78} in his seminal paper on Boolean CSP, is to
identify tractable classes by restricting the relations that may
appear in constraints to a prescribed set, a constraint language.
Today, many constraint languages have been identified that give rise
to tractable classes of CSPs \cite{BulatovDalmau06,Idziak10:siam,Barto14:jacm};
typically such languages are defined in terms of certain \emph{closure
  properties}, which ensure that the relations are closed under
pointwise application of certain operations on the domain. For
instance, consider a CSP instance whose relations are closed under a
constant operation $f(x) = d$ for some $d \in D$ (such a operation is a
polymorphism of the relations in the instance). 
Then note that every relation is either empty or forced
to contain the tuple $\langle d,d,\ldots,d \rangle$. Thus, given a
particular instance, we may either declare it unsatisfiable (if it
contains a constraint over the empty relation), or satisfy it
trivially by setting every variable to $d$. Further examples of
polymorphisms for which closure properties yield tractable CSP are
min, max, majority, affine, and Mal'cev operations~\cite{JeavonsCohenGyssens97,Barto14:survey}.
 
We study the problem of finding strong and weak backdoor sets into tractable classes
of CSP instances defined by certain polymorphisms.  Our main result
for CSP backdoors (Theorem~\ref{the:fpt-csp}) establishes
fixed-parameter tractability for a wide range of such base classes.
In particular, we show that the detection of strong and weak backdoor sets is
fixed-parameter tractable for the combined parameter backdoor size,
domain size, and the maximum arity of constraints.  In fact, this
result entails \emph{heterogeneous} base classes, as different
instantiations of the backdoor variables can lead to reduced instances
that are closed under different polymorphisms (even polymorphisms of
different type). We complement our main result with hardness results
that show that we lose fixed-parameter tractability when we
omit either domain size or the maximum arity of constraints from
the parameter (Theorems~\ref{the:domain} and \ref{the-arity2}). Hence,
Theorem~\ref{the:fpt-csp} is tight in a certain sense.

\paragraph{Related Work}

Recently, two papers dealing with strong backdoor set detection for the
constraint satisfaction problem have
appeared~\cite{CarbonnelCooperHebrard14,GanianRamanunjanSzeider16},
which nicely supplement (however are mostly orthogonal to) the results
in this paper. \citet{CarbonnelCooperHebrard14} show
$\W[2]$-hardness for strong backdoor set detection parameterized by
the size of the backdoor set even for CSP-instances with only one
constraint (however with unbounded domain and unbounded arity). They also give
a fixed-parameter algorithm for strong backdoor set detection
parameterized by the
size of the backdoor set and the maximum arity of
any constraint, if the base class is ``h-Helly'' for any fixed integer
$h$.
However, as is also mentioned in~\citet{CarbonnelCooper15}, the
``h-Helly'' property is rather restrictive and orthogonal to our
approach of considering the domain as an additional parameter.

\citet{GanianRamanunjanSzeider16} show fixed-parameter
tractability of strong backdoor detection parameterized by the size of
the backdoor to a very general family of heterogeneous base classes,
i.e., base classes defined as all CSP-instances obtained from the
disjoint union of tractable, finite, and semi-conservative constraint languages. 
These base classes are orthogonal to the base classes considered in
this paper. They are more general in the sense that they allow for
the CSP-instance to consist of pairwise disjoint subinstances each
belonging to a different
tractable class, and they are more specific in the sense that they
only consider finite and semi-conservative constraint languages, which
also restricts the CSP-instances to bounded domain and bounded arity. 

\section{Preliminaries}

\paragraph{SAT}

A \emph{literal} is a propositional variable $x$ or a negated variable
$\neg x$. We also use the notation $x=x^1$ and $\neg x= x^0$. A
\emph{clause} is a finite set of literals that does not contain a
complementary pair $x$ and $\neg x$. A propositional formula in
conjunctive normal form, or \emph{CNF formula} for short, is a set of
clauses. For a clause $C$ we write $\mbox{var}(C)=\SB x \SM x\in C$ or $\neg x \in C\SE$, and for a CNF
formula $F$ we write $\mbox{var}(F)=\bigcup_{C\in F} \mbox{var}(C)$.  

For a set $X$ of propositional variables we denote by $X\rightarrow\{0,1\}$ the set of
all mappings from $X$ to $\{0,1\}$, the truth assignments on $X$.
We denote by $\overline{X}$ the set of literals corresponding to the negated variables of $X$.
For $\Bassgn{\tau}{X}$ we let 
$\text{true}(\tau)=\SB x^{\tau(x)}\SM x\in X\SE$ and
$\text{false}(\tau)=\SB x^{1-\tau(x)}\SM x\in X \SE$ be the sets of literals set
by $\tau$ to $1$ and $0$, respectively. 
Given a CNF formula $F$ and a truth assignment $\Bassgn{\tau}{X}$ we define
$F[\tau]=\SB C\setminus \text{false}(\tau) \SM C\in F,\ C\cap\text{true}(\tau)=
\emptyset\,\}$. If $\Bassgn{\tau}{\{x\}}$ and $\epsilon=\tau(x)$, we simply
write $F[x=\epsilon]$ instead of $F[\tau]$.

A CNF formula $F$ is \emph{satisfiable} if there is some
$\Bassgn{\tau}{\mbox{var}(F)}$ with $F[\tau]=\emptyset$, otherwise $F$ is
\emph{unsatisfiable}.

\paragraph{CSP}

Let $D$ be a set and $n$ and $n'$ be natural numbers. 
An $n$-ary relation on $D$
is a subset of $D^n$. For a tuple $t \in D^n$, we denote by $t[i]$,
the $i$-th entry of $t$, where $1 \leq i \leq n$. For
two tuples $t \in D^n$ and $t' \in D^{n'}$, we denote by $t \concat
t'$, the concatenation of $t$ and $t'$.

An instance of a \emph{constraint satisfaction problem} (CSP) $\cspi$
is a triple $\tuple{V,D,C}$, where $V$ is a finite set of variables
over a finite set (domain) $D$, and $C$ is a set of constraints. A
\emph{constraint} $c \in C$ consists of a \emph{scope},
denoted by $V(c)$, which is an ordered list of a subset of
$V$, and a relation, denoted by $R(c)$, which is a
$|V(c)|$-ary relation on $D$. To simplify notation, we sometimes treat
ordered lists without repetitions, such as the scope of a
constraint, like sets. 
For a variable $v \in V(c)$ and a tuple $t \in R(c)$, we denote
by $t[v]$, the $i$-th entry of $t$, where $i$ is the position of $v$ in
$V(c)$.
For a CSP instance $\cspi=\tuple{V,D,C}$ we sometimes denote by $V(\cspi)$,
$D(\cspi)$, $C(\cspi)$, and $\arity(\cspi)$, its set of variables $V$, its domain $D$,
its set of constraints $C$, and the maximum arity of any constraint of
$\cspi$, respectively. 

Let $V' \subseteq V$ and $\assgn{\tau}{V'}{D}$. For
a constraint $c \in C$, we denote by $c[\tau]$, the constraint whose 
scope is $V(c) \setminus V'$ and whose relation contains all
$|V(c[\tau])|$-ary tuples $t$ such that there is a $|V(c)|$-ary
tuple $t' \in R(c)$ with $t[v]=t'[v]$ for every $v \in V(c[\tau])$ and
$t'[v]=\tau(v)$ for every $v \in V'$. We denote by $\cspi[\tau]$ the
CSP instance with variables $V \setminus V'$, domain $D$, and
constraints $C[\tau]$, where $C[\tau]$ contains a constraint $c[\tau]$
for every $c \in C$.

A \emph{solution} to a CSP instance $\cspi$ is a mapping $\assgn{\tau}{V}{D}$
such that $\tuple{\tau[v_1],\dotsc,\tau[v_{|V(c)|}]}
\in R(c)$ for every $c \in C$ with $V(c)=\tuple{v_1,\dotsc,v_{|V(c)|}}$.

\paragraph{Backdoors}

Backdoors are defined relative to some fixed class $\CCC$ of instances
of the problem under consideration (i.e., SAT or CSP). One usually
assumes that the problem is tractable for instances from $\CCC$, as
well as that the recognition of $\CCC$ is tractable.
 
In the context of SAT, we define a \emph{strong} \emph{$\CCC$-backdoor
  set}  of a CNF formula $F$ to be  a set $B$ of variables such that
$F[\tau]\in \CCC$ for each $\Bassgn{\tau}{B}$. 
A \emph{weak}
\emph{$\CCC$-backdoor set} of $F$ is a set $B$ of variables such that
$F[\tau]$ is satisfiable and $F[\tau]\in \CCC$ holds for some
$\Bassgn{\tau}{B}$.
If we know a strong $\CCC$-backdoor set of $F$, we can decide the
satisfiability of $F$ by checking the satisfiability of $2^k$ ``easy''
formulas $F[\tau]$ that belong to $\CCC$. Thus SAT decision is
fixed-parameter tractable in the size $k$ of the backdoor.
Similarly, in the context of CSP, we define a \emph{strong}
\emph{$\CCC$-backdoor set} of a CSP instance $\cspi=\tuple{V,D,C}$ as
a set $B$ of variables such that $\cspi[\tau]\in \CCC$ for every
$\assgn{\tau}{B}{D}$. We also call a strong $\CCC$\hy backdoor a
\emph{strong backdoor set into $\CCC$}.
  A \emph{weak} \emph{$\CCC$-backdoor set} of
  $\cspi$ is a set $B$ of variables such that $\cspi[\tau]$ is
  satisfiable and $\cspi[\tau]\in \CCC$ holds for some $\assgn{\tau}{B}{D}$.  If we know a strong $\CCC$-backdoor set of $\cspi$
of size $k$, we can reduce the satisfiability of $\cspi$ to the
satisfiability of $d^k$ CSP instances in $\CCC$ where $d=\Card{D}$.
Thus deciding the satisfiability of a CSP instance is fixed-parameter
tractable in the combined parameter $d+k$.

The challenging problem is---for SAT and for CSP---to find a
strong, or weak $\CCC$\hy backdoor set of size at most $k$ if one
exists. 

For each class $\CCC$ of SAT or CSP instances, we define the following
problem.

\defproblem{\STBDET{\CCC}}{A SAT or CSP instance $\cspi$ and a
  nonnegative integer $k$.}{Does $\cspi$ have a strong $\CCC$-backdoor
  set of size at most $k$?}

The problem \WEBDET{\CCC} is defined similarly.

\paragraph{Parameterized Complexity}
We provide basic definitions of parameterized complexity; for an
in-depth treatment we refer to the recent monograph
\cite{DowneyFellows13}.  A problem is \emph{parameterized} if each
problem instance $I$ is associated with a nonnegative integer $k$, the
parameter.  A parameterized problem is \emph{fixed-parameter
  tractable} (or \emph{FPT}, for short) if there is an algorithm, $A$,
a constant $c$, and a computable function $f$, such that $A$ solves
instances of input size $n$ and parameter $k$ in time $f(k)n^c$. 
Fixed-parameter tractability extends the conventional notion of
polynomial-time tractability, the latter being the special case where
$f$ is a polynomial.  The so-called Weft-hierarchy $\W[1]\subseteq
\W[2] \subseteq \dots$ contains classes of parameterized decision
problems that are presumed to be larger than FPT. It is believed that problems that are hard for any of the classes in the Weft-hierarchy are not fixed-parameter tractable.  The
classes are closed under \emph{fpt-reductions} that are
fixed-parameter tractable many-one reductions, which map an instance
$x$ with parameter $k$ of one problem to a decision-equivalent
instance $x'$ with parameter $k'$ of another problem, where $k'\leq
f(k)$ for some computable function $f$.

For instance, the following problem is well-known to be $\W[2]$\hy
complete \cite{DowneyFellows13}.

\defparproblem{{\sc Hitting Set}}{ A set system ${\mathcal Q}$ over a
set (universe) $U$, and a positive integer $k$}{$k$}{Is there a
subset $X \subseteq U$, with $|X|\le k$ such that every
set $S \in {\mathcal Q}$ contains at least one element from $X$?}

\section{Backdoor Detection for SAT}

\newcommand{\HHTT}{\text{H or 2}}
\newcommand{\HH}{\text{H}}
\newcommand{\TT}{\text{T}}

Schaefer's base classes \cite{Schaefer78} give rise to classes of CNF
formulas defined in terms of syntactical properties of clauses.\footnote{Affine
Boolean formulas considered by Schaefer do not correspond naturally to a
class of CNF formulas, hence we do not consider them here. However, we do consider the
affine case in the context of Boolean CSPs; cf.~Theorem~\ref{the:domain}.}
A clause is
\begin{itemize}
 \item \emph{Horn} if it contains at most one positive literal,
 \item \emph{Anti-Horn} if it contains at most one negative literal,
 \item \emph{2CNF} if it contains at most two literals,\footnote{A clause
 containing exactly two literals is also known as a \emph{Krom}
 clause~\cite{Knuth11:taocp4a}.}
 \item \emph{0-valid} if it is empty or contains at least one negative literal, and
 \item \emph{1-valid} if it is empty or contains at least one positive literal\footnote{0-valid and 1-valid is often defined more restrictively, not allowing empty clauses.}.
\end{itemize}
A CNF formula is \emph{Horn}, \emph{Anti-Horn}, etc.\ if it contains only Horn, Anti-Horn, etc.\
clauses. We denote the respective classes of CNF formulas by HORN,
HORN$^-$, 2CNF, 0-VAL, and 1-VAL.
% Note that we had to adapt the definition of 0-valid and 1-valid by
% allowing for empty clauses, because without empty clauses any formula
% containing an all negative respectively all positive clause would not
% have any strong backdoor set into \OVAL{} respectively \ZVAL{}.

\subsection{Strong Backdoor Sets}

In this section we study the parameterized complexity of
\STBDET{\CCC}, where $\CCC$ is the union of any subset of the Schaefer
classes. In the case that $\CCC$ consists only of a single Schaefer
class this has been intensively studied
before~\cite{NishimuraRagdeSzeider04-informal,GaspersSzeider12} and it
is known that \STBDET{\CCC} is polynomial for 0-VAL and 1-VAL,
and FPT for the remaining Schaefer classes.

The results in this subsection are as follows. First we show that \STBDET{\HHH} is fixed-parameter tractable parameterized by the size of the backdoor set for various combinations of the Schaefer classes, i.e., the case that $\HHH=\KROM \cup \CCC$ for any $\CCC \in \SCHAE$ is shown in Theorem~\ref{the:sat-fpt-krom}, the cases that $\HHH=\HORN \cup \ZVAL$ and
$\HHH=\AHORN \cup \OVAL$ are shown in Theorem~\ref{the:sat-fpt-horn-zval}, and finally 
Theorem~\ref{the:sat-fpt-triple} shows tractability for the cases that
$\HHH=\KROM \cup \HORN \cup \ZVAL$ and $\HHH=\KROM \cup \AHORN \cup \OVAL$. 
In contrast, Theorem~\ref{the:sat-bp-w2} captures a condition for hardness of the
strong backdoor detection problem, which is complementary to all the
cases in the first three results. Together, and combined with known
results in the literature, these theorems give us a complete
complexity classification, which is summarized in Theorem~\ref{the:sat-dico}. 
  
Our hardness results rely on the fact that the
clauses have unbounded length. Theorem~\ref{the:sat-clause-sb} shows
that if we consider the maximum length of any clause as an additional
parameter, then the problem of finding a strong heterogeneous backdoor
becomes fixed-parameter tractable for any combination of the Schaefer classes. 

As mentioned above we employ branching algorithms for the detection of
strong backdoor sets. In particular, all algorithms are based on a
depth-bounded search tree approach and mainly differ in the set of
employed branching rules. Informally, the algorithms construct a
search tree, where each node is labeled with a potential backdoor set
and the crucial difference between the algorithms lies in
chosing the set of possible extensions of the potential backdoor set
on which to branch on. In this way backdoor set detection can be
reduced to the following problem, which given a potential backdoor set $B'$
computes the set $\mathcal{Q}$ of potential extensions of $B'$, which can be
branched on in the next step.

\defproblem{\SATBR{\CCC}}{A CNF formula $F$ with variables $V$, a non-negative integer $k$, and a subset $B' \subseteq V$ with $|B'|\leq k$.}
{Output:
  \begin{itemize}
  \item \textsc{Yes} iff $B'$ is a strong $\CCC$\hy backdoor set of
    $F$, or
  \item a family $\mathcal{Q}$ of subsets of $V \setminus B'$ such that:
    \begin{itemize}
    \item[B1] for every $Q \in \mathcal{Q}$ it holds that $|B' \cup Q|\leq k$, and
    \item[B2] for every strong $\CCC$\hy backdoor set $B$ of $F$ with
      $B' \subseteq B$ and $|B|\leq k$ there is a $Q
      \in \mathcal{Q}$ such that $B'\cup Q \subseteq B$.
    \end{itemize}
  \end{itemize}
}

The next theorem shows that \STBDET{\CCC} can be reduced to
\SATBR{\CCC} via a branching algorithm that uses an algorithm $\AAA$
for \SATBR{\CCC} as a subroutine to determine the set of braching
choices. Importantly the running time of the resulting branching
algorithm for \STBDET{\CCC} crucially depends on the ``type'' of
braching rules returned by $\AAA$, or in other words the branching
factor. To be able to quantify the branching factor we introduce the
function $T_\AAA(k)$ defined recursively as 

\[T_\AAA(k)=\max_{\mathcal{Q} \in
  \mathcal{O}}\left (|\mathcal{Q}|T_\AAA(k-\min_{Q\in \mathcal{Q}}|Q|)\right)\textup{ and }T_\AAA(0)=0\]

where $\mathcal{O}$ is the set of all
possible output sets $\mathcal{Q}$ of $\AAA$ on any input $F$,
$B' \subseteq V$ with $|B'|\leq k$, and $k$.
\begin{THE}\label{the:sat-det}
  Let $\CCC$ be a class of CNF formulas, $F$ be a CNF formula, $k$
  a non-negative integer, and let $\AAA$ be an algorithm
  solving \SATBR{\CCC}. Then \STBDET{\CCC} can be solved in time
  $O(R_\AAA(F,k) \cdot T_\AAA(k))$, where $R_\AAA(F,k)$ is the time required by
  $\AAA$ on any input $F$, $B' \subseteq V$ with $|B'|\leq k$, and $k$.
\end{THE}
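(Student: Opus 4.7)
The plan is to turn $\AAA$ into a depth-bounded search tree algorithm whose nodes are labeled with partial backdoor candidates $B' \subseteq V$. I would start at the root with $B' = \emptyset$ and, at each node, invoke $\AAA$ on the triple $(F, B', k)$. If $\AAA$ answers Yes, then by the definition of \SATBR{\CCC} the set $B'$ is already a strong $\CCC$-backdoor of $F$ of size at most $k$, and I output it. Otherwise $\AAA$ returns a family $\mathcal{Q}$ satisfying (B1) and (B2); for each $Q \in \mathcal{Q}$ I create a child labeled $B' \cup Q$ and recurse. If the whole tree is exhausted, I report that no strong $\CCC$-backdoor of size at most $k$ exists.

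Correctness follows by induction on $k - |B'|$. Soundness is immediate since $\AAA$ returns Yes only when its input $B'$ is a strong $\CCC$-backdoor. For completeness, suppose a strong $\CCC$-backdoor $B \supseteq B'$ of size at most $k$ exists. If $B' = B$, then $\AAA$ answers Yes. Otherwise $\AAA$ outputs some $\mathcal{Q}$; by (B2) there is $Q \in \mathcal{Q}$ with $B' \cup Q \subseteq B$, and by (B1) we have $|B' \cup Q| \le k$, so the inductive hypothesis applied to the child labeled $B' \cup Q$ produces a strong $\CCC$-backdoor of size at most $k$.

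For the running time, the work at each node is dominated by the single call to $\AAA$, costing $R_\AAA(F,k)$; it therefore suffices to bound the number of nodes. At a node the algorithm spawns $|\mathcal{Q}|$ children, and each child shrinks the remaining budget $k - |B'|$ by at least $\min_{Q \in \mathcal{Q}} |Q|$. This is precisely the recurrence defining $T_\AAA$, so taking the maximum over all possible outputs of $\AAA$ the search tree has at most $T_\AAA(k)$ nodes, yielding the claimed $O(R_\AAA(F,k) \cdot T_\AAA(k))$ bound. There is no real conceptual obstacle here; the only subtlety worth flagging is that termination (and finiteness of the recurrence) requires each $Q$ returned in the non-Yes case to be non-empty, which is implicit in the $\min_{Q \in \mathcal{Q}} |Q|$ term of the recurrence and is automatic for any sensible algorithm solving \SATBR{\CCC}, since an empty $Q$ would contribute no progress.
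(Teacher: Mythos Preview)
Your proposal is correct and follows essentially the same depth-bounded search tree approach as the paper: both start from $B'=\emptyset$, call $\AAA$ at each node, branch on the returned family $\mathcal{Q}$, and bound the tree size by the recurrence defining $T_\AAA(k)$. The only cosmetic difference is that the paper phrases the completeness induction over the distance from the root rather than over the remaining budget $k-|B'|$, and your explicit remark about needing each $Q$ to be non-empty is a point the paper leaves implicit.
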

\begin{proof}
  We will employ a branching algorithm that employs the algorithm
  $\AAA$ solving \SATBR{\CCC} as a subroutine. 
  The algorithm uses a depth-bounded
  search tree approach to find a strong $\CCC$-backdoor set 
  of size at most $k$. Let $F$ be any CNF-formula with variables $V$.

  We construct a search tree $T$, for which
  every node is labeled by a set $B$ of at most $k$ variables of
  $V$. Additionally, every leaf node has a second label,
  which is either \textsc{Yes} or \textsc{No}. 
  $T$ is defined inductively as follows. The root of $T$ is labeled by
  the empty set. Furthermore, if $t$ is a node of $T$,
  whose first label is $B'$, then the
  children of $t$ in $T$ are obtained as follows. 
  First we run the algorithm $\AAA$ on $F$, $B'$, and
  $k$. If $\AAA$ returns \textsc{Yes} then $t$ becomes a leaf
  node, whose second label is \textsc{Yes}. Otherwise let $\mathcal{Q}$ be the
  set of subsets of $V \setminus B'$ returned by $\AAA$. If
  $\mathcal{Q}=\emptyset$, then there is no strong $\CCC$\hy backdoor set $B$ of
  $F$ with $B' \subseteq B$ and $|B|\leq k$ and consequently $t$
  becomes a leaf node whose second label is \textsc{No}. Otherwise,
  i.e., if $\mathcal{Q} \neq \emptyset$ then $t$ has a a child for every $Q \in
  \mathcal{Q}$, whose first label is $B' \cup Q$. This completes the definition of $T$.
  If $T$ has a leaf node, whose second label is \textsc{Yes}, then the
  algorithm returns the first label of that leaf node as a strong
  $\CCC$\hy backdoor set of $F$. Otherwise the
  algorithm returns \textsc{No}. This completes the description of the
  algorithm.

  We now show the correctness of the algorithm. First, suppose 
  the search tree $T$ built by the algorithm has a leaf node $t$ whose
  second label is \textsc{Yes}. Here, the algorithm returns the first label, say
  $B$ of $t$. By definition of $T$, we obtain that $|B|\leq k$ and $B$
  is a strong $\CCC$\hy backdoor set of $F$.

  Now consider the case where the algorithm returns
  \textsc{No}. We need to show that $F$ has no strong $\CCC$\hy
  backdoor set of size at most $k$.
  Assume, for the sake of contradiction that such a set $B$ exists.

  Observe first that for every leaf node $t$ of $T$ whose second label
  is \textsc{No}, it holds that $B' \nsubseteq B$. This is because
  the algorithm $\AAA$ on input $F$, $B'$, and $k$ returned the
  empty set for $\mathcal{Q}$. Hence if $t$ is a leaf node whose first label
  $B'$ satisfies $B' \subseteq B$, then the second label of $t$ must
  be \textsc{Yes}.

  It hence remains to show that $T$ has a leaf node whose first label is a set $B'$
  with $B' \subseteq B$. This will complete the proof about the
  correctness of the algorithm. We will show a slightly stronger
  statement, namely, that for every natural number $\ell$, either $T$
  has a leaf whose first label is contained in
  $B$ or $T$ has an inner node of distance exactly $\ell$ from the root
  whose first label is contained in $B$. We show the latter by
  induction on $\ell$.

  The claim obviously holds for $\ell=0$. So assume that $T$ contains a
  node $t$ at distance $\ell$ from the root of $T$ whose first label, say
  $B'$, is a subset of $B$. 
  If $t$ is a leaf node of $T$, then the
  claim is shown. Otherwise, the algorithm $\AAA$
  determined that $B'$ is not a strong $\CCC$\hy backdoor set of $F$ and
  returned a set $\mathcal{Q}$ of subsets of $V\setminus B'$ satisfying B2.
  Hence there is a $Q \in \mathcal{Q}$ with $B' \cup Q \subseteq B$ and because
  $t$ contains a child $t'$ whose first label is $B' \cup Q$, we
  obtain that $t'$ is a node at distance $\ell+1$ satisfying the
  induction invariant.
  This concludes the proof concerning the correctness of the algorithm.

  The running time of the algorithm is obtained as
  follows. Let $T$ be a search tree obtained by the
  algorithm. Then the running time of the depth-bounded search tree
  algorithm is $O(|V(T)|)$ times the maximum time that is spend on any
  node of $T$. Since $|V(T)| \leq T_\AAA(k)$ and the time spend on any node
  is at most $R_\AAA(F,k)$, the stated running time follows.
\end{proof}

Our first tractability result is for \STBDET{(\KROM \cup \CCC)}, where $\CCC \in \SCHAE$. Before formally stating the algorithm, we provide a brief intuition for the branching rules. Typically, we branch on an obstructing clause (or a pair of obstructing clauses), 
a situation that occurs in one of two flavors:

\begin{enumerate}
\item[(1)] Either there is a clause $C$ that is not in $\KROM \cup \CCC$, or,
\item[(2)] there is a pair of clauses $C$ and $C'$, where $C \in \KROM \setminus \CCC$ 
  and $C' \in \CCC \setminus \KROM$.
\end{enumerate}

Consider the case when $\CCC = \ZVAL$. In scenario (1), the formula
has a clause $C$ of length at least three with only positive
literals. Because $C$ has only positive literals, it follows that
no subclause of $C$ is $0$-valid. Hence every backdoor set has to hit all but 
at most two literals from $C$. It follows that if $C$ has more than $k+2$ literals, then there is
no strong backdoor of size $k$ to $\ZVAL \cup \KROM$. Otherwise, the
clause has at most $k+2$ literals on which we can branch
exhaustively. In scenario (2), we know that $C$ has at most two
literals and because $C \notin \ZVAL$ we obtain that
any strong backdoor set that does not contain a variable
from $C$ must reduce $C'$ to a clause with at most two
literals. Hence, any strong backdoor set has to contain either a
variable from $C$ (but there are at most two of those) or must contain 
a variable from any subset of three variables in $C'$.

Consider the case when $\CCC = \HORN$. In scenario (1) above, the
formula has a clause $C$ of length
at least three with at least two positive literals, say $x$ and
$y$. Let $\ell$ be any other literal in $C$. It is clear that any
strong backdoor set must contain one of $x$, $y$ or $\ell$, and we branch
accordingly. In scenario (2), we know that $C$ has at most two
literals and $C'$ contains at least two positive literals. Because
every strong backdoor set that does not contain a variable
from $C$ must reduce $C'$ to a clause with at most two
literals, we can branch on the variables in $C$ and any subset of
three variables in $C'$.

The intuitive explanations for $\OVAL$ and $\AHORN$ are
analogous. Note that the criteria described above need to be refined
further at internal nodes of the branching process. We now turn to a
formal exposition of the algorithm.

\begin{THE}\label{the:sat-fpt-krom}
  Let $\CCC \in \SCHAE$. Then, \STBDET{(\KROM \cup \CCC)}
  is fixed-parameter tractable parameterized by the size of the backdoor set.
\end{THE}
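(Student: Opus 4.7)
My plan is to apply Theorem~\ref{the:sat-det} by designing an algorithm $\AAA$ for \SATBR{(\KROM \cup \CCC)} whose output branching families $\mathcal{Q}$ consist of singletons with $|\mathcal{Q}|$ bounded by a function of $k$, so that $T_\AAA(k)$ is computable. The case $\CCC = \KROM$ is already known to admit FPT strong backdoor detection, and by Boolean complementation it suffices to treat $\CCC = \HORN$ and $\CCC = \ZVAL$ explicitly; the cases $\CCC \in \{\AHORN, \OVAL\}$ follow by the dual argument on negated formulas.

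On input $(F, B', k)$, algorithm $\AAA$ first enumerates all $2^{|B'|} \leq 2^k$ assignments $\tau : B' \to \{0,1\}$ and tests $F[\tau] \in \KROM \cup \CCC$ in polynomial time; if this holds for every $\tau$, return \textsc{Yes}. Otherwise, $\AAA$ fixes an offending $\tau$ and a witness for $F[\tau] \notin \KROM \cup \CCC$. Since a formula fails to lie in $\KROM \cup \CCC$ iff it contains both a non-Krom clause and a non-$\CCC$ clause, the witness is either a single clause $C \in F[\tau]$ with $C \notin \KROM \cup \CCC$ (\emph{scenario~(1)}), or a pair $C, C' \in F[\tau]$ with $C \in \KROM \setminus \CCC$ and $C' \in \CCC \setminus \KROM$ (\emph{scenario~(2)}); in either case $\mbox{var}(C) \cup \mbox{var}(C')$ is disjoint from $B'$ because $F[\tau]$ is already reduced by $\tau$.

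From the witness I construct $\mathcal{Q}$ as follows. For $\CCC = \HORN$ scenario~(1), the clause $C$ has at least three literals and at least two positive literals $x, y$; fixing any third literal $\ell \in C$, I set $\mathcal{Q} = \{\{x\}, \{y\}, \{\mbox{var}(\ell)\}\}$. The correctness argument: if a strong backdoor $B \supseteq B'$ with $|B| \leq k$ missed all three variables, then extending $\tau$ to $\sigma : B \to \{0,1\}$ by choosing $\sigma(v)$ to falsify the corresponding literal of $C$ for each $v \in B \cap \mbox{var}(C)$ would leave $x, y, \ell$ intact in $C[\sigma]$, contradicting $F[\sigma] \in \KROM \cup \HORN$. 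In scenario~(2), $C$ consists of two positive literals while $C'$ has $|C'| \geq 3$ with at most one positive literal; an analogous falsifying-extension argument yields the dichotomy $B \cap \mbox{var}(C) \neq \emptyset$ or $|B \cap \mbox{var}(C')| \geq |C'| - 2$, so I take $\mathcal{Q} = \{\{v\} : v \in \mbox{var}(C)\} \cup \{\{v\} : v \in T\}$ for any 3-subset $T \subseteq \mbox{var}(C')$, of size at most $5$. The case $\CCC = \ZVAL$ is symmetric: in scenario~(1) the clause $C$ is all-positive with $|C| \geq 3$, which forces $|B \cap \mbox{var}(C)| \geq |C| - 2$ and yields either $\mathcal{Q} = \{\{v\} : v \in \mbox{var}(C)\}$ or $\mathcal{Q} = \emptyset$ (certifying infeasibility when $|C| > k - |B'| + 2$); scenario~(2) mirrors the Horn case with $C'$ playing the role of the long obstructing clause.

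In every case $\mathcal{Q}$ consists of singletons with $|\mathcal{Q}| \leq k+2$, so $T_\AAA(k) \leq (k+2) \cdot T_\AAA(k-1)$ is bounded by a computable function of $k$, and Theorem~\ref{the:sat-det} yields the claimed fixed-parameter tractability. The main obstacle is the scenario~(2) case analysis: one must verify, by constructing a specific $\sigma$ extending $\tau$ that deliberately falsifies the literals of $C'$ whose variables lie in $B$, that failing to hit $\mbox{var}(C)$ forces $B$ to shorten $C'$ down to at most two literals; the choice of this $\sigma$ (falsifying rather than satisfying literals) is the technical crux underlying each branching dichotomy.
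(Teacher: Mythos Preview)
Your proposal is correct and follows essentially the same route as the paper: invoke Theorem~\ref{the:sat-det}, enumerate the assignments of $B'$, locate either a single clause outside $\KROM\cup\CCC$ or an incompatible pair $C\in\KROM\setminus\CCC$, $C'\in\CCC\setminus\KROM$, and branch on a bounded set of variables extracted from the witness. Two small points: in the $\ZVAL$/$\OVAL$ scenario~(1) the paper branches on all $(|C|-2)$-element subsets of $\mbox{var}(C)$ at once, which yields the sharper bound $T_\AAA(k)\le 9^k$ rather than your $(k+2)!$; and you should add the routine check that returns $\mathcal{Q}=\emptyset$ whenever $|B'|=k$ in the $\HORN$ branches, since otherwise your singleton sets violate condition~B1.
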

\begin{proof}
  Let $\HHH=\KROM \cup \CCC$. We will provide an algorithm
  $\AAA$ solving \SATBR{\HHH} in time $O(2^k|F|)$ and
  satisfying $T_\AAA(k)=9^k$. The result then follows from Theorem~\ref{the:sat-det}.

  Let $F$ be the given CNF formula with variables $V$, $k$ the given
  integer, and $B' \subseteq V$ with $|B'|\leq k$. The algorithm
  $\AAA$ first checks whether $F[\tau] \in \HHH$ for every
  assignment $\Bassgn{\tau}{B'}$. If that is the case then
  $\AAA$ returns \textsc{Yes}. Otherwise let 
  $\Bassgn{\tau}{B'}$ be an assignment such that
  $F[\tau] \notin \HHH$, we consider two cases: 
  \begin{enumerate}
  \item if $|B'|=k$, then there is no strong $\HHH$\hy backdoor set
    $B$ of
    $F$ with $B' \subseteq B$ and $|B|\leq k$ and the algorithm
    $\AAA$ correctly returns the set $\mathcal{Q}=\emptyset$,
  \item otherwise, i.e., if $|B'|<k$, then
    we again distinguish two cases:
    \begin{enumerate}
    \item there is a clause $C \in F[\tau]$ with $C \notin \HHH$, 
      then we again distinguish two cases:
      \begin{enumerate}
      \item if $\CCC \in \{\HORN,\AHORN\}$, then let $O$
        be a subset of $C$ with $|O|=3$ and $O \notin \CCC$.
        Then $\AAA$ returns the set $\mathcal{Q}=\SB \{v\} \SM v \in
        \mbox{var}(O)\SE$.

        \medskip
        To show the correctness of this step assume that there is a
        strong $\HHH$\hy backdoor set $B$ of $F$ with $B' \subseteq B$
        and $|B|\leq k$. It suffices to show that $B \cap O\neq
        \emptyset$. Suppose for a contradiction that this is not the
        case and let $\Bassgn{\tau^*}{B}$ be
        an assignment compatible with $\tau$ that does not satisfy any
        literal of $C$. Then, $F[\tau^*]$ contains a superset of $O$ as a
        clause and hence $F[\tau^*] \notin \KROM \cup \CCC$,
        contradicting our assumption that $B$ is a strong $\HHH$\hy
        backdoor set of $F$.
      \item if $\CCC \in \{\OVAL,\ZVAL\}$, then we again distinguish
        two cases:
        \begin{enumerate}
        \item $|B'|+|C|-2>k$, then the algorithm $\AAA$
          returns the set $\mathcal{Q}=\emptyset$.
          
          \medskip
          The correctness of this step follows from the fact that
          every strong $\HHH$\hy backdoor set $B$ of $F$ with
          $B'\subseteq B$ has to contain all but at most two variables
          of $C$ and hence $B$ would have size larger than $k$.
        \item otherwise, i.e., if $|B|+|C|-2 \leq k$, then
          the algorithm $\AAA$ returns the set $\mathcal{Q}$ containing
          all sets $\mbox{var}(C\setminus O)$
          for every two element subset $O$ of $C$.

          \medskip
          To show the correctness of this step assume that there is a
          strong $\HHH$\hy backdoor set $B$ of $F$ with $B' \subseteq B$
          and $|B|\leq k$. It remains to show that $|\mbox{var}(C) \setminus B|\leq
          2$. Assume not, and let $\Bassgn{\tau^*}{B}$ be an assignment
          compatible with $\tau$ that does not satisfy any literal of $C$. 
          Then, $C[\tau^*] \in F[\tau^*]$ and because $|C[\tau^*]|>2$ and 
          $C[\tau^*] \notin \CCC$, also $F[\tau^*] \notin \KROM
          \cup \CCC$, contradicting our assumption that $B$ is a strong $\HHH$\hy
          backdoor set of $F$.
        \end{enumerate}
      \end{enumerate}
    \item otherwise, there is a clause $C$ in $F[\tau]$ with $C \in
      \KROM \setminus \CCC$ and a clause $C'$ in $F[\tau]$ with $C' \in
      \CCC \setminus \KROM$. Let $O$ be the set of variables
      containing the (at most) two variables from $C$ plus any three
      element subset of variables from $C'$. Then, the algorithm
      $\AAA$ returns the set $\mathcal{Q}=\SB \{v\}\SM v \in O\SE$.

      \medskip
      To show the correctness of this step assume that there is a
      strong $\HHH$\hy backdoor set $B$ of $F$ with $B' \subseteq B$
      and $|B|\leq k$.
      It remains to show that $B \cap
      \mbox{var}(O)\neq \emptyset$. Assume not, and let $\Bassgn{\tau^*}{B}$ be
      an assignment compatible with $\tau$ that does not satisfy any
      literal of $C'$. Then, $F[\tau^*]$ contains the clause $C$
      together with a subclause $C''$ of $C'$ containing at least three
      literals. Because $C \notin \CCC$ and $C'' \notin \KROM$, we
      obtain that $F[\tau^*] \notin \KROM \cup \CCC$.
    \end{enumerate}
  \end{enumerate}
  This completes the description and the proof of correctness of the
  algorithm $\AAA$. The running time of $\AAA$ is the
  number of assignments of the at most $k$ variables in $B'$ times the
  time required to test whether the reduced formula is in $\HHH$,
  i.e., $\AAA$ runs in time $O(2^k|F|)$. It remains to obtaine
  the stated bound of $9^k$ for the function $T_\AAA(k)$.
  According to the three branching rules given in 
  (2.a.i), (2.a.ii.B), and (2.b) we obtain that $T_\AAA(k)$ can be
  bounded by the maximum of the following recurrence relations:
  (1) $T_\AAA(k)=3T_\AAA(k-1)$ (2.a.i), (2) $T_\AAA(k)=r^2T_\AAA(k-(r-2))$ for any $r>2$ (2.a.ii.B), 
  and (3) $T_\AAA(k)=5T_\AAA(k-1)$ (2.b). In case of (1) $T_\AAA(k)$ is at most 
  $3^k$, in case of (2) the maximum of $9^k$ is obtained for $T_\AAA(k)$
  by setting $r=3$, and in case of (3) $T_\AAA(k)$ is at most $5^k$.
  It follows that $T_\AAA(k)\leq 9^k$.
\end{proof}

We now turn to the case when the base classes are either $\HORN \cup
\ZVAL$ or $\AHORN \cup \OVAL$.

As before, in scenario (1), the length of an obstructing clause must
be bounded in a \textsc{Yes}-instance. Indeed, consider a clause $C$
that is not in $\HORN \cup \ZVAL$. Such a clause has only positive
literals, and any strong backdoor set must contain all these literals but
one. Therefore, if the clause has more than $k+1$ literals, we
reject the instance, otherwise the clause length is bounded, allowing
for a bounded branching strategy. 

In scenario (2), on the other hand, we have a pair of obstructing
clauses, say $C \in \HORN \setminus \ZVAL$, and $C' \in \ZVAL
\setminus \HORN$. Now, note that $C$ can have only one literal by
definition. Let $O$ be any subset of two positive literals from
$C'$. Note that any strong backdoor must intersect either $O$ or the
unique literal in $C$, which again leads us to a feasible branching
step. We now turn to a formal description of the algorithm.

\begin{THE}\label{the:sat-fpt-horn-zval}
  \STBDET{(\HORN \cup \ZVAL)} and \STBDET{(\AHORN \cup \OVAL)}
  are fixed-parameter tractable parameterized by the size of the backdoor set.
\end{THE}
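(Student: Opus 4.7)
My plan is to apply Theorem~\ref{the:sat-det}: for each heterogeneous class $\HHH\in\{\HORN\cup\ZVAL,\AHORN\cup\OVAL\}$ it suffices to design a polynomial-time algorithm $\AAA$ for \SATBR{\HHH} whose branching sets certify a bound $T_\AAA(k)=O(c^k)$ for some constant $c$. I will focus on $\HHH=\HORN\cup\ZVAL$; the $\AHORN\cup\OVAL$ case then follows by the symmetry that swaps the roles of positive and negative literals.

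On input $F$, $B'$, $k$, the algorithm first tests in $O(2^k|F|)$ time whether $F[\tau]\in\HHH$ for every $\Bassgn{\tau}{B'}$ and returns \textsc{Yes} if so. Otherwise it picks a witness $\tau$ with $F[\tau]\notin\HHH$ and distinguishes the two scenarios from the informal discussion preceding the theorem. In scenario~(1) there is a clause $C\in F[\tau]$ with $C\notin\HORN\cup\ZVAL$, which the Schaefer definitions force to be an all-positive clause of length at least two. The key structural claim is that every strong backdoor $B\supseteq B'$ must leave at most one variable of $\mbox{var}(C)$ unset, so the algorithm rejects when $|B'|+|C|-1>k$ and otherwise branches on $\mathcal{Q}=\SB\mbox{var}(C)\setminus\{v\}\SM v\in\mbox{var}(C)\SE$, which has $|C|$ sets of size $|C|-1$. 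In scenario~(2) every clause of $F[\tau]$ is in $\HORN\cup\ZVAL$, yet there exist $C\in\HORN\setminus\ZVAL$ and $C'\in\ZVAL\setminus\HORN$; from the definitions, $C$ is then forced to be a positive unit $\{x\}$ and $C'$ must contain at least two positive literals. Picking two such variables into a set $O$, the algorithm branches on $\mathcal{Q}=\{\{x\}\}\cup\SB\{v\}\SM v\in O\SE$, giving $3$ branches of size $1$.

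Correctness of both branching rules is established by the standard ``suppose a backdoor avoids the branching set'' contradiction, built around the obstruction already exhibited by $\tau$: for scenario~(1) an extension of $\tau$ that falsifies the variables of $\mbox{var}(C)\cap B$ preserves a non-Horn non-0-valid clause, and for scenario~(2) an extension of $\tau$ that does not satisfy any literal of $C'$ (possible since no clause contains a complementary literal pair) keeps $C=\{x\}$ alive as a positive unit while the two positive literals of $O$ survive in $C'[\tau^*]$, making $F[\tau^*]$ simultaneously non-0-valid and non-Horn. The resulting recurrences are $T_\AAA(k)\leq m\cdot T_\AAA(k-m+1)$ with $m=|C|\geq 2$ in scenario~(1) and $T_\AAA(k)\leq 3T_\AAA(k-1)$ in scenario~(2); the latter dominates, yielding $T_\AAA(k)\leq 3^k$.

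The main obstacle I anticipate is being careful with the scenario~(2) case analysis: the unit-clause forcing for $C$ and the guaranteed existence of two positive literals in $C'$ are exactly what drives the small constant branching factor, and the bulk of the write-up will be spelling out that structural dichotomy and verifying the ``avoid $\{x\}\cup O$'' contradiction rather than anything genuinely algorithmically novel.
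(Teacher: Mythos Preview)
Your proposal is correct and follows essentially the same approach as the paper's proof: the same reduction to \SATBR{\HHH} via Theorem~\ref{the:sat-det}, the same two scenarios with the same branching sets (all-but-one variables of an all-positive clause in scenario~(1), and the unit variable plus two positive literals of $C'$ in scenario~(2)), and the same recurrence analysis yielding $T_\AAA(k)\leq 3^k$. The only minor omission is that you do not explicitly dispose of the boundary case $|B'|=k$ before entering scenario~(2) (where you would otherwise violate condition B1); the paper handles this as a separate initial case returning $\mathcal{Q}=\emptyset$, and you should do the same.
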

\begin{proof}
  We only give the proof for the class $\HORN \cup \ZVAL$, 
  as the proof for the class $\AHORN \cup \OVAL$ is analogous.
  Let $\HHH=\HORN \cup \ZVAL$. We will provide an algorithm
  $\AAA$ solving \SATBR{\HHH} in time $O(2^k|F|)$ and
  satisfying $T_\AAA(k)=3^k$. The result then follows from Theorem~\ref{the:sat-det}.

  Let $F$ be the given CNF formula with variables $V$, $k$ the given
  integer, and $B' \subseteq V$ with $|B'|\leq k$. The algorithm
  $\AAA$ first checks whether $F[\tau] \in \HHH$ for every
  assignment $\Bassgn{\tau}{B'}$. If that is the case then
  $\AAA$ returns \textsc{Yes}. Otherwise let 
  $\Bassgn{\tau}{B'}$ be an assignment such that
  $F[\tau] \notin \HHH$, we consider two cases: 
  \begin{enumerate}
  \item if $|B'|=k$, then there is no strong $\HHH$\hy backdoor set
    $B$ of
    $F$ with $B' \subseteq B$ and $|B|\leq k$ and the algorithm $\AAA$
    correctly returns the set $\mathcal{Q}=\emptyset$,
  \item otherwise, i.e., if $|B'|<k$, then
    we again distinguish two cases:
    \begin{enumerate}
    \item there is a clause $C \in F[\tau]$ with $C \notin \HHH$, 
      then we again distinguish two cases:
      \begin{enumerate}
      \item $|B|+|C|-1>k$, then the algorithm $\AAA$ returns the set
        $\mathcal{Q}=\emptyset$.

        \medskip
        To show the correctness of this step it is sufficient to show
        that $F$ does not have a strong $\HHH$\hy backdoor set $B$
        with $B' \subseteq B$ and $|B|\leq k$. Assume for a
        contradiction that this is not the case and let $B$ be
        strong $\HHH$\hy backdoor set of $F$ with $B' \subseteq B$
        and $|B|\leq k$. Because $|B'|+|C|-1>k$ we obtain that $B$
        misses at least two literals of $C$. Let $\Bassgn{\tau'}{B}$ be any assignment 
        that is compatible with $\tau$ and does not satisfy any literal from
        $C$. Then $F[\tau']$ contains a subclause of $C$ that contains at least two
        literals. It follows that $F[\tau'] \notin \HHH$,
        contradicting our assumption that $B$ is a strong
        $\HHH$-backdoor set for $F$. 
      \item otherwise, i.e., if $|B|+|C|-1 \leq k$, then $\AAA$
        returns the set $\mathcal{Q}=\SB \mbox{var}(C) \setminus \{v\} \SM v \in
        \mbox{var}(C) \SE$.

        \medskip
        To show the correctness of this step assume that there is a
        strong $\HHH$\hy backdoor set $B$ of $F$ with $B' \subseteq B$
        and $|B|\leq k$. It suffices to show that $|\mbox{var}(C)
        \setminus B|\leq 1$. Assume not, and let $\Bassgn{\tau^*}{B}$ be an assignment
        compatible with $\tau$ that does not satisfy any literal of $C$. 
        Then, $C[\tau^*] \in F[\tau^*]$ and because $|C[\tau^*]|>1$ and 
        $C[\tau^*] \notin \ZVAL$, also $F[\tau^*] \notin \HORN
        \cup \ZVAL$, contradicting our assumption that $B$ is a strong
        $\HHH$\hy backdoor set of $F$. 
      \end{enumerate}
    \item otherwise, there is a clause $C$ in $F[\tau]$ with $C \in
      \HORN \setminus \ZVAL$ and a clause $C'$ in $F[\tau]$ with $C \in
      \ZVAL \setminus \HORN$. Let $O$ be the set of variables
      containing the one variable from $C$ plus any two variables
      corresponding to two positive literals from $C'$. Then $\AAA$
      returns the set $\mathcal{Q}=\SB \{v\}\SM v \in O\SE$.

      \medskip
      To show the correctness of this step assume that there is a
      strong $\HHH$\hy backdoor set $B$ of $F$ with $B' \subseteq B$
      and $|B|\leq k$. It sufficies to show that $B \cap
      O \neq \emptyset$. Assume not, and let $\Bassgn{\tau^*}{B}$ be
      an assignment compatible with $\tau$ that does not satisfy any
      literal of $C'$. Then, $F[\tau^*]$ contains the clause $C$
      together with a subclause $C''$ of $C'$ containing at least two positive
      literals. Because $C \notin \ZVAL$ and $C'' \notin \HORN$, we
      obtain that $F[\tau^*] \notin \HHH$.
    \end{enumerate}
  \end{enumerate}
  This completes the description and the proof of correctness of the
  algorithm $\AAA$. The running time of $\AAA$ is the
  number of assignments of the at most $k$ variables in $B'$ times the
  time required to test whether the reduced formula is in $\HHH$,
  i.e., $\AAA$ runs in time $O(2^k|F|)$. It remains to obtaine
  the stated bound of $3^k$ for the function $T_\AAA(k)$.
  According to the two branching rules given in 
  (2.a.ii) and (2.b) we obtain that $T_\AAA(k)$ can be
  bounded by the maximum of the following recurrence relations:
  (1) $T_\AAA(k)=rT_\AAA(k-(r-1))$ for any $r \geq 2$ (2.a.ii) and
  (2) $T_\AAA(k)=3T_\AAA(k-1)$ (2.b). In case of (1) the maximum of $2^k$ is
  obtained for $T_\AAA(k)$ by setting $r=2$, and in case of (2) $T_\AAA(k)$ is at most $3^k$.
  It follows that $T_\AAA(k)\leq 3^k$.
\end{proof}

In our next result, we consider heterogeneous base classes comprised
of three Schaefer's classes, namely $\KROM \cup \HORN \cup \ZVAL$ and
$\KROM \cup \AHORN \cup \OVAL$. We refer the reader to
Figure~\ref{fig:branching_triples} for an overview of the branching
strategies employed here. 

\begin{THE}\label{the:sat-fpt-triple}
  \STBDET{(\KROM \cup \HORN \cup \ZVAL)} and \STBDET{(\KROM \cup \AHORN \cup \OVAL)}
  are fixed-parameter tractable parameterized by the size of the backdoor set.
\end{THE}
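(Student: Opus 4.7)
The plan is to invoke Theorem~\ref{the:sat-det} by designing an algorithm $\AAA$ for \SATBR{\HHH} with $\HHH = \KROM \cup \HORN \cup \ZVAL$ whose branching function $T_\AAA(k)$ is bounded by $c^k$ for a constant $c$ (the class $\HHH = \KROM \cup \AHORN \cup \OVAL$ is then handled by the symmetric argument interchanging positive and negative literals). Given input $(F,B',k)$ with some $\Bassgn{\tau}{B'}$ satisfying $F[\tau]\notin\HHH$, there exist clauses of $F[\tau]$ witnessing $F[\tau]\notin\KROM$ (a clause $C_K$ with $|C_K|\ge 3$), $F[\tau]\notin\HORN$ (a clause $C_H$ with at least two positive literals), and $F[\tau]\notin\ZVAL$ (a non-empty all-positive clause $C_Z$), possibly overlapping. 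The branching set $\mathcal{Q}$ is produced by a case analysis along the lines of Figure~\ref{fig:branching_triples}.

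If some clause $C\in F[\tau]$ has $|C|\ge 3$ and only positive literals, then $C$ witnesses all three non-memberships simultaneously, and the branching rule from Case A of Theorem~\ref{the:sat-fpt-krom} applies verbatim: one sets $\mathcal{Q}=\SB \mbox{var}(C)\setminus O \SM O\subseteq\mbox{var}(C),\ |O|=2\SE$, yielding $\binom{|C|}{2}$ branches of size $|C|-2$ (worst case $3^k$ at $|C|=3$). Otherwise, every non-empty all-positive clause of $F[\tau]$ has length at most $2$, and we pick $C_Z$ to be such a clause, which is either a unit $\{a\}$ or a pair $\{u,v\}$.

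If $C_Z=\{a\}$, I claim that any strong $\HHH$-backdoor $B\supseteq B'$ not containing $a$ yields that $B\setminus B'$ is a strong $(\KROM\cup\HORN)$-backdoor of $F[\tau]$: indeed, since $a$ is not touched by any $\tau^*\supseteq\tau$, the clause $\{a\}$ survives in $F[\tau^*]$, forcing $F[\tau^*]\notin\ZVAL$ and hence $F[\tau^*]\in\KROM\cup\HORN$. The algorithm therefore sets $\mathcal{Q}=\SB\{a\}\SE\cup\mathcal{Q}_{KH}$, where $\mathcal{Q}_{KH}$ is obtained by invoking the algorithm of Theorem~\ref{the:sat-fpt-krom} with $\CCC=\HORN$ on $F[\tau]$; since that algorithm produces at most $5$ size-$1$ branches, $\mathcal{Q}$ contains at most $6$ single-variable branches. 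Analogously, if $C_Z=\{u,v\}$, then any backdoor disjoint from $\{u,v\}$ keeps $\{u,v\}$ as a 2-positive clause in every $F[\tau^*]$, ruling out both $\HORN$ and $\ZVAL$ and forcing $F[\tau^*]\in\KROM$, so $B\setminus B'$ is a strong $\KROM$-backdoor of $F[\tau]$; picking a shortest $C_K$ in $F[\tau]$, if $|C_K|=3$ we set $\mathcal{Q}=\SB\{u\},\{v\}\SE\cup\mathcal{Q}_K$ (at most five size-$1$ branches) using the standard FPT $\KROM$-backdoor detection algorithm, while if $|C_K|\ge 4$ we invoke the FPT $\KROM$-backdoor test directly on $F[\tau]$ with budget $k-|B'|$ (returning its output as a $\HHH$-backdoor if it succeeds, and otherwise concluding that any $\HHH$-backdoor must intersect $\{u,v\}$, so $\mathcal{Q}=\SB\{u\},\{v\}\SE$ suffices).

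In every case $|\mathcal{Q}|$ and the sizes of the branches are bounded by constants, so $T_\AAA(k)\le c^k$ and Theorem~\ref{the:sat-det} yields the desired fixed-parameter tractable algorithm for \STBDET{\HHH}. The main technical obstacle will be the two subcase reductions in Case B: verifying that a strong $\HHH$-backdoor of $F$ that avoids the distinguished variables of $C_Z$ is automatically a strong backdoor for a strictly smaller union of Schaefer classes on $F[\tau]$. This rests on the observation that a clause whose variables all lie outside the backdoor survives unchanged in every reduction $F[\tau^*]$, so the class-membership obstructions it creates (non-$\ZVAL$ when $C_Z$ is a unit, and simultaneously non-$\HORN$ and non-$\ZVAL$ when $C_Z$ has size two) propagate to the whole formula and cannot be repaired by any assignment to the remaining backdoor variables.
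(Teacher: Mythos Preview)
Your overall strategy---designing a branching subroutine for \SATBR{\HHH} and invoking Theorem~\ref{the:sat-det}---matches the paper's, and your Case~A (an all-positive clause of length at least three) is exactly the paper's case~(2.a). Your Case~B is organized differently: rather than the paper's direct case analysis (does some clause lie outside $\KROM\cup\HORN$, or do separate $\KROM\setminus\HORN$ and $\HORN\setminus\KROM$ witnesses exist?), you observe that a backdoor avoiding the variables of the short all-positive witness $C_Z$ must already be a backdoor of $F[\tau]$ into a \emph{strictly smaller} union (either $\KROM\cup\HORN$ or $\KROM$ alone), and then recycle the branching rules of Theorem~\ref{the:sat-fpt-krom} or of plain $\KROM$-backdoor detection. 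This reduction idea is sound and arguably more modular than the paper's bespoke five-variable branching.

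There is, however, a genuine gap in your subcase $C_Z=\{u,v\}$ with $|C_K|\ge 4$. You propose to run the full FPT $\KROM$-backdoor detector on $F[\tau]$ and, if it succeeds with output $B''$, to ``return its output as a $\HHH$-backdoor.'' This is incorrect: $B''$ is only guaranteed to satisfy $(F[\tau])[\sigma]\in\KROM$ for all $\sigma$ on $B''$, i.e., it controls assignments of $B'\cup B''$ that \emph{agree with the one fixed $\tau$ on $B'$}. For any other assignment $\tau'$ of $B'$, nothing prevents $F[\tau'\cup\sigma]\notin\HHH$, so $B'\cup B''$ need not be a strong $\HHH$-backdoor of $F$. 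This step also breaks the \SATBR{\HHH} interface, which must output either \textsc{Yes} (meaning $B'$ itself is a backdoor) or a branching family $\mathcal{Q}$ satisfying~B2---not a candidate backdoor set.

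The fix is immediate and you already have it in the $|C_K|=3$ branch: drop the case distinction on $|C_K|$ altogether and always set $\mathcal{Q}=\{\{u\},\{v\}\}\cup\{\{w\}:w\in O\}$ for any three-element subset $O\subseteq\mbox{var}(C_K)$. Any strong $\KROM$-backdoor of $F[\tau]$ misses at most two variables of $C_K$ and hence intersects every such triple, so B2 holds. With this correction your argument goes through with at most six size-$1$ branches per step, yielding $T_\AAA(k)\le 6^k$ and hence fixed-parameter tractability.
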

\begin{proof}
  We only give the proof for the class $\KROM \cup \HORN \cup \ZVAL$, 
  since the proof for the class $\KROM \cup \AHORN \cup \OVAL$ is
  analogous.
  Let $\HHH=\KROM \cup \HORN\cup \ZVAL$. We will provide an algorithm
  $\AAA$ solving \SATBR{\HHH} in time $O(2^k|F|)$ and
  satisfying $T_\AAA(k)=9^k$. The result then follows from Theorem~\ref{the:sat-det}.

  Let $F$ be the given CNF formula with variables $V$, $k$ the given
  integer, and $B' \subseteq V$ with $|B'|\leq k$. The algorithm
  $\AAA$ first checks whether $F[\tau] \in \HHH$ for every
  assignment $\Bassgn{\tau}{B'}$. If that is the case then
  $\AAA$ returns \textsc{Yes}. Otherwise let 
  $\Bassgn{\tau}{B'}$ be an assignment such that
  $F[\tau] \notin \HHH$, we consider two cases: 
  \begin{enumerate}
  \item if $|B'|=k$, then there is no strong $\HHH$\hy backdoor set $B$ of
    $F$ with $B' \subseteq B$ and $|B|\leq k$ and the algorithm $\AAA$
    correctly returns the set $\mathcal{Q}=\emptyset$,
  \item otherwise, i.e., if $|B'|<k$, then
    we again distinguish two cases:
    \begin{enumerate}
    \item there is a clause $C \in F[\tau]$ with $C \notin \HHH$, then
      we again distinguish two cases:
      \begin{enumerate}
      \item $|B'|+|C|-2>k$, then the algorithm $\AAA$ returns the set
        $\mathcal{Q}=\emptyset$.

        \medskip
        To show the correctness of this step it is sufficient to show
        that $F$ does not have a strong $\HHH$\hy backdoor set $B$
        with $B' \subseteq B$ and $|B|\leq k$. Assume for a
        contradiction that this is not the case and let $B$ be
        strong $\HHH$\hy backdoor set of $F$ with $B' \subseteq B$
        and $|B|\leq k$. Because $|B'|+|C|-2>k$, we obtain that $B$
        misses at least three literals from $C$.
        Hence, there is an assignment $\Bassgn{\tau'}{B}$ that is compatible with
        $\tau$ such that $F[\tau']$ contains a clause $C' \subseteq C$ with
        $|C'|>2$. Clearly, $C' \notin \KROM \cup \ZVAL$ and because $C$
        contained only positive literals (because $C \notin \ZVAL)$) also 
        $C' \notin \HORN$. Hence, $C' \notin \KROM \cup \HORN \cup \ZVAL$ 
        contradicting our assumption that $B$ is a strong
        $\HHH$-backdoor set for $F$.
      \item otherwise, i.e., if $|B|+|C|-2 \leq k$, then the algorithm $\AAA$
        returns the set $\mathcal{Q}=\SB \mbox{var}(C \setminus O) \SM O
        \subseteq \mbox{var}(C) \land |O|=2 \SE$.

        \medskip
        To show the correctness of this step assume that there is a
        strong $\HHH$\hy backdoor set $B$ of $F$ with $B' \subseteq B$
        and $|B|\leq k$. It suffices to show that $|\mbox{var}(C) \setminus B|\leq
        2$. Assume not, and let $\Bassgn{\tau^*}{B}$ be an assignment
        compatible with $\tau$ that does not satisfy any literal of $C$. 
        Then, $C[\tau^*] \in F[\tau^*]$ and because $|C[\tau^*]|>2$ and 
        $C[\tau^*]$ contains only positive literals, also
        $F[\tau^*] \notin \KROM \cup \HORN \cup \ZVAL$, contradicting
        our assumption that $B$ is a strong $\HHH$-backdoor set for $F$.
      \end{enumerate}
    \item there is a clause $C \in F[\tau]$ with $C \in \KROM$ and $C
      \notin \ZVAL$, we again distinguish two cases:
      \begin{enumerate}
      \item there is a clause $C' \in F[\tau]$ with $C' \notin \KROM
        \cup \HORN$ (note that $C' \in \ZVAL$). Let $O$ be a set
        of variables containing all variables of $C$ (at most two) plus three
        variables of $C'$ of which two correspond to positive literals
        in $C'$. Then the algorithm $\AAA$ returns the set $\mathcal{Q}=\SB
        \{v\} \SM v \in O\SE$.

        \medskip
        To show the correctness of this step assume that there is a
        strong $\HHH$\hy backdoor set $B$ of $F$ with $B' \subseteq B$
        and $|B|\leq k$. It sufficies to show that $B \cap
        O \neq \emptyset$. Assume not and let $\Bassgn{\tau^*}{B}$ be an assignment
        compatible with $\tau$, which does not satisfy any literal of
        $C'$. Then, $F[\tau^*]$ contains $C$ as well as a subclause $C''$ of
        $C'$ of length at least three, which contains at least two
        positive literals. Then, $C'' \notin \HORN \cup KROM$ and because
        $C \notin \ZVAL$, we obtain that $F[\tau^*] \notin \KROM \cup
        \HORN \cup \ZVAL$, contradicting
        our assumption that $B$ is a strong $\HHH$-backdoor set for $F$.
      \item otherwise, there are clauses $C_\KROM, C_\HORN \in
        F[\tau]$ with $C_\KROM \in \KROM \setminus \HORN$ and $C_\HORN
        \in \HORN \setminus \KROM$. Let $O$ be a set of variables
        containing all variables of $C_\KROM$ (at most two)
        and three variables of $C_\HORN$. Then the algorithm $\AAA$
        returns the set $\mathcal{Q}=\SB \{v\}\SE v \in O\SE$.

        \medskip
        To show the correctness of this step assume that there is a
        strong $\HHH$\hy backdoor set $B$ of $F$ with $B' \subseteq B$
        and $|B|\leq k$. It sufficies to show that $B \cap
        O \neq \emptyset$. Assume not and let $\Bassgn{\tau^*}{B}$ be
        an assignment compatible with $\tau$, which does not satisfy any literal of
        $C_\HORN$. Then, $F[\tau^*]$ contains $C_\KROM$ as well as a subclause $C'$ of
        $C_\HORN$ of length at least three. Because $C_\KROM \in \KROM
        \setminus \HORN$ it follows that $C_\KROM \notin \ZVAL$. Because
        also $C' \notin \KROM$,
        we obtain that $F[\tau^*] \notin \KROM \cup \HORN \cup \ZVAL$, 
        contradicting
        our assumption that $B$ is a strong $\HHH$-backdoor set for $F$.
      \end{enumerate}
    \item there is a clause $C \in F[\tau]$ with $C \in \KROM$ and $C
      \notin \HORN$. In this case also $C \notin \ZVAL$ and hence the
      case is covered by case (2.b).
    \item The cases (2.a), (2.b), and (2.c) completely cover all
      possible cases, because if none of these cases apply then
      $F[\tau] \in \ZVAL$, which contradicts the choice of $\tau$. To
      see this assume there is a clause $C \in F[\tau]$ with $C \notin
      \ZVAL$. Because of the cases (2.b) and (2.c), we obtain that $C \notin
      \KROM$. Furthermore, because of case (2.a), we obtain that $C
      \in \HORN$. Hence, $C \in \HORN \setminus (\ZVAL \cup \KROM)$ but
      such a clause cannot exist.
    \end{enumerate}
  \end{enumerate}
  This completes the description and the proof of correctness of the
  algorithm $\AAA$. The running time of $\AAA$ is the
  number of assignments of the at most $k$ variables in $B'$ times the
  time required to test whether the reduced formula is in $\HHH$,
  i.e., $\AAA$ runs in time $O(2^k|F|)$. It remains to obtain
  the stated bound of $9^k$ for the function $T_\AAA(k)$.

  According to the three branching rules given in 
  (2.a.ii), (2.b.i), and (2.b.ii) we obtain that $T_\AAA(k)$ can be
  bounded by the maximum of the following recurrence relations:
  (1) $T_\AAA(k)=r^2T_\AAA(k-(r-2))$ for any $r>2$ (2.a.ii), 
  (2) $T_\AAA(k)=5T_\AAA(k-1)$ (2.b.i), 
  and (3) $T_\AAA(k)=5T_\AAA(k-1)$ (2.b.ii). In case of (2) and (3) $T_\AAA(k)$ is at most 
  $5^k$ and in case of (1) the maximum of $9^k$ is obtained for $T_\AAA(k)$
  by setting $r=3$.
  It follows that $T_\AAA(k)\leq 9^k$.

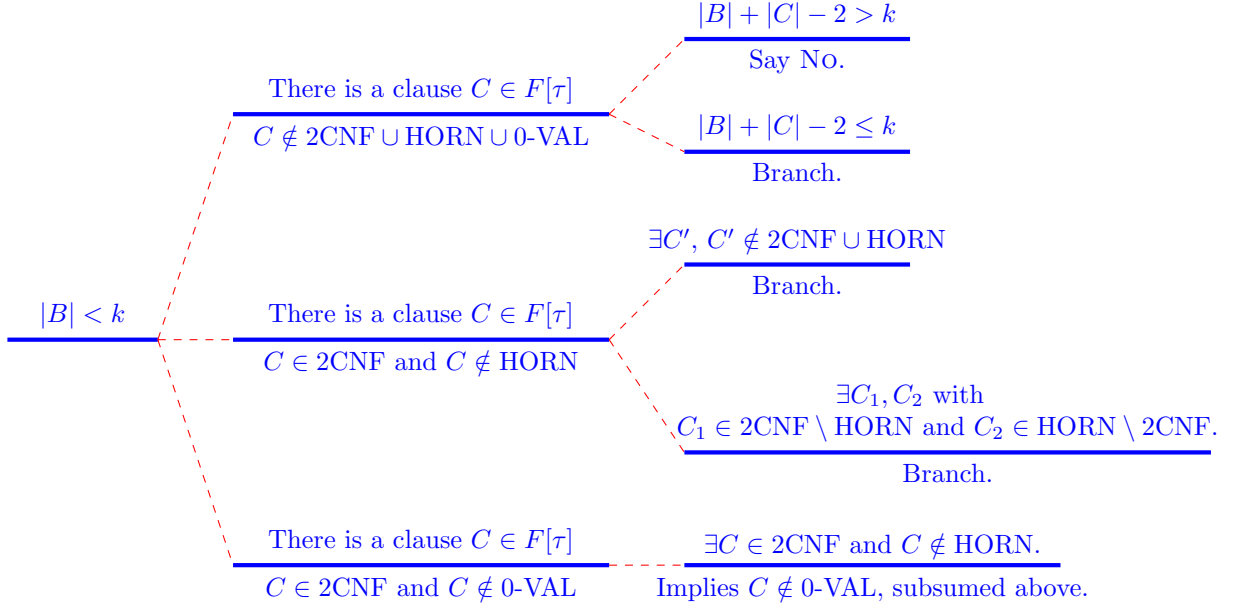
\begin{figure}
\begin{tikzpicture}
   % Draw all levels
  \draw[level] (0,0) -- node[above] {$|B| < k$} (2,0);

  \draw[connect] (2,0)  -- (3,3) (2,0) -- (3,-3) (2,0) -- (3,0);
  \draw[level]   (3,3)  -- node[above] {There is a clause $C \in F[\tau]$} node[below] {$C \notin \KROM \cup
      \HORN \cup \ZVAL$} (8,3);
  \draw[level]   (3,0)  -- node[above] {There is a clause $C \in F[\tau]$} node[below] {$C \in \KROM$ and $C \notin \HORN$} (8,0);
  \draw[level]   (3,-3)  -- node[above] {There is a clause $C \in F[\tau]$} node[below] {$C \in \KROM$ and $C \notin \ZVAL$} (8,-3);
  
  \draw[level]   (9,4)  -- node[above] {$|B| + |C| - 2 > k$} node[below] {Say \textsc{No}.} (12,4);
  \draw[level]   (9,2.5)  -- node[above] {$|B| + |C| - 2 \leq k$} node[below] {Branch.} (12,2.5);

  \draw[level]   (9,1)  -- node[above] {$\exists C'$, $C' \notin \KROM \cup \HORN$} node[below] {Branch.} (12,1);
  \draw[level]   (9,-1.5)  -- node[above] {$C_1 \in \KROM \setminus \HORN$ and $C_2        \in \HORN \setminus \KROM$.} node[below] {Branch.} (16,-1.5);

  \node[level] at (12,-0.75) {$\exists C_1, C_2$ with};

  \draw[level]   (9,-3)  -- node[above] {$\exists C \in \KROM$ and $C \notin \HORN$.} node[below] {Implies $C \notin \ZVAL$, subsumed above.} (14,-3);

  \draw[connect] (8,3)  -- (9,4) (8,3) -- (9,2.5) (8,0) -- (9,1) (8,0) -- (9,-1.5) (8,-3) -- (9,-3);

\end{tikzpicture}
\caption{The branching cases for Theorem~\ref{the:sat-fpt-triple}.}
\label{fig:branching_triples}
\end{figure}

\end{proof}

We say that a pair $(\CCC,\CCC')$ of Schaefer classes 
is a \emph{bad pair} if $\CCC \in \{\HORN,\ZVAL\}$ and $\CCC' \in
\{\AHORN,\OVAL\}$. Our next result establishes hardness for the case
when the base class contains a bad pair. 
\begin{THE}\label{the:sat-bp-w2}
  For every $S \subseteq \SCHAE$ that contains at least one
  bad pair of Schaefer classes, it holds that
  \STBDET{\CCC} is $\W[2]$\hy hard parameterized by the size of the backdoor set, where
  $\CCC=\bigcup_{s \in S}s$.
\end{THE}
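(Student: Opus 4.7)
The plan is to prove $\W[2]$-hardness by an fpt-reduction from \textsc{Hitting Set}, which is $\W[2]$-complete. Given a \textsc{Hitting Set} instance $(U, \mathcal{Q}, k)$ with $U = \{u_1, \dots, u_n\}$ and $\mathcal{Q} = \{S_1, \dots, S_m\}$, we construct in polynomial time a CNF formula $F$ and an integer $k' = f(k)$ such that $(U, \mathcal{Q})$ has a hitting set of size at most $k$ if and only if $F$ has a strong $\CCC$-backdoor of size at most $k'$, where $\CCC = \bigcup_{s \in S} s$. We introduce a variable $x_j$ for each $u_j \in U$, together with a bounded number of auxiliary variables, and we first pad the sets so that each $|S_i| \ge 3$, which preserves hitting sets and $\W[2]$-hardness.

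The gadget design exploits the following observation: a purely positive clause of length at least $3$ lies outside every Schaefer class except $\AHORN$ and $\OVAL$, and a purely negative clause of length at least $3$ lies outside every Schaefer class except $\HORN$ and $\ZVAL$. Since $S$ contains a bad pair $(\CCC_1, \CCC_2)$ with $\CCC_1 \in \{\HORN, \ZVAL\}$ and $\CCC_2 \in \{\AHORN, \OVAL\}$, any $F[\tau]$ that simultaneously contains a purely positive and a purely negative clause of length at least $3$ lies outside every Schaefer class in $S$ and hence outside $\CCC$. For each set $S_i$ we install a gadget over $\{x_j : u_j \in S_i\}$, comprising a purely positive obstructor clause, a purely negative obstructor clause, and a small linking sub-formula. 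The gadget is designed so that: (i)~if the backdoor contains no variable from $S_i$, then some assignment to the backdoor leaves both obstructors in $F[\tau]$, so that $F[\tau] \notin \CCC$; and (ii)~if every set is hit, then under every assignment to the backdoor the surviving residuals of the obstructors all have the same polarity, placing $F[\tau]$ in $\HORN \cap \ZVAL$ or in $\AHORN \cap \OVAL$ and therefore in $\CCC$.

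The main obstacle is arranging (ii) for ``mixed'' assignments: a naive per-set gadget would allow some $\tau$ to leave a positive obstructor from one set together with a negative obstructor from another, pushing $F[\tau]$ back outside $\CCC_1 \cup \CCC_2$. To force uniformity, the plan is to add a constant-size ``mode-switching'' sub-formula, controlled by a small fixed set of auxiliary variables that are always included in the backdoor, which forces every global assignment to the backdoor to select the same polarity across all gadgets. These auxiliary variables are arranged so that they cannot substitute for hitting a set, so the minimum backdoor size tracks the minimum hitting set size up to an additive constant absorbed in $k' = k + O(1)$. With these properties in hand, the correspondence is routine: forward, a hitting set together with the fixed auxiliaries forms a strong $\CCC$-backdoor of size $k'$; backward, any strong $\CCC$-backdoor of size at most $k'$ must by property~(i) contain a variable of every set $S_i$, and after discarding its (boundedly many) auxiliary variables it yields a hitting set of size at most $k$. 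Because the obstructor clauses are purely positive or purely negative of length at least $3$, they lie outside every Schaefer class that could appear in $S$ alongside the bad pair (notably $\KROM$), so the reduction is uniform over all $S$ containing a bad pair, establishing $\W[2]$-hardness of \STBDET{\CCC} in all such cases.
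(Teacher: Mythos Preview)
Your overall strategy---an fpt-reduction from \textsc{Hitting Set} using long purely positive and purely negative clauses as simultaneous obstructions to all Schaefer classes---matches the paper's. The genuine gap is the ``mode-switching'' step, which you identify as the main obstacle but do not construct. With both a positive obstructor $P_i$ and a negative obstructor $N_i$ attached to each set, there is a built-in tension between your two properties. If every assignment to the auxiliary set $A$ already satisfies all $P_i$ or all $N_i$ (which is what ``forcing the same polarity across all gadgets'' amounts to, since a bounded $A$ can only interact with unboundedly many obstructors through shared literals), then $A$ alone is a strong $\CCC$-backdoor: the surviving clauses are all of one sign and hence lie in $\AHORN\cap\OVAL\subseteq\CCC$ or in $\HORN\cap\ZVAL\subseteq\CCC$, so the backward direction collapses. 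If instead some assignment $\tau|_A$ leaves both a $P_i$ and an $N_{i'}$ unsatisfied, then on instances with disjoint sets $S_i,S_{i'}$ one can extend $\tau|_A$ by setting the hitting-set variables in $S_i$ to $0$ and those in $S_{i'}$ to $1$, leaving a long positive and a long negative clause in $F[\tau]$; so the forward direction fails. The unspecified ``linking sub-formulas'' do not touch the obstructors and hence cannot influence which of them survive.

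The paper avoids the coordination problem by an \emph{asymmetric} construction: one global negative clause $c_U=\{\neg u:u\in U\}$ together with one purely positive clause $c_Q$ per set $Q$, padded with fresh positive dummy variables so that all clauses have length~$|U|$. As soon as any backdoor variable is set to $0$, the clause $c_U$ is satisfied and only positive clauses remain; the sole dangerous assignment is the all-$1$ assignment, under which every $c_Q$ must be satisfied, forcing the backdoor (after swapping dummy variables for elements of $Q$) to hit every set. No mode variables are needed and $k'=k$. Replacing your per-set negative obstructors by a single shared negative clause repairs the plan.
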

\begin{proof}
  We give a parameterized reduction from
  the $\W[2]$\hy complete {\sc Hitting Set} problem.
  Given an instance $({\mathcal Q}, U,k)$ for {\sc Hitting
    Set}, construct a formula $F$ as follows. The variables of $F$ are
  ${\mathcal Q} \cup \SB d_Q^1,\dotsc,d_Q^{|U\setminus Q|} \SM Q \in
  \mathcal{Q}\SE$. 
  For each set $Q\in \mathcal{Q}$, there is one
  clause $c_Q = Q \cup \SB d_Q^1, \dotsc, d_Q^{|U\setminus Q|}\SE$. There is also one clause
  $c_{U} = \{\neg u : u\in U\}$. 
  This completes the description of the reduction.

  We claim that $\mathcal{Q}$ has a hitting set of size at most $k$ if
  and only if the formula $F$ has a strong $\CCC$-backdoor set of
  size at most $k$.
  Suppose $X \subseteq U$, $|X|\le k$, is a hitting set.
  To show that $X$ is also a strong $\CCC$-backdoor set, consider
  any assignment $\Bassgn{\tau}{X}$. If $\tau(x)=0$ for some $x\in X$, then
  $\tau$ satisfies the clause $c_{U}$. Thus, $F[\tau] \in
  \CCC$ since each clause in $F[\tau]$ contains no negative
  literal and at least one positive literal and is hence in $\OVAL \cap
  \AHORN$.
  If $\tau(x)=1$ for all $x\in X$, then all clauses $c_Q, Q\in
  \mathcal{Q}$, are satisfied by $\tau$ since $X$ is a hitting set. The
  only remaining clause is $\HORN$ and $\ZVAL$ since it has no positive
  literal and at least one negative literal.
  
  For the other direction, suppose that $X$ is a strong
  $\CCC$-backdoor set of size at most $k$. Obtain $X'$ from
  $X$ by replacing each $d_Q^i \in X$ for any $i$ with $1 \leq i \leq
  |U\setminus Q|$ by some variable from $Q$.
  
  The set $X'$ is also a strong $\CCC$-backdoor set of size at
  most $k$. 
  Therefore, the assignment $\Bassgn{\tau}{X'}$ with $\tau(x)=1$ for all
  $x\in X'$ must satisfy all clauses $c_Q, Q\in \mathcal{Q}$. Otherwise,
  $F[\tau]$ contains both a long clause containing only positive and a
  long clause containing only negative literals and hence $F[\tau]
  \notin \bigcup_{s \in S} s$.
  Thus, $X'$ is a hitting set for $\mathcal{Q}$ of size at most $k$.
\end{proof}

We now summarize our results in the following theorem. 

\begin{THE}\label{the:sat-dico}
  Let $S \subseteq \SCHAE$ and $\CCC=\bigcup_{s \in S}s$. 
  Then, \STBDET{\CCC} is fixed parameter tractable parameterized by the size of the backdoor set
  if $S$ does not contain a bad pair of Schaefer classes,
  otherwise it is $\W[2]$-hard.
\end{THE}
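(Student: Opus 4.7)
The plan is to prove Theorem~\ref{the:sat-dico} by a case analysis on $S$, assembling the tractability side from Theorems~\ref{the:sat-fpt-krom}, \ref{the:sat-fpt-horn-zval}, and~\ref{the:sat-fpt-triple} together with the known singleton-class results recalled at the beginning of this subsection, and the hardness side from Theorem~\ref{the:sat-bp-w2}.

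The hardness direction is immediate: whenever $S$ contains a bad pair, Theorem~\ref{the:sat-bp-w2} directly yields $\W[2]$-hardness of \STBDET{\CCC}, with no further argument needed.

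For the tractable direction, observe that $S$ avoids a bad pair precisely when $S\cap\{\HORN,\ZVAL\}=\emptyset$ or $S\cap\{\AHORN,\OVAL\}=\emptyset$. The literal-complementation symmetry, sending a CNF formula $F$ to the formula $F^c$ obtained by flipping the sign of every literal, exchanges $\HORN\leftrightarrow\AHORN$ and $\ZVAL\leftrightarrow\OVAL$ while fixing $\KROM$; it commutes with the reduction $F\mapsto F[\tau]$ up to a corresponding flip of $\tau$, and hence maps strong $\CCC$-backdoor sets bijectively to strong $\CCC^c$-backdoor sets of the same size. It therefore suffices to handle the $2^3=8$ subsets of $\{\KROM,\HORN,\ZVAL\}$: the empty subset is degenerate and trivially polynomial (no formula has a backdoor set into the empty class); the singletons $\{\HORN\}$, $\{\ZVAL\}$, $\{\KROM\}$ are FPT by the classical results cited; the pair $\{\HORN,\ZVAL\}$ is covered by Theorem~\ref{the:sat-fpt-horn-zval}; the pairs $\{\KROM,\HORN\}$ and $\{\KROM,\ZVAL\}$ are instances of Theorem~\ref{the:sat-fpt-krom}; and the triple $\{\KROM,\HORN,\ZVAL\}$ is Theorem~\ref{the:sat-fpt-triple}. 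Applying the symmetry then yields the analogous six subsets of $\{\KROM,\AHORN,\OVAL\}$, exhausting all $14$ bad-pair-free choices of $S$.

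The substantive work has already been discharged in the earlier theorems, so the only remaining step is to verify that the above case list is exhaustive and that the literal-flip symmetry faithfully transfers tractability between the two mirror families. Neither step presents a genuine obstacle; the most delicate of the invoked results is Theorem~\ref{the:sat-fpt-triple}, where three obstruction types interact, but that work is already in place and the present theorem reduces to the bookkeeping sketched above.
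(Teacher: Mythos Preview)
Your proposal is correct and follows essentially the same approach as the paper: both arguments are pure case analyses that dispatch to Theorems~\ref{the:sat-fpt-krom}, \ref{the:sat-fpt-horn-zval}, \ref{the:sat-fpt-triple}, the known singleton results, and Theorem~\ref{the:sat-bp-w2}. The only cosmetic difference is that the paper organizes the tractable cases by $|S|$, whereas you first invoke the literal-complementation symmetry to reduce to subsets of $\{\KROM,\HORN,\ZVAL\}$ and then enumerate; since Theorems~\ref{the:sat-fpt-horn-zval} and~\ref{the:sat-fpt-triple} already state both mirror versions, the paper simply lists both cases rather than appealing to the symmetry explicitly.
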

\begin{proof}
  If $S$ contains a bad pair then the result follows from
  Theorem~\ref{the:sat-bp-w2}. So assume that $S$ does not contain a
  bad pair, we distinguish the following cases:
   \begin{itemize}
   \item if $|S|=1$, then the theorem follows from known results, see,
     for example,~\cite{NishimuraRagdeSzeider04-informal,GaspersSzeider12}.
   \item if $|S|=2$, then either $\KROM \in S$ in which case the
     result follows from Theorem~\ref{the:sat-fpt-krom}, or
     $S=\{\HORN,\ZVAL\}$ or $S=\{\AHORN,\OVAL\}$ in which case the
     result follows from Theorem~\ref{the:sat-fpt-horn-zval}.
   \item if $|S|=3$, then either $\KROM \in S$ and hence 
     $S=\{\KROM,\HORN,\ZVAL\}$ or $S=\{\KROM,\AHORN,\OVAL\}$ in which
     case the result follows from Theorem~\ref{the:sat-fpt-triple}, or
     $\KROM \notin S$ in which case $S$ contains at least one bad pair,
     and the result follows from Theorem~\ref{the:sat-bp-w2}.
   \item if $|S|>3$, then $S$ contains at least one bad pair, and the 
     result follows from Theorem~\ref{the:sat-bp-w2}.
   \end{itemize}
\end{proof}

It is crucial for these hardness proofs that clauses have unbounded length.
Indeed, if clause-lengths are bounded or if we add the maximum clause
length to the parameter, then strong backdoor detection becomes FPT
for any combination of Schaefer classes.

\begin{THE}\label{the:sat-clause-sb}
  Let $\CCC$ be a class of CNF-formulas such that $\CCC=\bigcup_{s \in S}s$ 
  for some $S \subseteq \SCHAE$. Then, \STBDET{\CCC}  
  is fixed-parameter tractable 
  for the combined parameter $k$ and the
  maximum length $r$ of a clause of the input formula.
\end{THE}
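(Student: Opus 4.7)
The plan is to apply Theorem~\ref{the:sat-det} with a branching algorithm $\AAA$ for \SATBR{\CCC} whose branching factor depends only on $r$ (and on $|S| \le 5$). On input $F$, $B'$, $k$, the algorithm $\AAA$ first tests whether $F[\tau] \in \CCC$ for every $\Bassgn{\tau}{B'}$ by enumerating the at most $2^{|B'|}$ partial assignments and checking each of the (at most five) Schaefer memberships in polynomial time; if all reductions lie in $\CCC$, it returns \textsc{Yes}.

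Otherwise, fix an assignment $\Bassgn{\tau}{B'}$ with $F[\tau] \notin \CCC$. Since $\CCC = \bigcup_{s \in S}s$ and $F[\tau]$ lies in no $s \in S$, for each $s \in S$ there is a witness clause $C_s \in F[\tau]$ with $C_s \notin s$. Define $W = \bigcup_{s \in S}\mbox{var}(C_s)$. Each $C_s$ is a subclause of some clause of $F$, hence has length at most $r$, so $|W| \le |S|\cdot r \le 5r$. The algorithm returns $\mathcal{Q} = \SB \{v\} \SM v \in W\SE$.

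Correctness reduces to proving that every strong $\CCC$-backdoor set $B \supseteq B'$ with $|B| \le k$ must intersect $W$. Assume for contradiction that $B \cap W = \emptyset$ and pick any extension $\Bassgn{\tau^*}{B}$ of $\tau$. Since $B \setminus B'$ is disjoint from $\mbox{var}(C_s)$ for every $s$, applying $\tau^*$ to the original clause that produced $C_s$ under $\tau$ yields exactly $C_s$ again; thus $C_s \in F[\tau^*]$ for every $s \in S$, witnessing $F[\tau^*] \notin s$ for each such $s$. Hence $F[\tau^*] \notin \CCC$, contradicting the assumption that $B$ is a strong $\CCC$-backdoor set.

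Plugging $\AAA$ into Theorem~\ref{the:sat-det} gives $T_\AAA(k) \le (5r)^k$ together with a per-node cost of $O(2^k|F|)$, for a total running time of $O((5r)^k\cdot 2^k\cdot|F|)$, which is fixed-parameter tractable in $k+r$. The only conceptual step is the witness-clause observation that compresses each ``bad'' assignment into a pool of at most $5r$ candidate variables, at least one of which must be added to any extending backdoor; I do not expect any further obstacle beyond bookkeeping.
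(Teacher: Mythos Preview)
Your proposal is correct and follows essentially the same approach as the paper: fix a bad assignment $\tau$, pick one witness clause $C_s \notin s$ for each $s \in S$, and branch on the at most $|S|\cdot r$ variables occurring in these clauses. The only missing bit of bookkeeping is the boundary case $|B'|=k$, where $\AAA$ must return $\mathcal{Q}=\emptyset$ rather than the singleton family (otherwise condition~B1 of \SATBR{\CCC} is violated); with that trivial addition your argument coincides with the paper's.
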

\begin{proof}
  We will provide an algorithm
  $\AAA$ solving \SATBR{\CCC} in time $O(2^k|F|)$ and
  satisfying $T_\AAA(k)={(|S|r))}^{k+1}$. The result then follows from
  Theorem~\ref{the:sat-det}.

  Let $F$ be the given CNF formula with variables $V$, $k$ the given
  integer, and $B' \subseteq V$ with $|B'|\leq k$. The algorithm
  $\AAA$ first checks whether $F[\tau] \in \CCC$ for every
  assignment $\Bassgn{\tau}{B'}$. If that is the case then
  $\AAA$ returns \textsc{Yes}. Otherwise let 
  $\Bassgn{\tau}{B'}$ be an assignment such that
  $F[\tau] \notin \CCC$, we consider two cases:
  \begin{enumerate}
  \item if $|B'|=k$, then there is no strong $\CCC$\hy backdoor set $B$
    of $F$ with $B'\subseteq B$ and $|B|\leq k$ and the algorithm $\AAA$
    correctly returns the set $\mathcal{Q}=\emptyset$.
  \item otherwise, i.e., if $|B'|<k$, then let $C_s$ be an arbitrary
    clause in $F[\tau]$ with $C_s\notin s$ for every $s \in S$.
    The algorithm $\AAA$ then returns the set $\mathcal{Q}=\bigcup_{s\in S}\SB
    \{v\} \SM v \in \mbox{var}(C_s)\SE$.
    
    \medskip
    To show the correctness of this step assume that there is a
    strong $\CCC$\hy backdoor set $B$ of $F$ with $B' \subseteq B$
    and $|B|\leq k$. It sufficies to show that $B \cap \bigcup_{s \in
      S}\mbox{var}(C) \neq \emptyset$.
    Let $\Bassgn{\tau^*}{B}$ be any
    assignment of the variables in $B$ that agrees with $\tau$ on the
    variables in $B'$ and let $s \in S$ be such that $F[\tau^*] \in s$. 
    Because $B$ is a strong $\CCC$-backdoor set,
    $s$ clearly exists. We claim that $\mbox{var}(C) \cap B\neq
    \emptyset$ for every clause $C \in F[\tau]$ with $C \notin S$ (and
    hence in particular for the clause $C_s$ chosen by $\AAA$).
    Indeed,
    suppose not. Then $C \in F[\tau^*]$ a contradiction to our
    assumption that $F[\tau^*] \in s$.
  \end{enumerate}
  This completes the description and the proof of correctness of the
  algorithm $\AAA$. The running time of $\AAA$ is the
  number of assignments of the at most $k$ variables in $B'$ times the
  time required to test whether the reduced formula is in $\CCC$,
  i.e., $\AAA$ runs in time $O(2^k|F|)$. It remains to obtain
  the stated bound of ${(|S|r)}^{k+1}$ for the function $T_\AAA(k)$,
  which follows because $|\mathcal{Q}|\leq |S|r$ and $|Q|=1$ for every $Q \in \mathcal{Q}$.
\end{proof}

\subsection{Weak Backdoor Sets}

It turns out that the complexity of finding weak backdoors into
heterogeneous classes is tightly tied to the complexity of finding
weak backdoors into the individual base classes. Let us briefly
consider the intuition for this scenario before stating the
result. Suppose we are given algorithms for finding weak backdoors
into base classes $\CCC_1$ and $\CCC_2$. Observe that a CNF-formula
has a weak backdoor of size at most $k$ into $(\CCC_1 \cup \CCC_2)$
if, and only if, it has a weak backdoor of size at most $k$ into at
least one of $\CCC_1$ or $\CCC_2$. Thus, if we have a positive
algorithmic result for the weak backdoor question with respect to the
individual base classes, then these algorithms can be easily combined
to obtain an algorithm for finding weak backdoors into the
corresponding heterogeneous base class. On the other hand, a hardness
result for even one of the involved classes can usually be leveraged
to obtain a hardness result for the entire heterogeneous problem, by
adding gadgets that isolate the issue to only one of the base classes,
typically the one that was hard to begin with. In particular, we obtain
the following result.

\begin{THE}
\label{wbd:w-hard}
  Let $S \subseteq \SCHAE$ and $\CCC=\bigcup_{s \in S}s$. 
  Then, \WEBDET{\CCC} is $\W[2]$-hard parameterized by the size of the backdoor set.
\end{THE}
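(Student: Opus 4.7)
The plan is to reduce from the weak backdoor detection problem for an \emph{individual} Schaefer class. For every $s_0 \in \SCHAE$, the problem \WEBDET{s_0} is $\W[2]$-hard parameterized by the backdoor size; this is known for $\HORN$, $\AHORN$, and $\KROM$ via the results surveyed in \cite{NishimuraRagdeSzeider04-informal,GaspersSzeider12}, and for $\ZVAL$ and $\OVAL$ it follows by a short direct reduction from \textsc{Hitting Set} (hitting the purely positive, resp.\ purely negative, clauses). Fix any $s_0 \in S$; the plan is to give a parameterized reduction from \WEBDET{s_0} to \WEBDET{\CCC}.

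Given an instance $(F,k)$ of \WEBDET{s_0}, I would construct $F'$ by augmenting $F$ with a bank of \emph{blocker gadgets} on fresh variables. For every $s \in \SCHAE \setminus \{s_0\}$, I pre-select a constant-size clause $C_{s_0,s}$ lying in $s_0 \setminus s$, and add $k+1$ pairwise variable-disjoint copies of $C_{s_0,s}$ on fresh variables (also disjoint from $\mbox{var}(F)$ and from the variables of the other blockers). Such a blocker clause exists for every ordered pair of distinct Schaefer classes: for instance, $\{x,\neg y_1,\neg y_2\} \in \HORN \setminus (\AHORN \cup \KROM)$, $\{x\} \in \HORN \setminus \ZVAL$, $\{\neg y\} \in \HORN \setminus \OVAL$, $\{x_1,x_2,\neg y\} \in \ZVAL \setminus (\HORN \cup \KROM)$, and so on; the remaining cases are handled by analogous symmetric choices. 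The parameter is preserved, $k' := k$, and the construction is clearly polynomial.

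For correctness, one direction is immediate. Any weak $s_0$-backdoor $B \subseteq \mbox{var}(F)$ of $F$ witnessed by $\tau$ remains a weak $\CCC$-backdoor of $F'$: since $B$ touches no gadget variable, $F'[\tau]$ is the disjoint union of $F[\tau] \in s_0 \subseteq \CCC$ with all blocker clauses, each of which lies in $s_0$ and is independently satisfiable using its own fresh variables. Conversely, suppose $B'$ is a weak $\CCC$-backdoor of $F'$ of size $\le k$, witnessed by some $\tau'$. For each $s \in \SCHAE \setminus \{s_0\}$, a counting argument shows that $B'$ misses the variables of at least one of the $k+1$ variable-disjoint copies of $C_{s_0,s}$, so that untouched copy persists in $F'[\tau']$ and certifies $F'[\tau'] \notin s$; hence $F'[\tau'] \in \CCC$ forces $F'[\tau'] \in s_0$. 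Setting $B := B' \cap \mbox{var}(F)$ and $\tau := \tau'|_B$, we obtain $F[\tau] \subseteq F'[\tau'] \in s_0$, so every clause of $F[\tau]$ is in $s_0$; and satisfiability of $F[\tau]$ follows from that of $F'[\tau']$, since $F'[\tau']$ decomposes into variable-disjoint subformulas on $\mbox{var}(F)$ and on each gadget, so any satisfying assignment of $F'[\tau']$ restricts to one for $F[\tau]$.

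The main obstacle is merely the routine case analysis needed to exhibit a constant-size blocker $C_{s_0,s} \in s_0 \setminus s$ for every ordered pair of distinct Schaefer classes; some care is needed when $s_0 \in \{\ZVAL,\OVAL\}$, since the blockers must then themselves contain a negative (respectively, positive) literal while still violating membership of the target class, and one must verify that for a witnessing $\tau'$ yielding $F'[\tau'] \in s_0$, the touched blocker copies' restrictions remain compatible with $s_0$ (which they always can be made to be, by choosing $\tau'$ to satisfy each touched blocker copy, something the adversary is free to do since the gadgets are variable-disjoint from everything else).
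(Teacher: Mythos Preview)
Your proposal is correct and follows essentially the same route as the paper: fix one class $s_0\in S$, reduce from \WEBDET{s_0} (known to be $\W[2]$-hard), and pad the input with $k{+}1$ variable-disjoint copies of an $s_0$-obstruction to every other Schaefer class, so that any witnessing assignment on $\le k$ variables leaves at least one untouched obstruction per competing class and hence forces the reduced formula into $s_0$. The paper's proof is identical in structure (it calls the blocker clauses ``obstructions'' and provides the same pairwise case list).

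One remark on your final paragraph: the worry about ``touched blocker copies' restrictions remaining compatible with $s_0$'' is a non-issue, and the proposed fix (``choosing $\tau'$ to satisfy each touched blocker copy'') is not available to you, since $\tau'$ is the \emph{given} witness for the weak $\CCC$-backdoor, not something you get to select. Fortunately you don't need it: once the untouched blockers establish $F'[\tau']\notin s$ for every $s\in S\setminus\{s_0\}$, the hypothesis $F'[\tau']\in\CCC$ forces $F'[\tau']\in s_0$, which by definition means \emph{every} clause of $F'[\tau']$---touched-blocker residues included---already lies in $s_0$. So that verification is automatic and the paragraph can simply be dropped.
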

Before we proceed further, we introduce the notion of an obstruction
which we will use in the proof.  Specifically, for $S, S' \in
\SCHAE$, we say that a clause $C$ is an $S$\hy obstruction to $S'$ if
$C \in S$ and $C \notin S'$. For any pair of Schaefer classes $S$
and $S'$,  it is easy to construct examples of $S$\hy obstructions to
$S'$. In particular, we have:
\begin{enumerate}
\item $(\neg x, \neg y, \neg z)$ is a $\HORN$\hy obstruction to $\KROM$, and also a $\ZVAL$\hy obstruction to $\KROM$.
\item $(x,y,z)$ is an $\AHORN$\hy obstruction to $\KROM$, and also a $\OVAL$\hy obstruction to $\KROM$.
  
\item $(x, y, \neg z)$ is a $\ZVAL$\hy obstruction to $\HORN$.
\item $(x,y)$ is an $X$\hy obstruction to $\HORN$ for all $X \in
  \{\KROM$, $\AHORN$, $\OVAL\}$.
  
\item $(\neg x, \neg y, z)$ is a $\OVAL$\hy obstruction to $\AHORN$.
\item $(\neg x, \neg y)$ is an $X$\hy obstruction to $\AHORN$ for all $X \in \{\KROM$, $\HORN$, $\ZVAL\}$.
  
\item $(x)$ is an $X$\hy obstruction to $\OVAL$ for all $X \in \{ \KROM$, $\OVAL$, $\HORN$, $\AHORN\}$.
\item $(\neg x)$ is an $X$\hy obstruction to $\OVAL$ for all $X \in \{\KROM$, $\OVAL$, $\HORN$, $\AHORN\}$.
\end{enumerate}

  We are now ready to describe our reduction towards Theorem~\ref{wbd:w-hard}.

\begin{proof} If $|S| = 1$, then the result follows directly from~\cite[Proposition 1]{GaspersSzeider12}. Otherwise, let $S \in \CCC$. We reduce an instance $(F,k)$ of \WEBDET{S}  to \WEBDET{\CCC}.  To this end, for all $S' \in \CCC$ such that $S \neq S'$, introduce $(k+1)$ variable-disjoint copies of a $S$-obstruction to $S'$ using fresh variables for each obstruction, and in particular, disjoint from $\mbox{var}(F)$. We use $F'$ to denote the formula obtained from $F$ after adding these obstructions to $F$, and the reduced instance is  given by $(F',k)$. 

In the forward direction, if $X$ is a weak backdoor to $S$ for $F$, and if $\tau: X \rightarrow \{0,1\}$ is such that $F[\tau] \in S$, then observe that $F'[\tau] \in S$, since all the added obstructions (which are unaffected by $\tau$) were in $S$ by definition. It is also easy to see that any satisfying assignment for $\mbox{var}(F) \setminus X$ can be easily extended to an assignment involving the new variables that satisfy all the obstructions (recall that the obstructions were variable disjoint, and therefore can be easily satisfied independently). 

On the other hand,  let $X$ be a weak backdoor to $\CCC$ for $F'$. Since $X$ has at most $k$ variables, for any $\tau: X \rightarrow \{0,1\}$, $F'[\tau]$ contains at least one $S'$-obstructing clause for each $S' \in \CCC$ such that $S' \neq S$. Therefore, $F'[\tau]$ must necessarily belong to $S$, and therefore $X \cap \mbox{var}(F)$ is easily checked to be a weak backdoor to $S$ for $F$, as desired. 
\end{proof}

On the other hand, as with the strong backdoors, we do obtain tractability when we have bounded clause lengths. We establish this formally below. 

\begin{THE}\label{the:sat-clause-wb}
  Let $\CCC$ be a class of CNF-formulas such that $\CCC=\bigcup_{s \in S}s$ 
  for some $S \subseteq \SCHAE$. Then, \WEBDET{\CCC} 
  is fixed-parameter tractable for the combined parameter $k$ and the
  maximum length $r$ of a clause of the input formula.
\end{THE}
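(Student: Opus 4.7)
The plan is to reduce \WEBDET{\CCC} to the special case of a single Schaefer class by exploiting a very simple structural observation that is specific to weak backdoors: a CNF formula $F$ has a weak $\CCC$\hy backdoor of size at most $k$ into $\CCC = \bigcup_{s\in S}s$ if and only if there exists some $s\in S$ such that $F$ has a weak $s$\hy backdoor of size at most $k$. The nontrivial direction uses that a weak backdoor comes equipped with a single witness assignment $\tau$, and $F[\tau]\in\CCC$ forces $F[\tau]\in s$ for some $s\in S$. Since $|S|\le 5$, it suffices to solve \WEBDET{s} for each individual $s\in\SCHAE$ in FPT time parameterized by $k+r$ and return \textsc{Yes} iff any of these calls succeeds.

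For a fixed single Schaefer class $s$, I would run a depth-bounded branching procedure on partial assignments. At a node labeled by $\Bassgn{\tau'}{B'}$ with $|B'|\le k$, first test whether $F[\tau']\in s$; if so, decide satisfiability of $F[\tau']$ in polynomial time using the well-known Schaefer-class SAT algorithm (unit propagation for $\HORN/\AHORN$, implication graphs for $\KROM$, or constant assignment for $\ZVAL/\OVAL$). If $F[\tau']$ is satisfiable, report the weak backdoor $B'$ (padded arbitrarily to size $k$). Otherwise pick any clause $C\in F[\tau']$ with $C\notin s$; since the maximum clause length of $F$ is $r$ and reduction only shortens clauses, $|C|\le r$. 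Branch on the at most $2r$ extensions $\tau'\cup\{v\mapsto\epsilon\}$ for $v\in\mbox{var}(C)$ and $\epsilon\in\{0,1\}$.

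Correctness of the branching rule is the main (and only mildly delicate) point. Let $B$ be any weak $s$\hy backdoor with witness $\Bassgn{\tau}{B}$ extending $\tau'$. I would argue that $\mbox{var}(C)\cap(B\setminus B')\neq\emptyset$: if not, then the variables in $B\setminus B'$ do not occur in the original clause $C^*\in F$ of which $C$ is the $\tau'$\hy reduction, so the full reduction yields $C^*\setminus\text{false}(\tau)=C^*\setminus\text{false}(\tau')=C$ and $C^*\cap\text{true}(\tau)=\emptyset$, putting $C$ into $F[\tau]$ and contradicting $F[\tau]\in s$. Hence $\tau$ assigns some $v\in\mbox{var}(C)$ to some $\epsilon\in\{0,1\}$, and the corresponding child of the branching step is consistent with $\tau$, so by induction the branching finds a weak backdoor whenever one exists.

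Finally I would bound the running time: the search tree has branching factor at most $2r$ and depth at most $k$, giving at most $(2r)^k$ leaves, with polynomial work per node for class-membership and satisfiability checks. Summed over the at most $|S|\le 5$ choices of $s\in S$, the total running time is $O(|S|\cdot(2r)^k\cdot\textup{poly}(|F|))$, which is fixed-parameter tractable in $k+r$, as required.
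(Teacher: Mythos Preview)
Your proof is correct and takes a genuinely different route from the paper's own argument. The paper builds a single search tree whose nodes are labeled by variable sets $B'$; at each node it enumerates all $2^{|B'|}$ assignments of $B'$, and for each assignment $\tau$ with $F[\tau]\notin\CCC$ it branches over every $s\in S$ and every variable in some $s$-violating clause. This yields a branching factor of $2^k|S|r$ and a total bound of $O((2^k|S|r)^{k+1}2^k|F|)$.

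You do two things differently. First, you use the decomposition observation (which the paper mentions in prose but does not exploit in this proof) that a weak $\CCC$\hy backdoor exists iff a weak $s$\hy backdoor exists for some $s\in S$; this lets you run $|S|\le 5$ independent single-class searches. Second, and more significantly, you branch on \emph{partial assignments} rather than variable sets: each step fixes one more variable to a specific value, so you never pay the $2^{|B'|}$ enumeration cost, and the branching factor drops to $2r$. The result is the sharper bound $O(|S|(2r)^k\cdot\mathrm{poly}(|F|))$.

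Two small points of presentation you should tighten. Your ``Otherwise'' after the satisfiability test is ambiguous: if $F[\tau']\in s$ but is unsatisfiable, there is no clause $C\notin s$ to pick. You should say explicitly that this branch is pruned; this is sound because any extension $\tau\supseteq\tau'$ still leaves $F[\tau]$ unsatisfiable, so no witness along this branch exists. Likewise, make the depth-$k$ cutoff explicit (if $|B'|=k$ and $F[\tau']\notin s$, abandon the branch). With these clarifications the argument is complete.
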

\begin{proof}
  The algorithm uses a depth-bounded
  search tree approach to find a weak $\CCC$-backdoor set
  of size at most $k$. Let $F$ be any CNF-formula with variables $V$
  and let $r$ be the maximum length of any clause of $F$.

  We construct a search tree $T$, for which
  every node is labeled by a set $B$ of at most $k$ variables of
  $V$. Additionally, every leaf node has a second label,
  which is either \textsc{Yes} or \textsc{No}. 
  $T$ is defined inductively as follows. The root of $T$ is labeled by
  the empty set. Furthermore, if $t$ is a node of $T$,
  whose first label is $B$, the
  children of $t$ in $T$ are obtained as follows.

  If there is an assignment $\Bassgn{\tau}{B}$, such that $F[\tau] \in
  \CCC$ and $F[\tau]$ is satisfiable, then $B$ is a
  weak $\CCC$-backdoor set of size at most $k$, and hence $t$ becomes a
  leaf node, whose second label is \textsc{Yes}. Otherwise, i.e., 
  for every assignment $\Bassgn{\tau}{B}$ either $F[\tau] \notin \CCC$ or
  $F[\tau] \in \CCC$ but $F[\tau]$ is not satisfiable, we
  consider two cases: (1) $|B|=k$, then $t$ becomes a leaf node, whose
  second label is \textsc{No}, and (2) $|B|<k$,
  then for every assignment $\Bassgn{\tau}{B}$ such that $F[\tau] \notin
  \CCC$, for every $s \in S$, and every variable $v$ 
  that occurs in some clause $C \notin s$, 
  $t$ has a child whose first label is $B \cup \{v\}$.

  If $T$ has a leaf node, whose second label is \textsc{Yes}, then the
  algorithm returns the first label of that leaf node. Otherwise the
  algorithm returns \textsc{No}. This completes the description of the
  algorithm.

  We now show the correctness of the algorithm. First, suppose 
  the search tree $T$ built by the algorithm has a leaf node $t$ whose
  second label is \textsc{Yes}. Here, the algorithm returns the first label, say
  $B$ of $t$. By definition of $T$, we obtain that $|B|\leq k$ and 
  the set $B$ is a weak $\CCC$\hy backdoor set of $F$, as required.

  Now consider the case where the algorithm returns
  \textsc{No}. We need to show that there is no weak
  $\CCC$\hy backdoor set of size at most $k$ for $F$. 
  Assume, for the sake of contradiction that such a set $B$ exists.

  Observe that if $T$ has a leaf node
  $t$ whose first label is a set $B'$ with $B' \subseteq B$,
  then the second label of $t$ must be \textsc{Yes}. This is because,
  either $|B'|<k$ in which case the second label of $t$ must be
  \textsc{Yes}, or $|B'|=k$ in which case $B'=B$ and by the definition
  of $B$ it follows that the second label of $t$ must be \textsc{Yes}.

  It hence remains to show that $T$ has a leaf node whose first label is a set $B'$
  with $B' \subseteq B$. This will complete the proof about the
  correctness of the algorithm. We will show a slightly stronger
  statement, namely, that for every natural number $\ell$, either $T$
  has a leaf whose first label is contained in
  $B$ or $T$ has an inner node of distance exactly $\ell$ from the root
  whose first label is contained in $B$. We show the latter by
  induction on $\ell$.

  The claim obviously holds for $\ell=0$. So assume that $T$ contains a
  node $t$ at distance $\ell$ from the root of $T$ whose first label, say
  $B'$, is a subset of $B$. 
  If $t$ is a leaf node of $T$, then the
  claim is shown. Because $B$ is a weak $\CCC$\hy backdoor set for
  $F$, there is an assignment $\Bassgn{\tau^*}{B}$ such that $F[\tau^*] \in
  \CCC$ and $F[\tau^*]$ is satisfiable. In particular, there is some
  $s \in S$ such that $F[\tau^*] \in s$. Let $\Bassgn{\tau}{B'}$ be the
  assignment that agrees with $\tau^*$ on the variables of
  $B'$. Because $t$ is not a leaf node, there is a clause $C \in
  F[\tau]$ with $C \notin s$ such that $t$ has a child $t'$ for
  every $v \in \mbox{var}(C)$.
  We claim that $\mbox{var}(C) \cap B \neq \emptyset$ and hence $t$ has
  a child, whose first label is a subset of $B$, as required. Indeed,
  suppose not. Then $C \in F[\tau^*]$ a contradiction to our
  assumption that $F[\tau^*] \in s$.
  This concludes our proof concerning the correctness of the algorithm.

  The running time of the algorithm is obtained as
  follows. Let $T$ be a search tree obtained by the
  algorithm. Then the running time of the depth-bounded search tree
  algorithm is $O(|V(T)|)$ times the maximum time that is spent on any
  node of $T$. Since the number of children of any node of $T$ is
  bounded by $2^k|S|r$ (recall that $r$ denotes the maximum 
  length of any clause of $F$) and the
  longest path from the root of $T$ to some
  leaf of $T$ is bounded by $k+1$, we obtain that $|V(T)| \leq
  O({(2^k|S|r)}^{k+1})$. Furthermore, the time
  required for any node $t$ of $T$
  is at most $O(2^k|F|)$.
  Putting everything together, we obtain
  $O((2^k|S|r)^{k+1}2^{k}|F|)$,
  as the total running time of the algorithm.
  This
  shows that \WEBDET{\CCC} is fixed-parameter tractable
  parameterized by $k$, $r$.
\end{proof}

\smallskip
We close this section by noting that backdoor sets with \emph{empty
  clause detection}, as proposed by \cite{DilkinaGomesSabharwal07} can
be considered as backdoor sets into the heterogeneous base class
obtained by the union of a homogeneous base class $\CCC$ and the class
of all formulas that contain the empty clause. The detection of strong
backdoor sets with empty clause detection is not fixed-parameter
tractable for many natural base classes, including Horn and 2CNF
\cite{Szeider08c}.

\section{Base Classes via Closure Properties}

In this section we provide a very general framework that will give
rise to a wide range of heterogeneous base classes~for~CSP.

\sloppypar
Given an $r$-ary relation $R$ over some domain $D$ and an 
$n$-ary operation $\phi:D^n \rightarrow D$, we say that $R$ is {\em closed under
  $\phi$}, if for all collections of $n$ tuples $t_1,\dotsc,t_n$ from
$R$, the tuple $\tuple{\phi(t_1[1],\dotsc,t_n[1]),
  \dotsc,\phi(t_1[r],\dotsc,t_n[r])}$ belongs to $R$. The operation 
$\phi$ is also said to be a {\em polymorphism of $R$}. 
We denote by $\Fun(R)$ the set of all operations $\phi$ such that
$R$ is closed under $\phi$.

Let $\cspi=\tuple{V,D,C}$ be a CSP instance and $c \in C$.  We write
$\Fun(c)$ for the set $\Fun(R(c))$ and we write $\Fun(\cspi)$ for the
set $\bigcap_{c \in C}\Fun(c)$.  We say that $\cspi$ is closed under an
operation $\phi$, or $\phi$ is a polymorphism of $\cspi$, if $\phi \in \Fun(\cspi)$.

We say an operation $\phi$ is \emph{tractable} if every CSP instance
closed under $\phi$ can be solved in polynomial time.

Let $\prop{P}(\phi)$ be a predicate for the operation $\phi$.  We call
$\prop{P}(\phi)$ a \emph{tractable polymorphism property} if the following
conditions hold.
\begin{itemize}
\item There is a constant $c_{\prop{P}}$ such that for all finite
  domains $D$, all operations $\phi$ over $D$ with property
  $\prop{P}$ are of arity at most $c_{\prop{P}}$.
\item Given an operation $\phi$ and a domain $D$, one can check in polynomial time
  whether $\prop{P}(\phi)$ holds on all of the at most $D^{c_{\prop{P}}}$ tuples over $D$,
\item Every operation with property $\prop{P}$ is tractable.
\end{itemize}

Every tractable polymorphism property $\prop{P}$ gives rise to a natural
base class $\CCC_{\prop{P}}$ consisting of all CSP-instances $\cspi$ such that
$\Fun(\cspi)$ contains some polymorphism $\phi$ with $\prop{P}(\phi)$.

In the following we will provide several illustrative examples for
classes of CSP instances that can be defined via tractable
polymorphism predicates. The chosen examples are mostly based on the
Schaefer classes or important generalizations thereof. 
For the definition of the examples we need to define the following
types of operations.
\begin{itemize}
\item An operation $\phi : D^n \rightarrow D$ is \emph{idempotent}
  if for every $d \in D$ it holds that $\phi(d,\dotsc,d)=d$;
\item An operation $\phi : D \rightarrow D$ is \emph{constant} if
  there is a $d \in D$ such that for every $d' \in D$, it
  holds that $\phi(d')=d$;
\item An operation $\phi : D^2 \rightarrow D$ is a
  \emph{min}/\emph{max} operation if there is an ordering of the
  elements of $D$ such that for every $d,d' \in D$, it holds that
  $\phi(d,d')=\phi(d',d)=\min\{d,d'\}$ or
  $\phi(d,d')=\phi(d',d)=\max\{d,d'\}$, respectively;
\item An operation $\phi : D^3 \rightarrow D$ is a \emph{majority}
  operation if for every $d,d' \in D$ it holds that
  $\phi(d,d,d')=\phi(d,d',d)=\phi(d',d,d)=d$;
\item An operation $\phi : D^3 \rightarrow D$ is an \emph{minority}
  operation if for every $d,d' \in D$ it holds
  that $\phi(d,d,d')=\phi(d,d',d)=\phi(d',d,d)=d'$;
\item An operation $\phi : D^3 \rightarrow D$ is a \emph{Mal'cev}
  operation
  if for every $d,d' \in D$ it holds that $\phi(d,d,d')=\phi(d',d,d)=d'$.
\end{itemize}

It is known
that every constant, min/max, majority, minority, and Mal'cev
operation is tractable~\cite{JeavonsCohenGyssens97,BulatovDalmau06}. 
We denote by $\CON$, $\MINMAX$, $\MAJ$, $\AFF$, and $\MAL$ the
class of CSP instances $\cspi$ for which $\Fun(\cspi)$ contains a
constant, a min/max, a majority, a minority, or a Mal'cev
polymorphism, respectively.

Thus $\CON$, $\MINMAX$, $\MAJ$, $\AFF$, and $\MAL$ are the
classes $\CCC_{\prop{P}}$ for $\prop{P} \in \{$constant,  min/max,  majority,  minority,  Mal'cev$\}$, respectively.

In terms of the above definitions we can state the results of
\cite{JeavonsCohenGyssens97} and \cite{BulatovDalmau06} as follows.
\begin{PRO}\label{pro:schaefer-nice}
  Constant, min/max, majority, minority, and Mal'cev are tractable
  polymorphism properties.
\end{PRO}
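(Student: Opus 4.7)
The plan is to verify the three conditions in the definition of a tractable polymorphism property for each of the five properties in turn. Since the definitions are explicit, most of this reduces to reading off arities and invoking known tractability theorems; the argument is largely bookkeeping rather than containing a genuine obstacle.

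For the first condition, I would simply read off the arity bound from the definition of each property: constant operations have arity $c_{\prop{P}} = 1$, min/max operations have arity $c_{\prop{P}} = 2$, and majority, minority, and Mal'cev operations all have arity $c_{\prop{P}} = 3$. Thus each property forces a fixed arity (not just a bound), and in particular the arity is bounded by the absolute constant $3$ across all five cases.

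For the second condition, given a candidate operation $\phi : D^n \to D$ of arity $n \le 3$, checking the property reduces to verifying a universally quantified identity on $D$. For instance, to check that $\phi$ is a majority operation I would iterate over all pairs $(d,d') \in D^2$ and confirm that $\phi(d,d,d') = \phi(d,d',d) = \phi(d',d,d) = d$; the minority and Mal'cev checks are analogous, the min/max check requires exhibiting an order on $D$ under which $\phi$ realises the minimum (or maximum) on all pairs, and the constant check is a single scan over $D$. In each case the number of tuples to inspect is at most $|D|^{c_{\prop{P}}} \le |D|^3$, so the verification runs in polynomial time in $|D|$ (and in the size of the tabular representation of $\phi$).

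For the third condition, tractability of CSP-instances whose polymorphism set contains a constant, min/max, majority, or Mal'cev operation follows from the classical results of \citet{JeavonsCohenGyssens97}, and tractability in the minority case (a special instance of the Mal'cev case) is covered by \citet{BulatovDalmau06}; these are exactly the citations attached to the statement just before the proposition. Combining the three bullet points for each of the five properties yields the proposition. No step presents a substantive obstacle; the only mild subtlety is the min/max case, where the witness order must be recovered from $\phi$, but this is handled by the polynomial-time check above.
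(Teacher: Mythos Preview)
Your proposal is correct. The paper itself does not give an explicit proof of this proposition; it simply presents it as a restatement of the cited results of \citet{JeavonsCohenGyssens97} and \citet{BulatovDalmau06}, leaving the routine verification of the three defining conditions implicit. Your write-up fills in exactly those details---reading off the arity bounds, describing the polynomial-time property checks (including the mild subtlety of recovering the witnessing order in the min/max case), and invoking the same tractability citations---so your approach is a faithful elaboration of what the paper leaves unsaid.
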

We want to note here that Proposition~\ref{pro:schaefer-nice} also applies for
other much more general types of operations such as semilattice
operations (sometimes called ACI operation)~\cite{JeavonsCohenGyssens97} (a
generalization of min/max),
$k$-ary near unanimity
operations~\cite{JeavonsCohenCooper98,FederVardi98} (a generalization
of majority), $k$-ary edge
operations~\cite{IdziakMarkovicMcKenzieValerioteWillard10} (a
generalization of Mal'cev), and the two operations of arities three and
four~\cite{KozikKrokhinValerioteWillard15} that capture the bounded width property~\cite{BartoKozik14} (a
generalization of semilattice and near unanimity operations).

In the next sections we will study the problems 
\STBDET{\CCC_{\prop{P}}} for tractable polymorphism properties  $\prop{P}$.

\section{Tractability of Backdoor Detection for CSP}

In this section we will show that both \STBDET{\CCC_{\prop{P}}} and \WEBDET{\CCC_{\prop{P}}}
parameterized by the size of the backdoor
set, the size of the domain, and the maximum arity of the CSP instance
are fixed-parameter tractable for any tractable polymorphism property~$\prop{P}$.
We start by giving the tractability results for strong backdoor sets.
\begin{THE}\label{the:fpt-csp}
  Let $\prop{P}$ be a tractable polymorphism property. Then
  \STBDET{\CCC_{\prop{P}}} is fixed-parameter tractable for the
  combined parameter size of the backdoor set, size of the domain, and
  the maximum arity of the given CSP instance.
\end{THE}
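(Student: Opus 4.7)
The plan is to adapt the branching framework used earlier for SAT (Theorem~\ref{the:sat-det}) to the CSP setting, using a depth-bounded search tree that grows the backdoor by adding one variable per level. The critical enabler is the definition of a tractable polymorphism property: every operation $\phi$ with $\prop{P}(\phi)$ has arity at most $c_{\prop{P}}$, so the set $\Phi$ of all such operations on the fixed domain $D$ has cardinality at most $c_{\prop{P}} \cdot d^{d^{c_{\prop{P}}}}$, which depends only on $d$. This set can be enumerated explicitly, and for any reduced CSP instance $\cspi'$ one can test whether $\cspi' \in \CCC_{\prop{P}}$ in time bounded by a function of $d$ and $r$ by iterating over $\Phi$ and, for each candidate $\phi$, verifying constraint by constraint that $R(c)$ is closed under $\phi$ (using that $|R(c)| \leq d^r$).

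At a search-tree node labelled by a partial backdoor $B' \subseteq V$ with $|B'| \leq k$, the algorithm first iterates over all $d^{|B'|}$ assignments $\tau : B' \rightarrow D$ and checks whether $\cspi[\tau] \in \CCC_{\prop{P}}$. If every $\tau$ is good, $B'$ is a strong backdoor and we report success. Otherwise, fix a bad assignment $\tau$; by definition, no $\phi \in \Phi$ is a polymorphism of $\cspi[\tau]$, so for each $\phi \in \Phi$ we can locate a constraint $c_\phi \in C$ such that $c_\phi[\tau]$ is not closed under $\phi$. Set
\[
 W_\tau \;=\; \bigcup_{\phi \in \Phi}\bigl(V(c_\phi) \setminus B'\bigr),
\]
so that $|W_\tau| \leq |\Phi| \cdot r$, a function of $d$ and $r$ alone. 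The branching rule is to recurse on $B' \cup \{v\}$ for each $v \in W_\tau$.

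The heart of the correctness argument is the claim that every strong $\CCC_{\prop{P}}$-backdoor $B^* \supseteq B'$ of size at most $k$ must intersect $W_\tau$. Indeed, if $B^* \cap W_\tau = \emptyset$, then for every $\phi \in \Phi$ we have $V(c_\phi) \cap B^* = V(c_\phi) \cap B'$, hence $c_\phi[\tau^*] = c_\phi[\tau]$ for every extension $\tau^* : B^* \rightarrow D$ of $\tau$. Non-closure of $c_\phi[\tau]$ under $\phi$ therefore lifts to non-closure of $c_\phi[\tau^*]$ under $\phi$, so no $\phi \in \Phi$ is a polymorphism of $\cspi[\tau^*]$, contradicting $\cspi[\tau^*] \in \CCC_{\prop{P}}$. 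Plugged into the abstract depth-bounded search-tree scheme of Theorem~\ref{the:sat-det} (which transfers from SAT to CSP with only cosmetic changes, since that theorem's analysis is purely in terms of the branching factor of its subroutine), this yields a search tree of depth at most $k$ and branching factor at most $|\Phi| \cdot r$, with only $g(d,r) \cdot |\cspi|^{O(1)}$ work per node, hence a total running time of the form $h(k, d, r) \cdot |\cspi|^{O(1)}$.

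The main obstacle is the obstruction-lifting step: it relies crucially on the fact that the set $\Phi$ of candidate polymorphisms can be enumerated a priori from $D$, so that the bad constraints $c_\phi$ can be identified at each branching step and their scopes collectively form a small set of branching candidates. Without the arity bound $c_{\prop{P}}$ from the definition of a tractable polymorphism property, neither the enumeration of $\Phi$ nor the bound on $|W_\tau|$ would depend only on $d$ and $r$, and the FPT running time would be lost; with this bound in hand, the argument goes through cleanly.
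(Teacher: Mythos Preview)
Your proposal is correct and follows essentially the same approach as the paper: both enumerate the finite set of operations with property $\prop{P}$ on $D$, build a depth-bounded search tree that at each node finds a bad assignment $\tau$, selects for every candidate polymorphism $\phi$ a witness constraint $c_\phi$ in $\cspi[\tau]$ not closed under $\phi$, and branches on the variables in the union of these scopes, with the same obstruction-lifting argument for correctness. The only cosmetic difference is that you invoke the abstract branching framework of Theorem~\ref{the:sat-det} (adapted to CSP), whereas the paper re-derives the search-tree construction and its running-time analysis from scratch.
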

\begin{proof}
  Let $\prop{P}$ be a tractable polymorphism property,  and let
  $\tuple{\cspi,k}$ with $\cspi=\tuple{V,D,C}$ be an instance of
  \STBDET{\CCC_{\prop{P}}}. Let $P$ be the set of all operations on
  $D$ that have property $\prop{P}$. Then, $P$ can be constructed in
  fpt-time with respect to the size of the domain
  , because there are at most
  $|D|^{|D|^{c_{\prop{P}}}}$ $c_{\prop{P}}$-ary operations on $D$
  and for each of them we can test in polynomial time, $|D|^{O(c_{\prop{P}})}$, whether it
  satisfies property $\prop{P}$.  The algorithm uses a depth-bounded
  search tree approach to find a strong $\CCC_{\prop{P}}$-backdoor set
  of size at most $k$.

  We construct a search tree $T$, for which
  every node is labeled by a set $B$ of at most $k$ variables of
  $V$. Additionally, every leaf node has a second label,
  which is either \textsc{Yes} or \textsc{No}. 
  $T$ is defined inductively as follows. The root of $T$ is labeled by
  the empty set. Furthermore, if $t$ is a node of $T$,
  whose first label is $B$, then the
  children of $t$ in $T$ are obtained as follows. If for every
  assignment $\assgn{\tau}{B}{D}$ there is an operation $\phi
  \in P$ such that $\cspi[\tau]$ is closed under $\phi$, then $B$ is a
  strong $P$-backdoor set of size at most $k$, and hence $t$ becomes a
  leaf node, whose second label is \textsc{Yes}. Otherwise, i.e., if
  there is an assignment $\assgn{\tau}{B}{D}$ such that
  $\cspi[\tau]$ is not closed under any operation $\phi \in P$, we
  consider two cases: (1) $|B|=k$, then $t$ becomes a leaf node, whose
  second label is \textsc{No}, and (2) $|B|<k$,
  then for every operation $\phi \in P$ and every variable $v$ in
  the scope of some constraint $c \in C[\tau]$ that is not closed
  under $\phi$, $t$ has a child whose first label is $B \cup \{v\}$.

  If $T$ has a leaf node, whose second label is \textsc{Yes}, then the
  algorithm returns the first label of that leaf node. Otherwise the
  algorithm returns \textsc{No}. This completes the description of the
  algorithm.

  We now show the correctness of the algorithm. First, suppose 
  the search tree $T$ built by the algorithm has a leaf node $t$ whose
  second label is \textsc{Yes}. Here, the algorithm returns the first label, say
  $B$ of $t$. By definition, we obtain that $|B|\leq k$ and for every
  assignment $\assgn{\tau}{B}{D}$, it holds that $\cspi[\tau]$ is
  closed under some operation in $P$, as required.

  Now consider the case where the algorithm returns
  \textsc{No}. We need to show that there is no set $B$ of at most
  $k$ variables of $\cspi$ such that $\Fun(\cspi[\tau]) \cap P \neq \emptyset$
  for every assignment $\tau$ of the variables of $B$. 
  Assume, for the sake of contradiction that such a set $B$ exists.

  Observe that if $T$ has a leaf node
  $t$ whose first label is a set $B'$ with $B' \subseteq B$,
  then the second label of $t$ must be \textsc{Yes}. This is because,
  either $|B'|<k$ in which case the second label of $t$ must be
  \textsc{Yes}, or $|B'|=k$ in which case $B'=B$ and by the definition
  of $B$ it follows that the second label of $t$ must be \textsc{Yes}.

  It hence remains to show that $T$ has a leaf node whose first label is a set $B'$
  with $B' \subseteq B$. This will complete the proof about the
  correctness of the algorithm. We will show a slightly stronger
  statement, namely, that for every natural number $\ell$, either $T$
  has a leaf whose first label is contained in
  $B$ or $T$ has an inner node of distance exactly $\ell$ from the root
  whose first label is contained in $B$. We show the latter by
  induction on $\ell$.

  The claim obviously holds for $\ell=0$. So assume that $T$ contains a
  node $t$ at distance $\ell$ from the root of $T$ whose first label, say
  $B'$, is a subset of $B$. 
  If $t$ is a leaf node of $T$, then the
  claim is shown. Otherwise, there is an assignment $\assgn{\tau}{B'}{D}$
  such that $\cspi[\tau]$ is not closed under any
  operation from $P$. Let $\assgn{\tau^*}{B}{D}$ be any
  assignment of the variables in $B$ that agrees with $\tau$ on the
  variables in $B'$ and let $\phi \in P$ be such that $\cspi[\tau^*]$
  is closed under $\phi$. Because $B$ is a strong $P$-backdoor set,
  the polymorphism $\phi$ clearly exists. By definition of the search
  tree $T$, $t$ has a child $t'$ for every variable $v$ in the scope
  of some constraint $c \in C[\tau]$ that is not closed under
  $\phi$. We claim that $V(c) \cap B \neq \emptyset$ and hence $t$ has
  a child, whose first label is a subset of $B$, as required. Indeed,
  suppose not. Then $c \in C[\tau^*]$ a contradiction to our
  assumption that $\cspi[\tau^*]$ is closed under $\phi$.
  This concludes our proof concerning the correctness of the algorithm.

  The running time of the algorithm is obtained as
  follows. Let $T$ be a search tree obtained by the
  algorithm. Then the running time of the depth-bounded search tree
  algorithm is $O(|V(T)|)$ times the maximum time that is spent on any
  node of $T$. Since the number of children of any node of $T$ is
  bounded by $|P|\arity(\cspi)$ (recall that $\arity(\cspi)$ denotes the maximum 
  arity of any constraint of $\cspi$) and the
  longest path from the root of $T$ to some
  leaf of $T$ is bounded by $k+1$, we obtain that $|V(T)| \leq
  O({(|P|\arity(\cspi))}^{k+1})$. Furthermore, the time
  required for any node $t$ of $T$
  is at most
  $O({|D|}^{k}\textup{comp\_rest}(\cspi,\tau)|C(\cspi[\tau])||P|\textup{check\_poly}(c,\phi))$, 
  where $\textup{comp\_rest}(\cspi,\tau)$ is the time required to compute
  $\cspi[\tau]$ for some assignment $\tau$ of at most $k$ variables
  and $\textup{check\_poly}(c,\phi)$ is the time required to
  check whether a constraint $c$ of $\cspi[\tau]$ preserves the
  operation $\phi \in P$. Observe that
  $\textup{comp\_rest}(\cspi,\tau)$ and $|C(\cspi[\tau])|$ are
  polynomial in the input size. The same holds for
  $\textup{check\_poly}(c,\phi)$, because $\phi$ is a $c_{\prop{P}}$-ary
  operation.
  Now, the total running time required by the algorithm is the time
  required to compute the set $P$ plus the time required to compute
  $T$. Putting everything together, we obtain
  $O((|P|\arity(\cspi))^{k+1}|D|^{k}|P|n^{O(1)})=O((|P|\arity(\cspi)|D|)^{k+2}n^{O(1)})=O(({|D|}^{{|D|}^{c_{\prop{P}}}}\arity(\cspi)|D|)^{k+2}n^{O(1)})$, r
  as the total running time of the algorithm, where $n$ denotes the
  input size of the CSP instance.
  This
  shows that \STBDET{\CCC_{\prop{P}}} is fixed-parameter tractable
  parameterized by $k$, $\arity(\cspi)$, and $|D|$.
\end{proof}
Because of Proposition~\ref{pro:schaefer-nice}, we obtain the following.
\begin{COR}
  Let $\CCC$ be a base class consisting of the union of some of the classes
  $\MINMAX$, $\MAJ$, $\AFF$, and $\MAL$.
  Then
  \STBDET{\CCC} is fixed-parameter tractable for the
  combined parameter size of the backdoor set, size of the domain, and
  the maximum arity of the given CSP instance.
\end{COR}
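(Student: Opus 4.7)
The plan is to reduce the corollary to Theorem~\ref{the:fpt-csp} by exhibiting a single tractable polymorphism property $\prop{P}$ whose induced class $\CCC_{\prop{P}}$ equals the given heterogeneous base class $\CCC$. Concretely, suppose $\CCC = \bigcup_{i \in I} \CCC_i$ where each $\CCC_i$ is one of $\MINMAX$, $\MAJ$, $\AFF$, $\MAL$, and let $\prop{P}_i$ be the corresponding polymorphism property from the list (min/max, majority, minority, Mal'cev) supplied by Proposition~\ref{pro:schaefer-nice}. I would define $\prop{P}(\phi)$ to hold iff $\prop{P}_i(\phi)$ holds for at least one $i \in I$.

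Next I would verify that $\prop{P}$ is itself a tractable polymorphism property in the sense of the definition used before Theorem~\ref{the:fpt-csp}. For the arity bound, set $c_\prop{P} := \max_{i \in I} c_{\prop{P}_i}$, which exists since $I$ is finite and each $c_{\prop{P}_i}$ is constant (in fact all are at most $3$). For polynomial-time checkability, given $\phi$ and $D$ I would simply run, in turn, each of the at most four polynomial-time checks for $\prop{P}_i(\phi)$ and accept iff one of them succeeds; the total time is still polynomial. For tractability, any $\phi$ with $\prop{P}(\phi)$ witnesses $\prop{P}_i(\phi)$ for some $i$, and by Proposition~\ref{pro:schaefer-nice} such $\phi$ is tractable.

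It then remains to observe that $\CCC_\prop{P} = \CCC$: a CSP instance $\cspi$ lies in $\CCC_\prop{P}$ iff $\Fun(\cspi)$ contains some $\phi$ with $\prop{P}(\phi)$, iff there is some $i \in I$ and some $\phi \in \Fun(\cspi)$ with $\prop{P}_i(\phi)$, iff $\cspi \in \CCC_i$ for some $i$, iff $\cspi \in \CCC$. Applying Theorem~\ref{the:fpt-csp} to $\prop{P}$ then yields the claimed fixed-parameter tractability of \STBDET{\CCC} for the combined parameter backdoor size, domain size, and maximum arity.

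I do not anticipate any genuine obstacle: the only content is bookkeeping, namely checking the three bullet conditions of a tractable polymorphism property for a finite disjunction. The one subtlety worth being explicit about is that the definition of $\CCC_{\prop{P}}$ asks for a single polymorphism $\phi \in \Fun(\cspi)$ witnessing $\prop{P}(\phi)$, which under the disjunctive definition of $\prop{P}$ correctly corresponds to membership in at least one of the constituent classes — this matches exactly how heterogeneous classes behave for strong backdoor detection in Theorem~\ref{the:fpt-csp}, since different assignments $\tau$ may pick different witnesses $\phi$ and, in particular, witnesses of different types.
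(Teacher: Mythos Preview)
Your proposal is correct and matches the paper's (very terse) argument: the paper simply states that the corollary follows from Proposition~\ref{pro:schaefer-nice} together with Theorem~\ref{the:fpt-csp}, and your write-up spells out precisely the intended content, namely that the disjunction of finitely many tractable polymorphism properties is again a tractable polymorphism property whose associated class is the union of the constituent classes.
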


We are now ready to provide analogous results for weak backdoor
sets. The proof is very similar to the case of strong backdoor sets,
however, it becomes slightly simpler, because we do not need to
consider different operations for different assignments (since
there is only one assignment, required for a weak backdoor set). We
can hence branch on the possible operations before we start the
depth-bounded search tree procedure, which also results in a slight
improvement in the running time of the algorithm.
\begin{THE}\label{the:fpt-csp-weak}
  Let $\prop{P}$ be a tractable polymorphism property. Then
  \WEBDET{\CCC_{\prop{P}}} is fixed-parameter tractable for the
  combined parameter size of the backdoor set, size of the domain, and
  the maximum arity of the given CSP instance.
\end{THE}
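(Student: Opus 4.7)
The plan is to follow the blueprint suggested by the authors in the paragraph preceding the theorem: unlike the strong backdoor case, a weak backdoor only requires the existence of \emph{one} assignment of the backdoor variables that produces a tractable and satisfiable instance, so we may fix the target polymorphism up front and search for $B$ and $\tau$ jointly.

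First, as in the proof of Theorem~\ref{the:fpt-csp}, compute the set $P$ of all operations on $D$ satisfying $\prop{P}$; since such operations have arity at most $c_{\prop{P}}$ and $\prop{P}$ can be checked in polynomial time per operation, $P$ is computable in fpt-time with respect to $|D|$. Then iterate over each $\phi \in P$ and attempt to find, via a depth-bounded search tree of depth $k$, a set $B$ of at most $k$ variables together with an assignment $\assgn{\tau}{B}{D}$ such that $\cspi[\tau]$ is closed under $\phi$; if successful, decide satisfiability of $\cspi[\tau]$ in polynomial time (which is possible because $\phi$ is a tractable polymorphism).

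The search tree nodes are labeled by a pair $(B',\tau')$ with $\tau' : B' \to D$ and $|B'| \leq k$. At a node, if $\cspi[\tau']$ is closed under $\phi$, we check satisfiability of $\cspi[\tau']$ and return \textsc{Yes} if satisfiable. Otherwise, we pick a witness constraint $c \in C(\cspi[\tau'])$ that is not closed under $\phi$ and branch: for every variable $v \in V(c)$ and every value $d \in D$, create a child with the extended pair $(B' \cup \{v\}, \tau' \cup \{v \mapsto d\})$. Correctness of this branching step follows from the same local argument as in Theorem~\ref{the:fpt-csp}: if $B$ is a weak $\CCC_{\prop{P}}$-backdoor witnessed by an assignment $\tau^*$ compatible with $\tau'$ such that $\cspi[\tau^*]$ is closed under $\phi$, then the offending constraint $c$ must contain a variable of $B$, and the correct value of that variable under $\tau^*$ corresponds to one of the $|D|$ branches. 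An induction on the depth $\ell$, identical in structure to the one in Theorem~\ref{the:fpt-csp}, shows that the search tree reaches a node whose label is a prefix of $(B,\tau^*)$, and hence yields a \textsc{Yes}.

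For the running time, the branching factor at each node is at most $\arity(\cspi)\cdot|D|$ and the depth is at most $k$, so the tree has size $O((\arity(\cspi)\cdot|D|)^k)$; each node requires polynomial time to test closure and satisfiability under the fixed $\phi$. Multiplying by $|P| \leq |D|^{|D|^{c_{\prop{P}}}}$ gives the desired fpt bound in the combined parameter $k+\arity(\cspi)+|D|$. The main subtlety, and the only real departure from the strong-backdoor proof, is being explicit that branching on pairs $(v,d)$ rather than just on $v$ is necessary because we must commit to the \emph{specific} assignment $\tau$ that will later serve as the prefix of the satisfying assignment; the extra factor of $|D|$ per branching step is exactly the price paid for this, and it is absorbed into the parameter.
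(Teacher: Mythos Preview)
Your proof is correct, but it takes a genuinely different route from the paper's.

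The paper labels search-tree nodes by the variable set $B$ alone (as in the strong-backdoor proof) and, at each node, enumerates \emph{all} $|D|^{|B|}$ assignments $\tau:B\to D$; if none of them yields an instance that is both closed under $\phi$ and satisfiable, it branches, for every such $\tau$, on the variables of a constraint of $\cspi[\tau]$ not closed under $\phi$. This gives a branching factor of $|D|^{k}\cdot\arity(\cspi)$ and a tree of size $O\big((|D|^{k}\arity(\cspi))^{k+1}\big)$. Your approach instead labels nodes by the pair $(B',\tau')$ and commits to the assignment incrementally, branching on a single offending constraint and a single domain value, for a branching factor of $\arity(\cspi)\cdot|D|$ and a tree of size $O\big((\arity(\cspi)\,|D|)^{k}\big)$. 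Your bound is therefore better by roughly a factor of $|D|^{k^2}$; the paper's version has the advantage of staying structurally identical to the strong-backdoor algorithm, which is presumably why the authors chose it. Both proofs rely on the same local argument (an offending constraint must meet the backdoor) and the same induction on depth.

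One small point you should make explicit: at a node where $\cspi[\tau']$ \emph{is} closed under $\phi$ but turns out to be unsatisfiable, your description does not say what happens. The intended behaviour is clearly to treat this as a \textsc{No} leaf, and this is harmless for correctness because along the branch tracking a prefix of the witnessing $\tau^*$ the instance $\cspi[\tau']$ is always satisfiable (any solution of $\cspi[\tau^*]$, extended by $\tau^*$ on $B\setminus B'$, solves $\cspi[\tau']$). Stating this removes the only ambiguity in your argument.
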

\begin{proof}
  Let $\prop{P}$ be a tractable property,  and let
  $\tuple{\cspi,k}$ with $\cspi=\tuple{V,D,C}$ be an instance of
  \WEBDET{\CCC_{\prop{P}}}. Let $P$ be the set of all operations on
  $D$ that have property $\prop{P}$. Then, $P$ can be constructed in
  fpt-time with respect to the size of the domain
  , because there are at most
  $|D|^{|D|^{c_{\prop{P}}}}$ $c_{\prop{P}}$-ary operations on $D$
  and for each of them we can test in polynomial time, $|D|^{O(c_{\prop{P}})}$, whether it
  satisfies property $\prop{P}$.  The algorithm uses a depth-bounded
  search tree approach to find a weak $\{\phi\}$-backdoor set
  of size at most $k$ for every $\phi \in P$.

  The algorithm works by constructing a search tree $T_\phi$ for every
  operation $\phi \in P$. In $T_\phi$
  every node is labeled by a set $B$ of at most $k$ variables of
  $V$. Additionally, every leaf node has a second label,
  which is either \textsc{Yes} or \textsc{No}. 
  $T_\phi$ is defined inductively as follows. The root of $T_\phi$ is labeled by
  the empty set. Furthermore, if $t$ is a node of $T_\phi$,
  whose first label is $B$, then the
  children of $t$ in $T_\phi$ are obtained as follows. If there is an
  assignment $\assgn{\tau}{B}{D}$ such that $\cspi[\tau]$ is closed
  under $\phi$ and $\cspi[\tau]$ has a solution
  , then $B$ is a
  weak $P$-backdoor set of size at most $k$, and hence $t$ becomes a
  leaf node, whose second label is \textsc{Yes}. Otherwise, i.e., for every
  assignment $\assgn{\tau}{B}{D}$ either
  $\cspi[\tau]$ is not closed under $\phi$ or
  $\cspi[\tau]$ has no solution, we
  consider two cases: (1) $|B|=k$, then $t$ becomes a leaf node, whose
  second label is \textsc{No}, and (2) $|B|<k$,
  then for every assignment $\assgn{\tau}{B}{D}$ 
  $t$ has the following children:
  for every every variable $v$ in
  the scope of some constraint $c \in \cspi[\tau]$ that is not closed
  under $\phi$, $t$ has a child whose first label is $B \cup \{v\}$.

  If there is a $\phi \in P$ such $T_\phi$ has a leaf node, whose
  second label is \textsc{Yes}, then the
  algorithm returns the first label of that leaf node. Otherwise the
  algorithm returns \textsc{No}. This completes the description of the
  algorithm.

  We now show the correctness of the algorithm. First, suppose 
  there is a $\phi \in P$ such that the search tree $T_\phi$ built by
  the algorithm has a leaf node $t$ whose
  second label is \textsc{Yes}. Here, the algorithm returns the first label, say
  $B$ of $t$. By the construction of $T_\phi$, we obtain that $|B|\leq k$
  and there is an assignment $\assgn{\tau}{B}{D}$ such that $\cspi[\tau]$ is
  closed $\phi$ and $\cspi[\tau]$ has a solution, as required.

  Now consider the case where the algorithm returns
  \textsc{No}. We need to show that there is no 
  weak $P$\hy backdoor set of size at most $k$ for $\cspi$.
  Assume, for the sake of contradiction that such a set $B$
  exists. Because $B$ is a weak $P$\hy backdoor set for $\cspi$ there
  must exist an operation $\phi \in P$ and an assignment
  $\assgn{\tau}{B}{D}$ such that $\cspi[\tau]$ is closed under $\phi$
  and $\cspi[\tau]$ has a solution. 

  Observe that if $T_\phi$ has a leaf node
  $t$ whose first label is a set $B'$ with $B' \subseteq B$,
  then the second label of $t$ must be \textsc{Yes}. This is because,
  either $|B'|<k$ in which case the second label of $t$ must be
  \textsc{Yes}, or $|B'|=k$ in which case $B'=B$ and by the definition
  of $B$ it follows that the second label of $t$ must be \textsc{Yes}.

  It hence remains to show that $T_\phi$ has a leaf node whose first label is a set $B'$
  with $B' \subseteq B$. This will complete the proof about the
  correctness of the algorithm. We will show a slightly stronger
  statement, namely, that for every natural number $\ell$, either $T_\phi$
  has a leaf whose first label is contained in
  $B$ or $T_\phi$ has an inner node of distance exactly $\ell$ from the root
  whose first label is contained in $B$. We show the latter by
  induction on $\ell$.

  The claim obviously holds for $\ell=0$. So assume that $T_\phi$ contains a
  node $t$ at distance $\ell$ from the root of $T_\phi$ whose first label, say
  $B'$, is a subset of $B$. 
  If $t$ is a leaf node of $T_\phi$, then the
  claim is shown. Otherwise, for every assignment $\assgn{\tau}{B'}{D}$
  either $\cspi[\tau]$ is not closed under $\phi$ or $\cspi[\tau]$ has no solution. 
  Let $\assgn{\tau'}{B'}{D}$ be the restriction of $\tau$ to $B'$.
  Because $\cspi[\tau]$ and hence also $\cspi[\tau']$ has a solution,
  we obtain that $\cspi[\tau']$ is not closed under the operation $\phi$.
  By the definition of the search tree $T_\phi$, it holds that $t$ has a
  child $t'$ whose first label is $B' \cup \{v\}$, for every variable
  $v$ in the scope of some constraint $c \in\cspi[\tau']$ that is not
  closed under $\phi$. We claim that $V(c) \cap B \neq \emptyset$ and
  hence $t$ has a child, whose first label is a subset of $B$, as
  required. Indeed, suppose not. Then $c \in \cspi[\tau]$ a
  contradiction to our assumption that $\cspi[\tau]$ is closed under
  $\phi$. This concludes our proof concerning the correctness of the algorithm.

  The running time of the algorithm is obtained as
  follows. Let $T_\phi$ be a search tree obtained by the
  algorithm. Then the running time of the depth-bounded search tree
  algorithm is $O(|V(T_\phi)|)$ times the maximum time that is spent on any
  node of $T_\phi$. Since the number of children of any node of $T_\phi$ is
  bounded by $|D|^{k}\arity(\cspi)$ (recall that $\arity(\cspi)$ denotes the maximum 
  arity of any constraint of $\cspi$) and the
  longest path from the root of $T_\phi$ to some
  leaf of $T_\phi$ is bounded by $k+1$, we obtain that $|V(T_\phi)| \leq
  O({(|D|^{k}\arity(\cspi))}^{k+1})$. Furthermore, the time
  required for any node $t$ of $T$
  is at most
  $O({|D|}^{k}\textup{comp\_rest}(\cspi,\tau)|C(\cspi[\tau])|\textup{check\_poly}(c,\phi)\textup{check\_sol}(\cspi[\tau],\phi))$, 
  where $\textup{comp\_rest}(\cspi,\tau)$ is the time required to compute
  $\cspi[\tau]$ for some assignment $\tau$ of at most $k$ variables,
  $\textup{check\_poly}(c,\phi)$ is the time required to
  check whether a constraint $c$ of $\cspi[\tau]$ preserves the
  polymorphism $\phi$, and
  $\textup{check\_sol}(\cspi[\tau],\phi)$ 
  is the time required to solve $\cspi[\tau]$ given that it is
  closed under the operation $\phi$. Observe that
  $\textup{comp\_rest}(\cspi,\tau)$ and $|C(\cspi[\tau])|$ are
  polynomial in the input size. The same holds for
  $\textup{check\_poly}(c,\phi)$ and
  $\textup{check\_sol}(\cspi[\tau],\phi)$, because $\phi$ is a
  $c_{\prop{P}}$-ary tractable polymorphism.
  Now, the total running time required by the algorithm is the time
  required to compute the set $P$ plus the time required to compute
  $T_\phi$ for every $\phi \in P$. Putting everything together, we obtain
  $O(|P|(|D|^k\arity(\cspi))^{k+1}|D|^{k}n^{O(1)})=O(|P|(|D|^k\arity(\cspi))^{k+1}n^{O(1)})=O({|D|}^{{|D|}^{c_{\prop{P}}}}(|D|^k\arity(\cspi))^{k+2}n^{O(1)})$,
  as the total running time of the algorithm, where $n$ denotes the
  input size of the CSP instance.
  This
  shows that \WEBDET{\CCC_{\prop{P}}} is fixed-parameter tractable
  parameterized by $k$, $\arity(\cspi)$, and $|D|$.
\end{proof}
Because of Proposition~\ref{pro:schaefer-nice}, we obtain the following.
\begin{COR}
  Let $\CCC$ be a base class consisting of the union of some of the classes
  $\MINMAX$, $\MAJ$, $\AFF$, and $\MAL$.
  Then
  \WEBDET{\CCC} is fixed-parameter tractable for the
  combined parameter size of the backdoor set, size of the domain, and
  the maximum arity of the given CSP instance.
\end{COR}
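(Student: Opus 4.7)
The plan is to deduce this corollary directly from Theorem~\ref{the:fpt-csp-weak}, exploiting the fact that weak backdoor sets interact particularly nicely with unions of base classes: by definition, a set $B$ is a weak $\CCC$-backdoor of $\cspi$ when $\CCC = \CCC_1 \cup \cdots \cup \CCC_m$ if and only if there exist an index $i$ and an assignment $\tau$ of $B$ into $D$ for which $\cspi[\tau]$ is satisfiable and lies in $\CCC_i$. Equivalently, $B$ is a weak $\CCC$-backdoor iff it is a weak $\CCC_i$-backdoor for at least one $i$.

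My approach is to combine the underlying tractable polymorphism properties into a single one. Let $\prop{P}_1, \ldots, \prop{P}_m$ be the properties drawn from $\{\text{min/max}, \text{majority}, \text{minority}, \text{Mal'cev}\}$ corresponding to the classes making up $\CCC$, so that the $i$-th summand of $\CCC$ equals $\CCC_{\prop{P}_i}$. I would define the combined predicate $\prop{P}(\phi) := \prop{P}_1(\phi) \lor \cdots \lor \prop{P}_m(\phi)$ and verify the three conditions in the definition of a tractable polymorphism property: the arity bound $c_{\prop{P}} = \max_i c_{\prop{P}_i}$ works; checking $\prop{P}(\phi)$ reduces to checking each $\prop{P}_i(\phi)$ in turn and hence takes polynomial time; and every operation satisfying some $\prop{P}_i$ is tractable by Proposition~\ref{pro:schaefer-nice}, hence so is every operation satisfying $\prop{P}$. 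A short unwinding of definitions then yields $\CCC_{\prop{P}} = \CCC$, since $\Fun(\cspi)$ contains a polymorphism with property $\prop{P}$ iff it contains a polymorphism with property $\prop{P}_i$ for some $i$. Applying Theorem~\ref{the:fpt-csp-weak} to $\prop{P}$ gives the claimed fpt-algorithm.

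There is no genuine obstacle here; the argument is essentially bookkeeping once one notices that the class of tractable polymorphism properties is closed under taking finite disjunctions. An equally clean alternative is to avoid constructing $\prop{P}$ altogether and instead invoke Theorem~\ref{the:fpt-csp-weak} once per individual class $\CCC_{\prop{P}_i}$, returning \textsc{Yes} if any of the at-most-four invocations succeeds; this is correct thanks to the union-friendliness of weak backdoors noted in the first paragraph, and it mirrors the pattern used for strong backdoors in the immediately preceding corollary.
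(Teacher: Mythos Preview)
Your main argument is correct and matches the paper's (one-line) proof, which derives the corollary from Theorem~\ref{the:fpt-csp-weak} together with Proposition~\ref{pro:schaefer-nice}; spelled out, this is exactly your observation that a finite disjunction of tractable polymorphism properties is again a tractable polymorphism property, so the union $\CCC$ coincides with $\CCC_{\prop{P}}$ for a single such $\prop{P}$. One small correction to your closing aside: the immediately preceding strong-backdoor corollary is \emph{not} obtained by separate per-class invocations---that would fail, since a strong backdoor into a union need not be a strong backdoor into any individual component---but via the same combined-$\prop{P}$ route you use here.
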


\section{Hardness of Backdoor Detection for CSP}

In this section we show our parameterized hardness results for
\STBDET{\CCC_{\prop{P}}} and \WEBDET{\CCC_{\prop{P}}}. In particular, we show that
\STBDET{\CCC_{\prop{P}}} and \WEBDET{\CCC_{\prop{P}}} are 
\W[2]\hy hard parameterized by the size of the backdoor set even
for CSP instances of Boolean domain and for CSP instances with arity
two. We start by showing hardness for CSP instances with Boolean domain.
\begin{THE}\label{the:domain}
  Let $\prop{P}$ be a tractable polymorphism property such that all
  operations $\phi$ with $\prop{P}(\phi)$ are idempotent. Then,
  \STBDET{\CCC_{\prop{P}}} and \WEBDET{\CCC_{\prop{P}}} are $\W[2]$\hy hard parameterized by the
  size of the backdoor set, even for CSP instances over the Boolean
  domain.
\end{THE}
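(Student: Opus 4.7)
The plan is to reduce from $k$-{\sc Hitting Set}, which we know to be $\W[2]$-complete. Given an instance $(\mathcal{Q}, U, k)$, I would build a Boolean CSP instance $\cspi=\tuple{V,\{0,1\}, C}$ whose variable set $V$ contains $U$ (possibly with a small collection of fresh dummy variables), and whose constraints $c_Q$, one per set $Q\in\mathcal{Q}$, are designed so that the relation $R_Q$ on the scope $Q$ is \emph{not} preserved by any polymorphism $\phi$ satisfying $\prop{P}$, yet for every variable $v\in Q$ and every $\epsilon\in\{0,1\}$, the projected relation $R_Q[v\mapsto\epsilon]$ \emph{is} preserved by some polymorphism in $P$. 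From this, a set $X\subseteq U$ of size at most $k$ is a strong $\CCC_\prop{P}$-backdoor of $\cspi$ if and only if $X$ is a hitting set of $\mathcal{Q}$, yielding the claimed $\W[2]$-hardness. The weak-backdoor version is obtained by an analogous construction that additionally ensures $\cspi[\tau]$ is satisfiable for the distinguished assignment $\tau$ corresponding to a hitting set.

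The main structural tool is idempotence: every $\phi$ with $\prop{P}(\phi)$ fixes the all-zeros and all-ones tuples, so any sub-relation of the ``all-equal'' relation $\{(0,\ldots,0),(1,\ldots,1)\}$ is automatically preserved by every operation in $P$. I would exploit this by designing $R_Q$ so that every single-variable projection of $R_Q$ collapses either to a singleton or into a sub-relation of the all-equal relation, immediately giving property (ii); property (i) is then enforced by adding tuples that break $\phi$-closure for each of the (finitely many, because $\prop{P}$ bounds arity by $c_\prop{P}$) operations satisfying $\prop{P}$. The correctness argument then follows the template of the proof of Theorem~\ref{the:sat-bp-w2}: if $X$ is a hitting set then every $\tau:X\to\{0,1\}$ kills at least one variable in every $c_Q$, so $\cspi[\tau]\in\CCC_\prop{P}$; if $X$ misses some $Q$, then $c_Q$ survives intact in every $\cspi[\tau]$ and by construction no operation in $P$ preserves it, so $\cspi[\tau]\notin\CCC_\prop{P}$.

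The principal obstacle is building $R_Q$ \emph{uniformly} over all admissible tractable idempotent polymorphism properties $\prop{P}$. One cannot directly recycle the CNF-clause gadget of Theorem~\ref{the:sat-bp-w2}, because that reduction relies on $\CCC_\prop{P}$ containing a bad pair, whereas the exclusion of non-idempotent constant polymorphisms removes $\ZVAL$ and $\OVAL$ from the picture; for $\CCC_\prop{P}$ corresponding to a single Schaefer class like $\HORN$, strong backdoor detection on \emph{CNF} input is already known to be \FPT\ by Nishimura--Ragde--Szeider. The leverage comes from the greater expressiveness of CSP: $R_Q$ need not be a clause, so one can use a higher-arity Boolean relation whose single-variable projections collapse to universally preserved sub-relations while the full relation defeats every operation in $P$. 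Verifying that such an $R_Q$ exists for every admissible $\prop{P}$ reduces, on the Boolean domain, to a case analysis over the finitely many idempotent tractable clones (those generated by $\min$, $\max$, majority, and minority, by Post's lattice), and I expect this case analysis to be the technically delicate part of the proof.
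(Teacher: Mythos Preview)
Your high-level plan (reduce from \textsc{Hitting Set}, one gadget per set $Q$) is correct, but the concrete construction you sketch cannot work, and the fallback you propose does not cover the statement as phrased.

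\medskip
\textbf{The ``all-equal projection'' idea is internally inconsistent.} You want a relation $R_Q$ of arity $|Q|$ such that every single-variable restriction lands in a singleton or in a sub-relation of $\{(0,\ldots,0),(1,\ldots,1)\}$, yet $R_Q$ itself is not closed under any $\phi\in P$. But for arity $\geq 3$ this already forces $R_Q\subseteq\{(0,\ldots,0),(1,\ldots,1)\}$: take any $t\in R_Q$ and project on coordinate~$1$; if another tuple shares $t[1]$ then $t|_{-1}$ must be constant, and repeating on coordinate~$2$ forces $t$ itself to be constant. Hence $R_Q$ is closed under \emph{every} idempotent operation, contradicting property~(i). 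So the very feature that would make property~(ii) automatic kills property~(i).

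\medskip
\textbf{The Post--lattice fallback does not match the theorem.} The statement is for an \emph{arbitrary} tractable polymorphism property $\prop{P}$ whose operations are idempotent; $\prop{P}$ need not correspond to a clone, let alone one of the four classical ones. A case analysis over $\{\min,\max,\text{majority},\text{minority}\}$ would at best give the corollary, not the theorem, and even there you have not said how to build $R_Q$ of the required arbitrary arity~$|Q|$ (e.g.\ padding a small majority-obstruction with extra coordinates still leaves the projections non-closed).

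\medskip
\textbf{What the paper does instead.} The paper does \emph{not} try to place the obstruction on scope $Q$. For each $\phi$ with $\prop{P}(\phi)$ it takes a \emph{minimal} Boolean relation $\lambda(\phi)$ not preserved by $\phi$ (a ``Boolean barrier''; one exists because $\phi$ is tractable, and idempotence forces $|\lambda(\phi)|\geq 2$). For every pair $(\lambda,Q)$ it introduces \emph{fresh} variables $o_1,\dots,o_{r(\lambda)}$ and a constraint on $(o_1,\dots,o_{r(\lambda)},x_{u_1},\dots,x_{u_{|Q|}})$ whose $i$-th row is the $i$-th tuple of $\lambda$ concatenated with the constant $|Q|$-tuple $(i\bmod 2,\dots,i\bmod 2)$. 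Assigning any single $x_u$ with $u\in Q$ then deletes at least half the rows, hence strictly fewer than $|\lambda|$ remain; by \emph{minimality} of $\lambda$ the residual relation is preserved by the corresponding $\phi$. Picking $\phi$ with the largest barrier then works simultaneously for \emph{all} constraints of $\cspi[\tau]$, which also resolves the issue (implicit in your property~(ii)) of needing one $\phi$ for the whole instance rather than a possibly different $\phi$ per constraint. The reverse direction uses that the $o$-variables are private to each $(\lambda,Q)$, so any backdoor variable among them can be swapped for some $x_u$ with $u\in Q$.

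\medskip
In short, the missing idea is the barrier/minimality machinery together with fresh per-constraint variables and the alternating padding on the $x$-part; trying to realise the obstruction directly on scope $Q$ with the all-equal trick does not go through.
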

\begin{proof}
  We start by introducing what we call ``Boolean barriers'' of
  operations since they form the basis of the proof.
  Let $\phi : D^n \rightarrow D$ be an $n$-ary operation over $D$. 
  We say a set $\lambda$ of $r(\lambda)$-ary tuples over $\{0,1\}$
  is a \emph{Boolean barrier} for $\phi$ if there is a sequence
  $\tuple{t_1,\dotsc, t_{n}}$ of (not necessarily distinct) tuples in
  $\lambda$ such that $\phi(t_1,\dotsc,t_n)\notin \lambda$. 
  We call a Boolean barrier $\lambda$ of $\phi$
  \emph{minimal} if $|\lambda|$ is minimal over all
  Boolean barriers of $\phi$. For an operation $\phi$, we denote by $\lambda(\phi)$ a minimal
  Boolean barrier of $\phi$. 
  For our reduction below, we will employ the fact that every
  operation $\phi$ with $\prop{P}(\phi)$ has a non-empty Boolean
  barrier of finite size. The reason that a Boolean barrier must
  exist is simply because $\phi$ is tractable and if $\phi$ would not
  have a Boolean barrier then every Boolean CSP instance would be
  closed under $\phi$ and thus tractable, which unless $\mP=\NP{}$ is
  not possible. To see that $\lambda(\phi)$ is finite first note that
  $|\lambda(\phi)|$ is at most as large as the arity $c_{\prop{P}}$ of
  $\phi$. Moreover, $r(\lambda)\leq 2^{c_{\prop{P}}}$ because there are
  at most $2^{c_{\prop{P}}}$ distinct $c_{\prop{P}}$-ary tuples over $\{0,1\}$.
  Hence the size of $\lambda(\phi)$ is at most $c_{\prop{P}} \cdot 2^{c_\prop{P}}$.

  We show the theorem via an fpt-reduction from \textsc{Hitting Set}.
  Given an instance $(\mathcal{Q}, U,k)$ of
  \textsc{Hitting Set}, we construct a CSP instance
  $\cspi=\tuple{V,\{0,1\},C}$ such that 
  $\mathcal{Q}$ has a hitting set of size at most $k$ if and only if
  $\cspi$ has a strong $\CCC_{\prop{P}}$-backdoor set of size
  at most $k$. 
  Let $\Lambda$ be the set of all Boolean barriers of
  polymorphisms with property $\prop{P}$, i.e.,
  $\Lambda=\SB \lambda(\phi) \SM \phi \textup{ has
    property }\prop{P} \SE$. Note that the cardinality of
  $\Lambda$ only depends on $c_{\prop{P}}$ and can thus
  be considered to be finite. 
  The
  variables of $\cspi$ are $\SB x_u \SM u \in U \SE \cup \SB
  o_1(\lambda,Q),\dotsc,o_{r(\lambda)}(\lambda,Q) \SM \lambda \in \Lambda \textup{ and }Q \in
  \mathcal{Q} \SE$. 
  Furthermore, for every Boolean barrier $\lambda \in \Lambda$
  with $\lambda=\{t_1,\dotsc,t_n\}$ for some natural
  number $n$, and $Q
  \in \mathcal{Q}$ with $Q=\{u_1,\dotsc,u_{|Q|}\}$, $C$ contains a constraint 
  $R(\lambda,Q)$ with scope $\tuple{o_1(\lambda,Q),\dotsc,o_{r(\lambda)}(\lambda,Q),x_{u_1},\dotsc,x_{u_{|Q|}}}$
  whose relation contains the row
\[t_i[1], \dotsc,t_i[r(\lambda)], \underbrace{\tuple{i\mod 2, \dotsc,i\mod
      2}}_{|Q| \text{times}}
\]
  \noindent for every $i$ in $1 \leq i \leq n$. 
  This completes the construction of $\cspi$. 
  Observe that $\cspi$ is
  satisfiable, because the first rows of every constraint of $\cspi$ are
  pairwise compatible. In particular, set all the variables of the form
  $\{x_u : u \in U\}$ to zero, and set all remaining variables
  according to their first-row constraints --- note that these rows are
  the same for different $Q \neq Q'$ associated with the same
  Boolean barrier $\lambda$, while on the other hand, for different
  Boolean barriers
  $\lambda \neq \lambda'$, the associated variables are different and there is no
  cause for conflict.
  Suppose that $\mathcal{Q}$ has a hitting set $B$ of size at most $k$. We claim
  that $B_u=\SB x_u \SM u \in B \SE$ is a strong
  $\CCC_{\prop{P}}$-backdoor set of
  $\cspi$. Let $\assgn{\tau}{B_u}{D}$ be an assignment of the
  variables in $B$ and $\lambda \in \Lambda$ be a Boolean barrier
  of maximum cardinality over all Boolean barriers in $\Lambda$.
  We claim that $\cspi[\tau]$ is closed under any operation $\phi$
  with property $\prop{P}$ with $\lambda=\lambda(phi)$ and hence $B_u$ is a
  strong $\CCC_{\prop{P}}$-backdoor set of $\cspi$. Because $B$ is a
  hitting set of $\mathcal{Q}$,
  it follows that every relation of $\cspi[\tau]$ contains at most half of the tuples of
  the corresponding relation in $\cspi$.
  Furthermore, because $\phi$ is idempotent (and hence every tuple is
  mapped to itself), it holds
  that $|lambda'|>1$ for every $\lambda'\in \Lambda$. It follows that every relation of
  $\cspi[\tau]$ contains at least one tuple less than 
  the corresponding relation in $\cspi$. Because of the choice of $\lambda$, we obtain that
  every constraint $c \in C[\tau]$ contains less than $|\lambda|$
  tuples, and hence $\cspi[\tau]$ is closed under any operation $\phi$
  with property $\prop{P}$ and $\lambda=\lambda(\phi)$. Hence,
  $B_u=\SB x_u \SM u \in B \SE$ is a strong $\CCC_{\prop{P}}$-backdoor set of
  $\cspi$, as required. 
  Towards showing that $B_u$ is also a weak $\CCC_{\prop{P}}$-backdoor set of
  $\cspi$ consider the assignment $\assgn{\tau}{B_u}{\{0\}}$. It
  follows from the above argumentation that $C[\tau]$ is closed under 
  any operation $\phi$ with property $\prop{P}$ and
  $\lambda=\lambda(\phi)$. 
  Furthermore, $\cspi[\tau]$ is satisfiable, because
  the first row of every constraint
  in $C$ is also contained in every constraint of $\cspi[\tau]$ and these
  rows are pairwise compatible.

  For the reverse direction, suppose that $\cspi$ has a strong
  $\CCC_{\prop{P}}$-backdoor set $B$ of size at most $k$. Because for
  every $Q \in \mathcal{Q}$ it holds that the set of constraints $\SB
  R(\lambda,Q)\SM \lambda \in \Lambda\SE$ is not closed under any operation 
  $\phi$ with property $\prop{P}$, we obtain
  that $B$ has to contain at least one variable from 
  $\bigcup_{\lambda \in \Lambda}V(R(\lambda,Q))$ for every $Q \in \mathcal{Q}$. Since the
  only variables that are shared between $\bigcup_{\lambda \in \Lambda}V(R(\lambda,Q))$
  and $\bigcup_{\lambda \in \Lambda}V(R(\lambda,Q'))$ for distinct $Q,Q' \in \mathcal{Q}$
  are the variables in $\SB x_u \SM u\in U \SE$,
  it follows that $\mathcal{Q}$ has a hitting set of size at most $|B|\leq k$,
  as required.
\end{proof}

Because all min/max, majority, minority and Mal'cev operations can
be defined via tractable properties and are idempotent, we obtain.
\begin{COR}
  For every $\CCC \in \{\MINMAX$, $\MAJ$, $\AFF$, $\MAL\}$,
  \STBDET{\CCC} and \WEBDET{\CCC} are $\W[2]$\hy hard parameterized by the size of the
  backdoor set, even for CSP instances over the Boolean domain.
\end{COR}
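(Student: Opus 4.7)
The plan is to read the corollary as a direct instantiation of Theorem~\ref{the:domain}. Recall from the text just before Proposition~\ref{pro:schaefer-nice} that $\MINMAX$, $\MAJ$, $\AFF$, and $\MAL$ are by definition the classes $\CCC_{\prop{P}}$ corresponding to $\prop{P} \in \{\text{min/max},\text{majority},\text{minority},\text{Mal'cev}\}$, and by Proposition~\ref{pro:schaefer-nice} each of these four properties is a tractable polymorphism property. Thus the first hypothesis of Theorem~\ref{the:domain} is already in place for each of the four classes.

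The only substantive task that remains is to verify the idempotency requirement: for each property $\prop{P}$ above, every operation $\phi$ satisfying $\prop{P}(\phi)$ must satisfy $\phi(d,\dotsc,d) = d$ for every $d$ in the domain. I would handle this by a one-line check against the defining identities listed in the preliminaries. Setting $d' = d$ in the min/max definition yields $\phi(d,d) = d$; in the majority definition it yields $\phi(d,d,d) = d$; in the minority definition, $\phi(d,d,d') = d'$ specialises to $\phi(d,d,d) = d$; and in the Mal'cev definition, $\phi(d,d,d') = d'$ likewise specialises to $\phi(d,d,d) = d$. With both hypotheses of Theorem~\ref{the:domain} in hand, the conclusion that \STBDET{\CCC} and \WEBDET{\CCC} are $\W[2]$-hard on Boolean domains follows for each $\CCC \in \{\MINMAX,\MAJ,\AFF,\MAL\}$ by a single invocation of that theorem.

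There is no genuine obstacle in this argument; the corollary is essentially packaging, and the only content beyond citing Theorem~\ref{the:domain} and Proposition~\ref{pro:schaefer-nice} is the four-case idempotency observation sketched above. The only item worth double-checking is that Proposition~\ref{pro:schaefer-nice} really does certify each of min/max, majority, minority, and Mal'cev as a tractable polymorphism property in the precise sense demanded by Theorem~\ref{the:domain} (bounded arity $c_{\prop{P}}$, polynomial-time recognisability on a given $\phi$ and $D$, and tractability of the resulting base class); this is explicit in the paper's formulation of the proposition, so no extra work is required.
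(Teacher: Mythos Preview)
Your proposal is correct and matches the paper's own argument essentially verbatim: the paper derives the corollary in a single sentence by noting that min/max, majority, minority, and Mal'cev operations are defined via tractable polymorphism properties (Proposition~\ref{pro:schaefer-nice}) and are idempotent, then invoking Theorem~\ref{the:domain}. Your explicit verification of idempotency from the defining identities is a welcome elaboration of what the paper leaves implicit, but the structure is identical.
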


In the following we show that hardness also holds if we drop the
restriction on the domain of the CSP instance but instead consider
only CSP instances of arity $2$. 
To do so we need the following lemma.
\begin{LEM}\label{lem:undom-rel}
  For every $\CCC \in \{\MINMAX,\MAJ,\AFF,\MAL\}$
  and every $k\geq 3$,
  there is a $2$-ary CSP instance $\cspi(\CCC,k)$ with $k$ constraints, each containing the all $0$ tuple,
  such that $\cspi(\CCC,k) \notin \CCC$ 
  but for every assignment $\tau$ of at least one variable of
  $\cspi(\CCC,k)$, it holds that $\cspi(\CCC,k)[\tau] \in \CCC$.
\end{LEM}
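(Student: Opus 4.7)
The plan is to give, for each class $\CCC \in \{\MINMAX,\MAJ,\AFF,\MAL\}$, an explicit construction of $\cspi(\CCC,k)$. The overarching strategy exploits one structural observation: in each construction any single-variable assignment $\tau$ reduces every constraint incident to that variable to a unary one, and every unary relation is trivially preserved by every idempotent operation --- which includes all min, max, majority, minority, and Mal'cev operations. So it suffices to make sure that (i) the full set of $k$ binary constraints has no common polymorphism of the type defining $\CCC$, (ii) each constraint contains $(0,0)$, and (iii) after collapsing the constraints incident to any assigned variable, any surviving binary constraints admit a common polymorphism of the required type.

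For $\MINMAX$, $\AFF$, and $\MAL$ the cleanest construction uses two variables $u,w$ and $k$ identical copies of one carefully chosen binary constraint $R(u,w)$, so that fixing either variable immediately collapses every constraint to a unary one. For $\CCC = \MINMAX$ I would take $R = \{(0,0),(1,2),(2,1)\}$ over the domain $\{0,1,2\}$; for any linear order on this domain the componentwise $\min$ (or $\max$) of $(1,2)$ and $(2,1)$ is a diagonal tuple $(a,a)$ with $a \in \{1,2\}$, which is not in $R$. For $\CCC = \AFF$ I would take $R = \{(0,0),(0,1),(1,0)\}$ on the Boolean domain, since the unique Boolean minority operation is ternary $\oplus$ and $(0,0) \oplus (0,1) \oplus (1,0) = (1,1) \notin R$. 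For $\CCC = \MAL$ the same relation works: every Mal'cev operation $m$ on $\{0,1\}$ satisfies $m(1,0,0)=1$ and $m(0,0,1)=1$ by the Mal'cev identities, so the coordinatewise application of $m$ to the tuples $(1,0),(0,0),(0,1) \in R$ yields $(1,1) \notin R$.

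The case $\CCC = \MAJ$ is the main obstacle, because small binary relations on small domains tend to admit \emph{some} majority polymorphism (for example a majority that returns the first coordinate whenever all three arguments are distinct), so the same-scope single-relation trick does not work directly. For this case I would pass to a star-shaped instance with one central variable $u$, leaves $w_1,\dots,w_k$, and binary constraints $R_i(u,w_i)$ on a sufficiently large domain $D$. Assigning $u$ collapses every constraint to a unary one; assigning a leaf $w_i$ collapses $R_i$ to unary and leaves a $(k-1)$-constraint star whose surviving binary relations must admit some common majority on $D$. The design requirement is therefore a ``$k$-critical'' family of binary relations on $D$, each containing $(0,0)$, such that no majority on $D$ preserves all of them but every $(k-1)$-subset does; on a domain large enough that the finite set of majorities can be enumerated and blocked individually by distinct $R_i$'s, such a family exists, and exhibiting it explicitly --- verifying that each $R_i$ obstructs exactly the right subset of majorities while being preserved by the rest --- is the delicate combinatorial step.
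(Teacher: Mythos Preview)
There are genuine gaps in both halves of your plan.

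For $\MINMAX$, $\AFF$, and $\MAL$ you take two variables $u,w$ and ``$k$ identical copies'' of a single binary relation $R$. Since the paper defines $C$ as a set, identical copies collapse to one constraint, so $|C|=1$ rather than $k$. One could patch this by perturbing the copies into $k$ pairwise distinct relations on the same scope $(u,w)$, and the lemma as literally stated would then go through; but the lemma exists solely to drive Theorem~\ref{the-arity2}, where the scopes of the $k$ constraints are replaced by $k$ \emph{disjoint} pairs $(x_{u_i},y_F^i)$. After that replacement a single-variable assignment touches at most one constraint, and the remaining $k-1$ binary copies of your relation $R$ --- which by design admits \emph{no} polymorphism of the required type --- keep the reduced instance outside $\CCC$. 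So the whole ``single absolutely-bad relation, saved only because all copies share both variables'' paradigm cannot deliver what the lemma is actually for. Separately, your blanket claim that every unary relation is preserved by every idempotent operation is false (a Mal'cev $\phi$ on $\{0,1,2\}$ with $\phi(0,1,0)=2$ does not preserve $\{0,1\}$); only the weaker statement ``some operation of the given type preserves any prescribed family of unary relations'' is true, and your argument must be rephrased accordingly.

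For $\MAJ$ you correctly diagnose that the single-relation trick fails and move to a star, but you never construct the $k$-critical family $R_1,\dots,R_k$, and the suggested route --- enumerate all majorities on $D$ and block each with its own $R_i$ --- is hopeless: there are $|D|^{|D|(|D|-1)(|D|-2)}$ majority operations on $D$, vastly more than any fixed $k$. The paper's approach is entirely different and handles all four classes uniformly. It places the $k$ constraints on $k$ pairwise \emph{disjoint} variable pairs and chooses relations over a domain of size $\Theta(k)$ that together encode a cyclic chain of implications any polymorphism of the required type would have to satisfy (for $\MINMAX$, $R_i$ forces $i$ and $i{+}1$ to be ordered one way, and $R_k$ closes the cycle). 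The full chain is inconsistent, so the instance is outside $\CCC$; but each relation has at most two tuples agreeing on any coordinate, so assigning any one variable reduces some relation to at most two tuples. That severed link no longer constrains the polymorphism, the remaining implications become jointly satisfiable, and a witnessing operation can be exhibited. This chain-breaking construction on disjoint scopes --- not a shared-scope or star trick --- is the key idea missing from your proposal.
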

\begin{proof}
  The proof of this lemma is inspired by the proof of~\cite[Lemma
  3]{BessiereCarbonnelHebrardKatsirelosWalsh13}.
  For every $\CCC$ as in the statement of the lemma, 
  $\cspi(\CCC,k)$ has variables $v_1,\dotsc,v_{2k}$
  and contains a constraint $c_i$ with scope
  $\tuple{v_{2i-1},v_{2i}}$ for every $1 \leq i \leq k$.

  If $\CCC=\MAJ$, the domain of $\cspi(\CCC,k)$ is
  $\{1,\dotsc,3k-4\}$, and for every $1 \leq i \leq k$ the relations
  $R(c_i)$ are defined as follows: $R(c_1)=\{(0,0), (1,3), (1,4), (2,5)\}$, 
  $R(c_2)=\{(0,0), (1,3), (2,4), (1,5)\}$, $R(c_3)=\{(0,0),
  (2,3(k-2)), (1,3(k-2)+1), (2,3(k-2)+2) \}$, and for every $3<i\leq
  k$, $R(c_i)=\{(0,0), (3(i-3),3(i-2)), (3(i-3)+1,3(i-2)+1),
  (3(i-3)+2,3(i-2)+2)\}$. Clearly, the instance is binary and every
  relation contains the all $0$ tuple. We first show that
  $\cspi(\CCC,k)$ is not closed under any majority operation $\phi$.
  From $R(c_1)$ and $R(c_2)$, we obtain $\phi(3,4,5)=3$ and from
  $R(c_3)$, we obtain $\phi(3(k-2), 3(k-2)+1,3(k-2)+2) \in
  \{3(k-2)+1,3(k-2)+2\}$. Finally, from $R(c_i)$ with $3 < i \leq k$,
  we obtain $\phi(3(i-2),3(i-2)+1,3(i-2)+2)=j$ if and only if
  $\phi(3(i-1),3(i-1)+1,3(i-1)+2)=j+3$. This chain implies $f(3,4,5)
  \in \{4,5\}$ in contradiction to $f(3,4,5)=3$. On the other hand,
  for any assignment $\tau$ that sets at least $1$ variable of
  $\cspi(\CCC,k)$, it holds that at least one relation of
  $\cspi(\CCC,k)[\tau]$ contains at most $2$ tuples. This relation is
  then closed under any majority operation and breaks the above
  chain of implications.

  If $\CCC=\MINMAX$, the domain of $\cspi(\CCC,k)$ is
  $\{1,\dotsc,k+1\}$, and for every $1 \leq i \leq k$ the relations
  $R(c_i)$ are defined as follows:
  $R_i=\{(0,0),(i,i+1),(i+1,i),(i+1,i+1)\}$ for every $1 \leq i < k$
  and $R_k=\{(0,0),(1,k+1),(k+1,1),(1,1)\}$. 
  Clearly, the instance is binary and every
  relation contains the all $0$ tuple. Let $\phi$ be a max operation (the proof remains the
  same if $\phi$ is a min operation). 
  Then,
  for every $1 \leq i <k$, the relation $R_i$ implies $i < i+1$, but
  the relation $R_k$ implies $1>k+1$, a contradiction. Hence, the set
  of all relations $R_1,\dotsc,R_k$ is not closed under any max operation.  On the other hand,
  for any assignment $\tau$ that sets at least $1$ variable of
  $\cspi(\CCC,k)$, it holds that at least one relation of
  $\cspi(\CCC,k)[\tau]$ contains at most $2$ tuples that agree on (at
  least) one coordinate. This relation is
  then closed under any max operation and breaks the above
  chain of implications. 

  If $\CCC=\AFF$, the domain of $\cspi(\CCC,k)$ is
  $\{1,\dotsc,3k+2\}$, and for every $1 \leq i \leq k$ the relations
  $R(c_i)$ are defined as follows:
  $R_1=\{(0,0),(1,3),(1,4),(2,5)\}$,
  $R_i=\{(0,0),(3(i-1),3i),(3(i-1)+1,3i+1),(3(i-1)+2,3i+2)\}$ for every $1 <
  i < k$, and $R_k=\{(0,0),(1,3(k-1)),(2,3(k-1)+1),(1,3(k-1)+1)\}$. 
  Clearly, the instance is binary and every
  relation contains the all $0$ tuple. Let $\phi$ be a minority operation.
  Then, $R_1$ implies $f(3,4,5)=5$, which together with the
  relations $R_i$, for every $1 < i <k$ implies
  $f(3i,3i+1,3i+2)=3i+2$. In particular, we obtain
  $f(3(k-1),3(k-1)+1,3(k-1)+2)=3(k-1)+2$, which contradicts relation
  $R_k$. Hence, the set
  of all relations $R_1,\dotsc,R_k$ is not closed under any minority
  operation.
  On the other hand,
  for any assignment $\tau$ that sets at least $1$ variable of
  $\cspi(\CCC,k)$, it holds that at least one relation of
  $\cspi(\CCC,k)[\tau]$ contains at most $2$ tuples. This relation is
  then closed under any minority operation and breaks the above
  chain of implications. If $\CCC=\MAL$, we can use the same
  set of relations.
\end{proof}
\begin{THE}\label{the-arity2}
  For every $\CCC \in \{\MINMAX$, $\MAJ$, $\AFF$, $\MAL\}$,
  \STBDET{\CCC} and \WEBDET{\CCC} are $\W[2]$-hard 
  parameterized by the size of the
  backdoor set even for CSP instances with arity $2$.
\end{THE}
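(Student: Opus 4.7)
The plan is to fpt-reduce from the $\W[2]$-hard \textsc{Hitting Set} problem, following the overall scheme of Theorem~\ref{the:domain} but using the binary obstruction instances $\cspi(\CCC,k)$ from Lemma~\ref{lem:undom-rel} in place of the Boolean-barrier gadgets. Given an instance $(\mathcal{Q},U,k)$, I would introduce a ``selector'' variable $x_u$ for each $u\in U$, and for every set $Q=\{u_1,\dotsc,u_{k_Q}\}\in\mathcal{Q}$ (padding so that $k_Q\geq 3$) plug in a fresh copy $\cspi_Q$ of $\cspi(\CCC,k_Q)$ in which the odd-indexed variable $v_{2i-1}^Q$ is identified with $x_{u_i}$, while the even-indexed $v_{2i}^Q$ remain private auxiliaries of $Q$. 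The construction stays binary, and since every relation of the lemma's gadget contains the all-zero tuple, the underlying instance is satisfied by the all-zero assignment.

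The forward direction is driven by the lemma. Given a hitting set $X$ of size at most $k$, let $B=\{x_u\SM u\in X\}$; for any $\tau:B\to D$ and any $Q$, some $x_{u_i}$ with $u_i\in X\cap Q$ lies in $B$, and since $x_{u_i}$ is a variable of $\cspi_Q$, the lemma yields $\cspi_Q[\tau]\in\CCC$. The same $B$ with the all-zero $\tau$ also witnesses a weak backdoor, as the all-zero reduct is simultaneously in $\CCC$ and satisfiable. The reverse direction is a standard cleanup: given any backdoor $B$ of size at most $k$, replace each auxiliary $v_{2i}^Q\in B$ by $x_{u_i}$, yielding a backdoor $B'\subseteq\{x_u\SM u\in U\}$ of the same size; were some $Q$ not hit by $\{u\SM x_u\in B'\}$, then $\cspi_Q$ would be untouched by any $\tau:B'\to D$, contradicting $\cspi[\tau]\in\CCC$ via the lemma.

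The main obstacle is bridging the gap between ``each $\cspi_Q[\tau]\in\CCC$'' and ``$\cspi[\tau]\in\CCC$,'' which requires a single polymorphism of the required type for the whole reduct rather than one per gadget. I plan to handle this by shifting each gadget $\cspi_Q$ onto its own disjoint block of domain values (sharing only the distinguished element $0$, where idempotency of all four polymorphism types forces agreement), so that relations of different gadgets lie over pairwise-disjoint subdomains. A global min/max, majority, minority, or Mal'cev operation can then be stitched together from the per-gadget polymorphisms---on inputs within one subdomain it agrees with that gadget's operation, while on cross-subdomain inputs it is extended in a trivial axiom-preserving way (concatenation of orderings for min/max; returning the repeated coordinate or a fixed default for majority, minority, and Mal'cev). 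Since no relation of the construction contains tuples mixing coordinates from different subdomains, such cross-subdomain values of the polymorphism never need to lie inside any relation, so the residual case analysis that the extended operation has the required type is routine for each of the four classes.
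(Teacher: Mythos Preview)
Your proposal is correct and follows essentially the same route as the paper: both fpt-reduce from \textsc{Hitting Set} by attaching, for each hyperedge $Q$, a fresh copy of the binary obstruction gadget $\cspi(\CCC,|Q|)$ from Lemma~\ref{lem:undom-rel} with the odd-indexed variables identified with the selectors $x_{u_i}$, and both rely on placing the gadgets on disjoint domain blocks so that the per-gadget polymorphisms can be merged into a single global one. Your treatment is in fact slightly more careful than the paper's---you make explicit that the element $0$ must remain shared (the paper states ``no common domain value'' yet appeals to the all-$0$ assignment for satisfiability) and you spell out the stitching of the global polymorphism; the only cosmetic slip is calling the cleaned-up set $B'$ a ``backdoor'' in the reverse direction, where all you actually need (and use) is that $\{u: x_u\in B'\}$ hits every $Q$.
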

\begin{proof}
  The proof of the theorem is inspired by the proof
  of~\cite[Theorem 6]{BessiereCarbonnelHebrardKatsirelosWalsh13}. 
  We show the theorem via a fpt-reduction from \textsc{Hitting Set}.
  Let $H=\tuple{{\mathcal U},{\mathcal F},k}$ be an instance of
  \textsc{Hitting Set}. 

  We construct a CSP instance $\cspi=\tuple{V,D,C}$ of arity $2$ such that 
  $H$ has a hitting set of size at most $k$ if and only if
  $\cspi$ has a strong $\CCC$-backdoor set of size at most $k$. The
  variables of $\cspi$ are $\SB x_u \SM u \in \mathcal{U} \SE \cup \SB
  y_F^i \SM F \in \mathcal{F} \textup{ and } 1 \leq i \leq |F|
  \SE$. Furthermore, for every $F \in \mathcal{F}$ with
  $F=\{u_1,\dotsc,u_{|F|}\}$, let
  $\cspi(F)$ be the CSP instance obtained from $\cspi(\CCC,|F|)$
  (see Lemma~\ref{lem:undom-rel}), where
  for every $i$ with $1 \leq i \leq |F|$, we replace the
  scope of the constraint $c_i$ with $\tuple{x_{u_i},y_F^i}$ and adapt
  $\cspi(\CCC,|F|)$ in such a way that
  for distinct $F,F' \in \mathcal{F}$, $\cspi(\CCC,|F|)$ and
  $\cspi(\CCC,|F'|)$ have no common domain value.
  Then, the constraints of $\cspi$ is the union of the constraints of
  $\cspi(F)$ over every $F \in \mathcal{F}$. This completes the
  construction of $\cspi$. Clearly, $\cspi$ has arity $2$ and is
  satisfiable, e.g., by the all $0$ assignment.

  Suppose that $H$ has a hitting set $B$ of size at most $k$. We claim
  that $B_u=\SB x_u \SM u \in B \SE$ is a strong
  $\CCC$-backdoor set of
  $\cspi$. We start by showing that $B_u$ is a strong
  $\CCC$-backdoor set of
  $\cspi$. Let $\assgn{\tau}{B_u}{D}$ be an assignment of the
  variables in $B$. Then, it follows from Lemma~\ref{lem:undom-rel}
  that the constraints of $\cspi(F)[\tau]$ are in
  $\CCC$. In particular,
  there is an operation $\phi$ (which is either min, max, majority,
  minority or Mal'cev, depending on $\CCC$) such that the constraints of
  $\cspi(F)[\tau]$ are closed under $\phi$. Because for every distinct
  $F,F' \in \mathcal{F}$, $\cspi(F)[\tau]$ and $\cspi(F')[\tau]$ do
  not have a common domain value, it follows that there is an operation
  (which is a combination of the operations for each
  $\cspi(F)[\tau])$) such that $\cspi[\tau]$ is closed under $\phi$
  and $\phi$ is a min/max, majority, minority, or Mal'cev operation
  (depending on $\CCC$). Hence, $\cspi[\tau] \in \CCC$,
  as required.
  Towards showing that $B_u$ is also a weak $\CCC$-backdoor set of
  $C$ consider the assignment $\assgn{\tau}{B_u}{\{0\}}$. It
  follows from the above argumentation that $\cspi[\tau] \in \CCC$. 
  Furthermore, $\cspi[\tau]$ is satisfiable, because
  every constraint of $\cspi[\tau]$ still contains the all $0$ tuple.

  For the reverse direction, suppose that $\cspi$ has a strong
  $\CCC$-backdoor set $B$ of size at most $k$. Because for
  every $F \in \mathcal{F}$ it holds that the set of constraints of
  $\cspi(F)$ is not
  in $\CCC$, we obtain
  that $B$ has to contain at least $1$ variable from 
  $V(\cspi(F))$ for every $F \in \mathcal{F}$. Since the
  only variables that are shared between $V(\cspi(F))$
  and $V(\cspi(F'))$ for distinct $F,F' \in \mathcal{F}$
  are the variables in $\SB x_u \SM u\in \mathcal{U} \SE$,
  it follows that $H$ has a hitting set of size at most $|B|\leq k$,
  as required.
\end{proof}

\section{Comparison to Partition-Backdoors}

In this section we draw a comparison between our notion of
backdoor sets, and the approach recently introduced
by~\citex{BessiereCarbonnelHebrardKatsirelosWalsh13},
which
we will call the \emph{partition backdoor set} approach. The main
difference is that the partition backdoor set approach
considers a backdoor with respect to a subset of constraints that is
tractable (because it is closed under some tractable polymorphism).
In particular, \citex{BessiereCarbonnelHebrardKatsirelosWalsh13} define partition backdoor sets as follows.
\begin{DEF}
  Let $\prop{P}$ be a tractable polymorphism property.
  A $\prop{P}$-partition backdoor of a CSP instance $\cspi=\tuple{V,D,C}$ is 
  a set $B$ of variables, such that there is a partition of $C$ into
  $C_1$ and $C_2$ for which the following holds: 
  \begin{itemize}
  \item $\tuple{V,D,C_2} \in \CCC_{\prop{P}}$ , and
  \item If $\prop{P}$ is characterized using only idempotent
    polymorphisms, then 
    $B$ is the set of all variables in the scope of the constraints of $C_2$,
    while if $\prop{P}$ is characterized using only conservative
    polymorphisms, then the variables of 
    $B$ are a vertex cover of the primal graph of $C_1$. 
  \end{itemize}
\end{DEF}
Note that the primal graph of a set $C$ of constraints is the graph, whose
vertices are all the variables appearing in the scope of the
constraints in $C$ and that has an edge between two variables if and
only if they appear together in the scope of at least one constraint
in $C$.

\citex{BessiereCarbonnelHebrardKatsirelosWalsh13} show that given a CSP instance and a partition $(C_1,C_2)$ as
defined above, then the detection and evaluation of
$\prop{P}$-partition backdoor sets are fixed-parameter tractable
\begin{itemize}
\item in the size of the domain and the number of variables in 
  $C_1$, if $\prop{P}$ is characterized using only idempotent polymorphisms;
\item in the size of the domain and the vertex cover of the primal
  graph of $C_1$, if $\prop{P}$ is characterized using only conservative
  polymorphisms.
\end{itemize}
Observe that the above tractability results only hold
when the partition $(C_1,C_2)$ is given as
a part of the input, if it is not then the detection of partition
backdoors is fixed-parameter intractable (for most
combinations of parameters).

Apart from the complexity of detecting and evaluating backdoors it is
also important that the backdoor sets are as small as possible over a
wide variety of CSP instances. In the following we will observe that our
backdoor sets are always at most as large as partition backdoors and
we exhibit CSP instances where our backdoors have size one
but the size of a smallest partition backdoor set can be arbitrary large.
\begin{PRO}
  Let $\prop{P}$ be a tractable polymorphism property. Then, for every CSP instance
  $\cspi$ the size of a smallest
  strong $\prop{P}$-backdoor set of $\cspi$ is at most the size of a smallest
  $\prop{P}$-partition backdoor set of $\cspi$.
\end{PRO}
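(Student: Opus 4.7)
The plan is to show that any $\prop{P}$-partition backdoor $B$ of $\cspi=\tuple{V,D,C}$, together with its associated partition $(C_1,C_2)$, is itself a strong $\CCC_{\prop{P}}$-backdoor of $\cspi$. Since both notions minimize $|B|$, this directly yields the claimed inequality. Fix an arbitrary assignment $\assgn{\tau}{B}{D}$; it suffices to exhibit a single polymorphism $\phi\in P$ such that every constraint in $\cspi[\tau]$ is closed under $\phi$. I would take $\phi$ to be any polymorphism with property $\prop{P}$ witnessing $\tuple{V,D,C_2}\in\CCC_{\prop{P}}$.

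The first step is to verify that restriction by $\tau$ preserves closure under $\phi$ for every $c\in C_2$. Here I would use the key fact that the polymorphisms considered in the partition-backdoor framework are idempotent (note that conservative operations are in particular idempotent, since $\phi(d,\ldots,d)\in\{d\}$). Given tuples $s_1,\ldots,s_n\in R(c[\tau])$, I lift each $s_i$ to a tuple $t_i\in R(c)$ that agrees with $\tau$ on $B\cap V(c)$; applying $\phi$ coordinatewise produces a tuple $t\in R(c)$, and on every coordinate $v\in B\cap V(c)$ its value is $\phi(\tau(v),\ldots,\tau(v))=\tau(v)$ by idempotency. Hence $t$ restricts to $\phi(s_1,\ldots,s_n)$ in $R(c[\tau])$, establishing closure.

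The second step is to handle the constraints coming from $C_1$, which splits naturally into the two cases of the partition-backdoor definition. In the idempotent case, $B$ already contains every variable appearing in any scope of $C_1$ (taking the natural reading of the definition), so each $c[\tau]$ with $c\in C_1$ has empty scope and equals either the vacuously-true or vacuously-false nullary constraint, both closed under every operation. In the conservative case, $B$ is a vertex cover of the primal graph of $C_1$, so each $c[\tau]$ with $c\in C_1$ has arity at most one; but every unary (and nullary) relation is trivially closed under every conservative operation, because $\phi(d_1,\ldots,d_n)\in\{d_1,\ldots,d_n\}$.

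Combining the two steps, $\cspi[\tau]$ is closed under the single polymorphism $\phi\in P$ for every assignment $\tau$ of $B$, so $\cspi[\tau]\in\CCC_{\prop{P}}$, which shows that $B$ is a strong $\CCC_{\prop{P}}$-backdoor of $\cspi$. The conceptual crux is the first step: verifying that a relation's closure under an idempotent polymorphism descends to all of its restrictions; everything else reduces to a straightforward case distinction between the idempotent and conservative branches of the partition-backdoor definition.
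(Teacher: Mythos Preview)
Your proof is correct and follows exactly the paper's approach: the paper's proof is the single-sentence observation that every $\prop{P}$-partition backdoor set is already a strong $\prop{P}$-backdoor set, and your argument simply supplies the detailed verification of this claim. Your careful handling of the idempotent/conservative case split (including the natural reading of the definition, with $B$ covering the scopes of $C_1$ rather than $C_2$ in the idempotent branch) is precisely what the paper leaves implicit.
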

\begin{proof}
  This follows from the observation that any $\prop{P}$-partition
  backdoor set is also a strong $\prop{P}$-backdoor set.
\end{proof}
\begin{PRO}
  For any tractable property $\prop{P}$ and any natural
  number $n$
  , there is a CSP instance that has a
  strong/weak $\CCC_{\prop{P}}$-backdoor set of size one,
  but the size of 
  a smallest $\prop{P}$-partition backdoor set is $\Omega(n)$.
\end{PRO}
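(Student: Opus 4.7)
The plan is to construct, for each $n$, a CSP instance $\cspi^*$ in which a single carefully chosen variable $y$ forms a size-one strong and weak $\CCC_{\prop{P}}$-backdoor, but every tractable subinstance $\tuple{V,D,C_2}\in\CCC_{\prop{P}}$ leaves behind a residual set $C_1$ of constraints whose scopes (or primal-graph vertex covers) necessarily cover $\Omega(n)$ variables. This will force the partition-backdoor size to be $\Omega(n)$, separating it from the size-one strong backdoor.

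First, I would start from the ``minimally bad'' instance $\cspi(\CCC,n)$ of Lemma~\ref{lem:undom-rel} (for the classes $\MINMAX, \MAJ, \AFF, \MAL$; for more general tractable $\prop{P}$ an analogous instance can be obtained via the Boolean-barrier technique of Theorem~\ref{the:domain}). I would then adjoin a fresh variable $y$ with domain $\{1,\dotsc,n\}$, and for every constraint $c_i$ of $\cspi(\CCC,n)$ with scope $(v_{2i-1},v_{2i})$ introduce a ternary constraint $c'_i$ with scope $(y,v_{2i-1},v_{2i})$ whose relation is universal on the non-$y$ coordinates when $y=i$ and equal to $R(c_i)$ when $y\neq i$. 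Let $\cspi^*$ denote the resulting instance.

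To verify that $\{y\}$ is both a strong and a weak backdoor, I would argue that fixing $y$ to any value $j$ turns $c'_j$ into a constraint that accepts every pair $(a,b)$, while each $c'_i$ with $i\neq j$ collapses to the original $c_i$; the resulting instance is therefore $\cspi(\CCC,n)$ with $c_j$ removed, which by the chain-of-implications argument from the proof of Lemma~\ref{lem:undom-rel} belongs to $\CCC_{\prop{P}}$. Satisfiability of $\cspi^*[y=j]$ is witnessed by the all-zero assignment, since each $R(c_i)$ contains the all-zero tuple.

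The hard part will be the partition-backdoor lower bound. The crucial claim is that every subcollection $C_2\subseteq\{c'_1,\dotsc,c'_n\}$ with $\tuple{V,D,C_2}\in\CCC_{\prop{P}}$ has size bounded by a constant: a single polymorphism $\phi\in\prop{P}$ preserving $c'_i$ must behave compatibly when mixing a universal-slice tuple (with $y=i$) with a restricted-slice tuple (with $y\neq i$), which pins $\phi$ to a particular distinguished behaviour at the $y$-value $i$ (the absorbing element for min/max, or the coordinate selected by a majority or Mal'cev operation applied to mixed-type tuples), and only $O(1)$ such distinguished roles are possible for a single $\phi$. Assuming this constant bound, $|C_1|\ge n-O(1)$; since each $c'_i\in C_1$ contributes the triangle $\{y,v_{2i-1},v_{2i}\}$ to the primal graph, and these triangles share only the vertex $y$, the scopes of $C_1$ cover $\Omega(n)$ variables in the idempotent case, while every vertex cover of the primal graph of $C_1$ must, besides possibly $y$, include at least one additional vertex per triangle, giving $\Omega(n)$ in the conservative case. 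The main obstacle is establishing this constant bound on $|C_2|$ class-by-class, using the specific algebra of min/max, majority, minority, and Mal'cev polymorphisms to rule out a single $\phi$ playing the distinguished-value role for more than a bounded number of the constraints $c'_i$.
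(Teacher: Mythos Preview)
Your construction takes a substantially harder route than the paper's, and the gap you identify as the ``main obstacle'' is genuine.

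The paper's argument is much more direct: it posits an instance in which \emph{no individual constraint} is closed under any polymorphism with property $\prop{P}$, yet all constraints share a single variable $x$ whose instantiation lands the whole instance in $\CCC_{\prop{P}}$. Once every constraint is individually outside $\CCC_{\prop{P}}$, any partition $(C_1,C_2)$ with $\tuple{V,D,C_2}\in\CCC_{\prop{P}}$ is forced to have $C_2=\emptyset$, since $\Fun(C_2)\subseteq\Fun(c)$ for every $c\in C_2$ and $\Fun(c)$ contains no $\prop{P}$-operation. The lower bound is then immediate: $C_1=C$, and the partition backdoor must cover essentially all of its variables.

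By contrast, you build on the instance $\cspi(\CCC,n)$ of Lemma~\ref{lem:undom-rel}, whose defining feature is precisely that each constraint $c_i$ \emph{is} closed under some $\prop{P}$-polymorphism (this is what makes the chain break when any single constraint is weakened). After adjoining $y$, the constraints $c'_i$ generally remain individually in $\CCC_{\prop{P}}$: for $\MINMAX$, for instance, $c'_i$ is preserved by any max operation whose order places $i$ above all other $y$-values and whose restriction to the $v$-domain preserves $R(c_i)$. So $C_2$ need not be empty, and you are forced to prove the far more delicate statement that no single $\phi$ with $\prop{P}(\phi)$ preserves more than $O(1)$ of the $c'_i$ simultaneously. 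Your distinguished-role sketch is convincing for $\MINMAX$ (only one $i$ can be extremal), but for majority and Mal'cev operations the analysis of triples mixing universal-slice and restricted-slice tuples does not obviously collapse to a constant bound, and you have not carried it out. Even if it can be completed, it is a significant detour compared with arranging every constraint to be individually bad, which renders the partition-side argument trivial.

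A smaller point on the forward direction: you assume that deleting one constraint from $\cspi(\CCC,n)$ yields an instance in $\CCC_{\prop{P}}$, whereas Lemma~\ref{lem:undom-rel} is stated only for instantiating a variable. The chain-breaking reasoning in the lemma's proof does support constraint deletion, but you should appeal to the proof rather than the statement.
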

\begin{proof}
  Let $\prop{P}$ be a tractable property and let $\cspi=\tuple{V,D,C}$ be a
  CSP instance such that:
  \begin{itemize}
  \item no constraint of $\cspi$ is closed under a polymorphism with
    property $\prop{P}$;
  \item all constraints of $\cspi$ share a common variable, say $x$,
    such that for any assignment $\tau$ of $x$, the CSP instance
    $\cspi[\tau]$ is closed under some polymorphisms with property
    $\prop{P}$.
  \end{itemize}
  It is straightforward to check that for any number of variables $n$,
  such a CSP instance using $n$ variables can be
  easily constructed for every tractable property $\prop{P}$ that contains
  at least one operation. Furthermore, the variable $x$ is a strong
  $\CCC_{\prop{P}}$-backdoor set of $\cspi$. However, because no
  constraint of $\cspi$ is closed under any operation from
  $\prop{P}$ it follows that every partition backdoor has to contain
  at least all but $1$ variable from every constraint of $\cspi$.
\end{proof}

\section{Summary}

We have introduced heterogeneous base classes and have shown that
strong backdoor sets into such classes can be considerably smaller
than strong backdoor sets into homogeneous base classes; nevertheless,
the detection of strong backdoor sets into heterogeneous base classes is
still fixed-parameter tractable in many natural cases. Hence, our
results push the tractability boundary considerably further. Our
theoretical evaluation entails hardness results that establish the
limits for tractability. 

Our main result in the context of SAT (Theorem~\ref{the:sat-dico})
provides a complete complexity classification for strong backdoor set detection to heterogeneous base classes
composed of the well-known Schaefer classes.
We observe that the hardness results rely on clauses
of unbounded length and show that if the maximum length of the clauses is taken 
as an additional parameter then 
strong as well as weak backdoor set detection into any combination of the Schaefer classes
is fixed-parameter tractable (Theorem~\ref{the:sat-clause-sb}).

Our main result for CSP backdoors (Theorem~\ref{the:fpt-csp}) establishes
fixed-parameter tractability for a wide range of heterogeneous base classes composed of
possibly infinitely many tractable polymorphisms.
In particular, we show that the detection of strong as well as weak backdoor sets is
fixed-parameter tractable for the combined parameter backdoor size,
domain size, and the maximum arity of constraints.  In fact, this
result entails \emph{heterogeneous} base classes, as different
instantiations of the backdoor variables can lead to reduced instances
that are closed under different polymorphisms (even polymorphisms of
different type). We complement our main result with hardness results
that show that we lose fixed-parameter tractability when we
omit either domain size or the maximum arity of constraints from
the parameter (Theorems~\ref{the:domain} and \ref{the-arity2}).

It would be interesting to extend our line of research
to the study of backdoor sets into heterogeneous base classes composed
of homogeneous classes defined by e.g., structural restrictions such as bounded treewidth~\cite{GaspersSzeider13f,SamerSzeider10a} or bounded hyper-treewidth~\cite{GottlobLeoneScarcello02}, and base classes defined by ``hybrid'' concepts like forbidden
patterns~\cite{CohenCooperCreedMarxSalamon12}, in contrast to the Schaefer classes for SAT, and polymorphism-based classes for CSP. Another promising direction of
future research is to parameterize backdoor sets not by their size
but by structural properties of the backdoor set and how it interacts
with the rest of the instance, similar to the parameters that have
been considered for
modulators into graph classes~\cite{EibenGanianSzeider15b,EibenGanianSzeider15}; some first
results in this direction have recently been obtained~\cite{GanianRamanujanSzeider16b}.

\section*{Acknowledgments}

Gaspers is the recipient of an Australian Research Council Discovery Early
Career Researcher Award (project number DE120101761). NICTA is funded by the
Australian Government as represented by the Department of Broadband,
Communications and the Digital Economy and the Australian Research Council
through the ICT Centre of Excellence program. Misra is supported by the INSPIRE
Faculty Scheme, DST India (project DSTO-1209). Ordyniak acknowledges support
from the Employment of Newly Graduated Doctors of Science for Scientific
Excellence (CZ.1.07/2.3.00/30.0009). Ordyniak and Szeider were supported by the
European Research Council, grant reference 239962 (COMPLEX REASON) and the
Austrian Science Funds (FWF), project I836-N23 (Algorithms and Complexity of
Constraint Languages) and project P26696 (Exploiting New Types of Structure for Fixed Parameter Tractability). \v{Z}ivn\'y is supported by a Royal Society University Research Fellowship and EPSRC grant EP/L021226/1.

% \bibliographystyle{plainnat}
% \bibliography{literature}

%\listoftodos[To Do List]

\end{document}